\def\1{\bm{1}}
\def\eps{{\epsilon}}
\def\rvg{{\mathbf{g}}}
\def\rvk{{\mathbf{k}}}
\def\rvz{{\mathbf{z}}}
\DeclareMathAlphabet{\mathsfit}{\encodingdefault}{\sfdefault}{m}{sl}
\SetMathAlphabet{\mathsfit}{bold}{\encodingdefault}{\sfdefault}{bx}{n}
\DeclareMathOperator*{\argmin}{arg\,min}
\theoremstyle{plain}
\newtheorem{theorem}{Theorem}[section]
\newtheorem{propo}[theorem]{Proposition}
\newtheorem{lemma}[theorem]{Lemma}
\newtheorem{corollary}[theorem]{Corollary}
\theoremstyle{definition}
\newtheorem{defi}[theorem]{Definition}
\theoremstyle{remark}
\newtheorem{example}[theorem]{Example}
\icmltitlerunning{Single-Model Attribution of Generative Models Through Final-Layer Inversion}
\newcommand{\ml}[1]{\textcolor{black}{ #1}}
\newcommand{\method}{\mbox{FLIPAD}}
\newcommand{\finger}{SM-F}
\newcommand{\inv}{$\text{SM-Inv}_{2}$}
\newcommand{\incinv}{$\text{SM-Inv}_{\operatorname{inc}}$}
\newcommand{\raw}{RawPAD}
\newcommand{\dct}{DCTPAD}
\newcommand{\Routin}{\mathbb{R}^{D_{\operatorname{out}} \times D_{\operatorname{in}}}}
\newcommand{\Din}{D_{\operatorname{in}}}
\newcommand{\Dout}{D_{\operatorname{out}}}
\begin{document}

\twocolumn[
\icmltitle{Single-Model Attribution of Generative Models Through Final-Layer Inversion}




\begin{icmlauthorlist}
\icmlauthor{Mike Laszkiewicz}{rub}
\icmlauthor{Jonas Ricker}{rub}
\icmlauthor{Johannes Lederer}{hh}
\icmlauthor{Asja Fischer}{rub}
\end{icmlauthorlist}

\icmlaffiliation{rub}{Faculty of Computer Science, Ruhr University Bochum, Germany}
\icmlaffiliation{hh}{Department of Mathematics, Computer Science, and Natural Sciences, University of
Hamburg, Germany}

\icmlcorrespondingauthor{Mike Laszkiewicz}{mike.laszkiewicz@rub.de}


\vskip 0.3in
]



\printAffiliationsAndNotice{}  

\begin{abstract}
Recent breakthroughs in generative modeling have sparked interest in practical single-model attribution. Such methods predict whether a sample was generated by a \textit{specific} generator or not, for instance, to prove intellectual property theft. However, previous works are either limited to the closed-world setting or require undesirable changes to the generative model. We address these shortcomings by, first, viewing single-model attribution through the lens of anomaly detection. Arising from this change of perspective, we propose \method{}, a new approach for single-model attribution in the open-world setting based on final-layer inversion and anomaly detection. We show that the utilized final-layer inversion can be reduced to a convex lasso optimization problem, making our approach theoretically sound and computationally efficient. The theoretical findings are accompanied by an experimental study demonstrating the effectiveness of our approach and its flexibility to various domains.
\end{abstract}

\begin{figure}
    \centering
    \includegraphics[width=\linewidth]{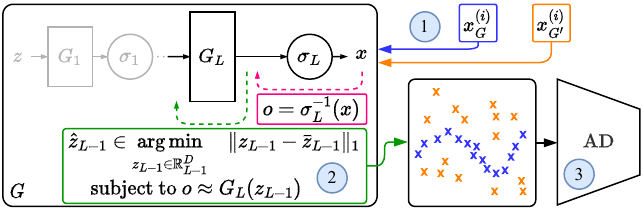}
    \caption{
    Single-model attribution with \method{}. Given a generative model $G$, \method{} performs the following steps: 1) The training data includes generated samples $x_G^{(i)}$ from $G$ and samples from a different source  $x^{(i)}_{G'}$. For each $x$ we compute the optimization target $o$ by inverting the final activation $\sigma_L$. 2) For each output $o$, we perform final-layer inversion by finding an activation $\hat{z}_{L-1}$ that is close to the expected activation $\bar{z}_{L-1}$ and an approximate solution to $o \approx G_L(\hat{z}_{L-1})$. 3) Since final-layer inversion reveals differences between different data sources, the activations can be used as features to train an anomaly detector.
    }
    \label{fig:pipeline}
\end{figure}


\section{Introduction} \label{sec:intro}
Deep generative models have taken a giant leap forward in recent years;
humans are not able to distinguish real and synthetic images anymore \citep{nightingale2022, mink2022, lago2022},
and text-to-image models like DALL-E~2 \citep{ramesh2022}, Imagen \citep{saharia2022}, and Stable Diffusion \citep{rombach2022} have sparked a controversial debate about AI-generated art \citep{reaCouldRobotEver2023}.
But this superb performance comes, quite literally, at a price:
besides technical expertise and dedication, 
large training datasets and enormous computational resources are needed.
For instance, the training of the large-language model GPT-3 \citep{brown2020} is estimated to cost almost five million US dollars \citep{OpenAIGPT3Language}.
Trained models have become valuable company assets---and an attractive target for intellectual-property theft.
An interesting task is, therefore, to \textit{attribute} a given sample to the generative model that created it.

Existing solutions to this problem can be divided into two categories: fingerprinting methods \citep[e.g.,][]{marraGANsLeaveArtificial2019, yuAttributingFakeImages2019} exploit the fact that most synthetic samples contain generator-specific traces that a classifier can use to assign a sample to its corresponding source.
Inversion methods \citep{albright2019source, zhangAttributionDeepfakes2021, hirofumi2022}, on the other hand, are based on the idea that a synthetic sample can be reconstructed most accurately by the generator that created it.
However, both approaches are designed for the \textit{closed-world setting}, which means that 
they are designed to differentiate between models from a given set that is chosen before training and stays fixed during inference.
In practice, this is rarely the case, since 
new models keep emerging.
For detecting intellectual property theft, the setting of \textit{single-model attribution} is more suitable.
Here, the task is to determine whether a sample was created by \textit{one} particular generative model or not, in the \textit{open-world setting}.
Previous work has found that this problem can be solved by complementing the model with a watermark that can be extracted from generated samples \citep{yu2021artificial, kim2021decentralized, yu2020responsible, nie2023attributing, wen2023tree}.
Although watermarking has shown to be very effective, a major disadvantage is that it has to be incorporated into the training process, which a) might not always be desirable or even possible and b) can deteriorate generation quality.

\begin{figure}
    \centering
    \includegraphics[width=\columnwidth]{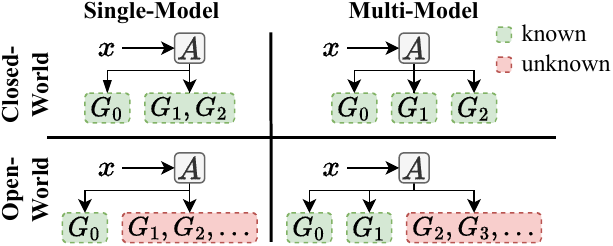}
    \caption{The two dimensions of the model attribution problem. \ml{Let $A$ be the attribution method.} While single-model attribution solves a binary decision problem ($G_0$ or something else?), multi-model attribution can distinguish between more than two classes. In an open-world setting, it is also possible that a sample stems from an unknown generator.}
    \label{fig:problem_setup}
\end{figure}

In this work, we tackle the problem of single-model attribution in the open-world setting by viewing it as an anomaly detection task, which has not been done in previous works. 
This novel viewpoint motivates two simple, yet effective, baselines for the single-model attribution task. 
Furthermore, we find that incorporating knowledge about the generative model can improve the attribution accuracy and robustness. 
Specifically, we propose \textbf{F}inal-\textbf{L}ayer \textbf{I}nversion \textbf{P}lus \textbf{A}nomaly \textbf{D}etection (\method{}).
First, it extracts meaningful features by leveraging the available model parameters to perform \textit{final-layer inversion}.
Second, it uses established anomaly detection techniques on the extracted features to predict whether a sample was generated by \textit{our} generative model.
While related to existing attribution methods based on inversion, our method is i) significantly more efficient due to the convexity of the optimization problem and ii) theoretically sound given the connection to the denoising basis pursuit.
We highlight that this approach can be performed without altering the training process of the generative model.
Moreover, it is not limited to the image domain since features are directly related to the generative process and do not exploit visual characteristics like frequency artifacts.
We illustrate \method{} in Figure~\ref{fig:pipeline}.
Our contributions are summarized as follows:
\begin{itemize}
    \item We address the problem of single-model attribution in the open-world setting, which previously has not been solved adequately. 
    Our work is the first to establish the natural connection between single-model attribution and anomaly detection. This change of perspective motivates the application of anomaly detection on raw inputs (\raw) and on DCT-features (\dct). 

    \item Building upon on these methods, we introduce \method{}, which combines final-layer inversion and anomaly detection.
    We prove that the proposed inversion scheme can be reframed as a convex lasso optimization problem, 
    which makes it theoretically sound and computationally efficient. 
    \item In an empirical analysis, we show the effectiveness and versatility of the proposed methods for a range of generative models, including GANs and diffusion models. Notably, our approach is not limited to the image domain.
\end{itemize}


\section{Problem Setup} \label{sec:problem_setup}
We consider the problem of model attribution to have two fundamental dimensions, which are illustrated in \cref{fig:problem_setup}.
First, we differentiate between single-model and multi-model attribution.
While in multi-model attribution the task is to find out by which specific generator (of a given set) a sample was created, single-model attribution only cares about whether a sample stems from a single model of interest or not.
Note that, conceptually, real samples can be considered to be generated by a model as well, and thus, the task of distinguishing fake from real samples also fits into this framework.  
Second, attribution can be performed in the closed-world or open-world setting.
The requirement for the closed-world setting is that all generators which \textit{could} have generated the samples are known beforehand.
In contrast, in the open-world setting an attribution method is able to state that the sample was created by an unknown generator.

The method proposed in this work solves the problem of \textit{single-model attribution in the open-world setting}.
Naturally, this setting applies to model creators interested in uncovering illegitimate usage of their model (or samples generated by it).
In this scenario, \method{} only requires resources that are already available to the model creator: the model $G$ itself (including its parameters), samples $x_G$ generated by $G$, and real samples $x_{\operatorname{real}}$ used to train $G$.
We emphasize that no samples generated by other models are needed.
Therefore, we do not rely on restrictive data acquisition procedures, which might involve i) using additional computational resources to train models or sample new data, ii) expensive API calls, or iii) access to models that are kept private.


\section{Related Work} \label{sec:related_work}
We divide the existing work on deepfake attribution into three areas: fingerprinting, inversion, and watermarking.
Regarding the problem setup, methods based on fingerprinting and inversion perform multi-model attribution in the open-world setting, while watermarking provides single-model attribution in the open-world setting. \ml{Note that all existing methods are designed for image data.}
A categorization of the related work is given in \cref{app:related_work}.

\paragraph{Fingerprinting}
\label{sec:related_work:fingerprinting}
The existence of fingerprints in GAN-generated images was demonstrated first by \citet{marraGANsLeaveArtificial2019}.
They show that by averaging the noise residuals of generated images, different models can be distinguished based on the correlation coefficient between the image of interest and a set of model-specific fingerprints.
Other works show that the accuracy can be improved by training an attribution network \citep{yuAttributingFakeImages2019} or extracting fingerprints in the frequency domain \citep{frank2020leveraging}.
Since fingerprinting only works in closed-world settings, methods to efficiently extend the set of known models (without expensive re-training) have been proposed \citep{marra2019incremental, xuanScalableFinegrainedGenerated2019}.

\paragraph{Inversion}
\label{sec:related_work:inversion}
Inversion-based attribution is based on the finding that an image can be reconstructed by the generator that it originates from. 
While existing works \citep{albright2019source, karrasAnalyzingImprovingImage2020, zhangAttributionDeepfakes2021, hirofumi2022} prove that this approach is effective, it is computationally expensive since the optimal reconstruction has to be found iteratively using backpropagation. 
Additionally, since there is no guaranteed convergence due to the non-convexity of the optimization problem, multiple reconstruction attempts are necessary for each image.

\paragraph{Watermarking}
\label{sec:related_work:watermarking}
Watermarking methods solve the attribution problem by proactively altering the generative model such that all generated images include a recognizable identifier.
\citet{yu2021artificial} were the first to watermark GANs by encoding a visually imperceptible fingerprint into the training dataset.
Generated images can be attributed by checking the decoded fingerprint.
In a follow-up work \citep{yu2020responsible}, the authors propose a mechanism for embedding fingerprints more efficiently by
directly embedding them into the model's convolutional filters.
A related work by \citet{kim2021decentralized} uses fine-tuning to embed keys into user-end models, which can then be distinguished using a set of linear classifiers.
Recent works focus on embedding keys into the latent representation of an image,
for instance, by altering its representation in specific dimensions~\citep{nie2023attributing} or by perturbing its Fourier representation~\citep{wen2023tree}. 
A disadvantage of watermarking methods is that the model itself has to be altered, which might not always be feasible.

\paragraph{Other Related Work}
\citet{girish2021towards} propose an approach for the discovery and attribution of GAN-generated images in an open-world setting.
Given a set of images, their iterative pipeline forms clusters of images corresponding to different GANs.
A special kind of watermarking inspired by backdooring \citep{AdiBackdooring} is proposed by \citet{ong2021protecting}.
They train GANs which, given a specific trigger input as latent $z$, generate images with a visual marking that proves model ownership.
As a consequence, this method requires query access to the suspected model, making it applicable in those scenarios only.


\section{Methodology}\label{sec:main}
We begin by developing a viewpoint on single-model attribution through the lens of anomaly detection in~\cref{sec:framework}. 
This approach is complemented by a novel feature extraction method based on final-layer inversion.
We provide examples illustrating why these features are appropriate for model attribution in~\cref{sec:motivationalExample}, followed by a practical and efficient algorithm for final-layer inversion  in~\cref{sec:method}. 

\subsection{Leveraging Anomaly Detection for Single-Model Attribution}\label{sec:framework}
Recalling \cref{sec:problem_setup}, we aim at deciding whether a sample was generated by our model or not.
Treating samples from our model as normal samples and samples from unknown models as anomalies, we can rephrase the single-model attribution problem to an anomaly detection task.
Our proposed approach, therefore, consists of two modular components.

The first one is the anomaly detection itself.
We decide to use DeepSAD~\cite{ruff2019deep}, which works particularly well in high-dimensional computer-vision tasks but is also capable of generalizing to other domains. 
The underlying idea of DeepSAD builds upon SVDD~\citep{Tax2004SupportVD} but leverages the success of modern deep neural networks. In essence, DeepSAD learns a deep feature extractor $\phi$ that maps normal samples $x_{\operatorname{norm}}$ close and anomalies $x_{\operatorname{anom}}$ far from a prefixed point $c$ (i.e., $\Vert \phi(x_{\operatorname{norm}}) - c \Vert \ll \Vert \phi(x_{\operatorname{anom}}) - c \Vert$ ). The resulting distance to $c$ acts as the anomaly score. 
More details are provided in \cref{sec:deep_sad}.

The second---and crucial---component is extracting the features that serve as the input of the anomaly detector.
A trivial approach, which we refer to as \raw{} (\textbf{Raw} \textbf{P}lus \textbf{A}nomaly \textbf{D}etection), is to simply use the raw input, that is, to skip the feature extraction part, as features for the anomaly detector. 
Another option is to use pre-existing domain knowledge to handcraft suitable features.
In the case of GANs, we can exploit generation artifacts in the frequency domain \citep{frank2020leveraging} by using the discrete cosine transform of an image as features (\dct{}).

\subsection{Layer Inversion Reveals Model-Characteristic Features}
However, it is not always known which handcrafted features perform well, and they might not generalize to unseen models.
We, therefore, propose an alternative feature extraction method
that is universally applicable, i.e., not restricted to the image domain for instance.
The key idea is to incorporate knowledge about the generative model itself.
Inspired by inversion-based attribution methods, we estimate the hidden representation of a given image before the final layer.
We first clarify how these hidden representations can be used for single-model attribution by providing illustrative examples with simple linear generative models $G: \mathbb{R}^{D_0} \rightarrow \mathbb{R}^{D_1}$.
\label{sec:motivationalExample}

\begin{example}[Impossible Output] \label{exa:impossible}
    Given an output $x\in\mathbb{R}^{3\times 3}$ and a transposed convolution\footnote{without padding and with stride one} $G_{k}$ 
    parameterized by a $(2\times2)$-kernel $k$ specified below, we can solve the linear system $G_{k}(z)=x$ to find a solution $z$:
    \begin{equation*}
        x := \begin{pmatrix}
            0 & 0 & 1 \\
            0 & 4 & 6 \\ 
            4 & 12 & 9 
        \end{pmatrix}
        \text{ and } 
        k:= \begin{pmatrix}
            0 & 1 \\ 
            2 & 3 
        \end{pmatrix}
         \Rightarrow
        z = \begin{pmatrix}
            0 & 1 \\ 
            2 & 3 
        \end{pmatrix} \enspace . 
    \end{equation*}
    However, note that $G_{k}$ is not surjective, and more specifically, there exists no input $z$ that could have generated $x^\prime = x + ( (0,0,0)^\top, (0, \varepsilon, 0)^\top, (0, 0, 0)^\top)$ for $\varepsilon\neq 0$. 
    Hence, even though $x^\prime$ can be arbitrarily close to $x$---which means that continuous feature mappings cannot reveal differences between $x$ and $x^\prime$ in the limiting case---we can argue that $x^\prime$ has not been generated by $G_k$.
    This example demonstrates the first requirement of a sample $x$ to be generated by  $G_{k}$: it needs to be an element of the image space of $G_k$. 
\end{example}
\begin{example}[Unlikely Hidden Representation]\label{exa:unlikely}
        Let $G, G^\prime: \; \mathbb{R}^2 \rightarrow \mathbb{R}^2 $ be two linear generators given by the diagonal matrices $G:=\operatorname{diag}(2, 0.5)$ and $G^\prime:= \operatorname{diag}(1, 1)$.
        Furthermore, let $\rvz\sim\mathcal{N}(0, I)$. Since $G$ and $G^\prime$ are surjective, they have both the capacity to generate any output. But, given some output $x:= (1, 1)^\top$, one can readily compute 
         that 
        \begin{align*}
            \log \mathbb{P}\Bigl(\max \bigl( \vert G(\rvz) - x \vert \bigr) \leq 0.1 \Bigr) &\approx -7.16 \enspace , \\ 
            \log \mathbb{P}\Bigl(\max \bigl( \vert G^{\prime}(\rvz) - x \vert \bigr) \leq 0.1 \Bigr) &\approx -6.06 \enspace , 
        \end{align*}
        where the probability is taken over $\rvz$, $\max$ denotes the maximum over all coordinates, and $\vert \cdot \vert$ is the component-wise absolute value. We conclude that---even though $x$ is in the image space of both generators---$G^\prime$ is much more likely (note the logarithmic scale in the display) to generate an output close to $x$. 
        Once again, by inferring the hidden representation of $x$, we obtain implicit information, which can be used for model attribution.        
    \end{example}

\begin{example}[Structured Hidden Representation] \label{exa:structure}
    Lastly, the hidden representation of $x$ can reveal differences based on its structure, such as its sparsity pattern. 
    For illustration, let us again consider two linear generators $G$ and $G^\prime$ given by 
    $
    G=I \in \mathbb{R}^{d \times d},\ G^\prime =
    \begin{pmatrix}
                \mathbf{1} & e_2 & \dots & e_{d} 
    \end{pmatrix}\in \mathbb{R}^{d \times d} ,
    $
    where $\mathbf{1}\in \mathbb{R}^d$ is the vector of ones and $e_j\in\mathbb{R}^d$ are Euclidean unit vectors. Again, both generators are surjective but for $x=\mathbf{1}$ it is 
    $G(\mathbf{1})=x$ and $G^{\prime}(e_1) = x$. In particular, there is no input to $G$ with less than $d$ non-zero entries capable of generating $x$, whereas $G^\prime$ can generate $x$ with the simplest input, namely $e_1$. Hence, the 
    required input $z$ illuminates implicit structures of $x$ by incorporating knowledge about the generator. 
\end{example}

\subsection{Introducing \method{}}\label{sec:method}
 The examples in the previous section demonstrate that incorporating knowledge about the generative model~$G$ can reveal hidden characteristics of a sample. 
Inspired by the field of compressed sensing, we now derive an optimization procedure, which is capable of performing final-layer inversion in deep generative models. 
Combining this feature extraction method with the framework proposed in~\cref{sec:framework}, we end up with a powerful and versatile single-model attribution method, which we coin FLIPAD (\textbf{F}inal-\textbf{L}ayer \textbf{I}nversion \textbf{P}lus \textbf{A}nomaly \textbf{D}etection).

In contrast to the examples presented in Section~\ref{sec:motivationalExample}, we proceed with a complex deep generative model, which generates samples according to
\begin{equation} \label{eq:generative_model}
    x = G(z) = \sigma_L (G_L ( \sigma_{L-1} G_{L-1}( \cdots G_1(z) \cdots )))\enspace ,  
\end{equation}
for activation functions $\sigma_l$, linearities $G_l:\mathbb{R}^{D_{l-1}} \rightarrow \mathbb{R}^{D_l}$ for $l\in \{1,\dots, L\}$, and $z$ are samples from the base random variable $\rvz\sim \mathcal{N}(0, I)$. 
In the same spirit of Example~\ref{exa:impossible} we can prove that $x$ cannot be generated by $G$ if we can show that there exists no input $z\in \mathbb{R}^{D_0}$ such that $x = G(z)$. However, since $G$ is typically a highly non-convex function, it remains a non-trivial task to check for the above criterion. Hence, we propose the following simplification: Is there any $(L-1)$-layer activation $z_{L-1}\in\mathbb{R}^{D_{L-1}}$ such that $x=\sigma_L ( G_L(z_{L-1}))$? Moreover, we assume $\sigma_L$ to be invertible\footnote{which is typically the case, e.g., for $\operatorname{tanh}$ or $\operatorname{sigmoid}$ activations}, so that we can define $o:= \sigma_L^{-1}(x)$ and solve the equivalent optimization problem 
\begin{equation}
    \label{eq:linear_sytem}
    \text{find } z_{L-1}\in\mathbb{R}^{D_{L-1}} \text{ such that } o=G_L (z_{L-1}) \enspace . 
\end{equation}
Note how this relaxes the intractable full inversion problem to a simpler inversion of a linear system similar to the examples covered in Section~\ref{sec:motivationalExample}.

In practice $G_L$ is typically surjective, that is, there always exists some $z_{L-1}\in\mathbb{R}^{D_{L-1}}$ such that $o = G_L(z_{L-1})$. And even worse, linear algebra provides us with the following fundamental result. 
\begin{propo}[see e.g., \citet{Hefferon2012LinearA}, Lemma 3.7]\label{prop:linear_algebra_fund}
    Let $G_L : \mathbb{R}^{D_{L - 1}} \rightarrow \mathbb{R}^{D_L}$ be a surjective linear function and $o\in \mathbb{R}^{D_L}$. Furthermore, let $z_{L-1} \in \mathbb{R}^{D_{L-1}}$ be a solution to the linear system~\eqref{eq:linear_sytem}. Then, every $z \in \mathcal{S}$ solves the linear system~\eqref{eq:linear_sytem}, where 
    \begin{equation*}
       \mathcal{S}:= \{ z_{L-1} + z: \; z \in \operatorname{ker}(G_L) \}  \enspace ,
    \end{equation*}
    and $\operatorname{ker}(G_L):=\{ z \in \mathbb{R}^{D_{L-1}}:\; G_L(z)=0 \}$ defines the kernel of $G_L$.
    Hence, the solution set $\mathcal{S}$ is a \mbox{$(D_{L-1} - {D_L})$-dimensional} affine space. 
\end{propo}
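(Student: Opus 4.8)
The plan is to split the claim into two independent pieces: first verify that the affine set $\mathcal{S}$ consists entirely of solutions to~\eqref{eq:linear_sytem}, and then separately compute its dimension via the rank--nullity theorem.

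For the first piece, I would take an arbitrary element $z \in \mathcal{S}$, write it as $z = z_{L-1} + w$ with $w \in \operatorname{ker}(G_L)$, and exploit linearity of $G_L$ directly:
\[
    G_L(z) = G_L(z_{L-1}) + G_L(w) = o + 0 = o \enspace ,
\]
using the hypothesis $G_L(z_{L-1}) = o$ and the defining property $G_L(w) = 0$ of the kernel. This shows $\mathcal{S}$ is contained in the solution set. One could append the converse for completeness --- any solution $z'$ satisfies $G_L(z' - z_{L-1}) = 0$, so $z' - z_{L-1} \in \operatorname{ker}(G_L)$ and hence $z' \in \mathcal{S}$ --- which upgrades the containment to an equality, though the statement as phrased only requires the forward inclusion.

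For the dimension, I would observe that $\mathcal{S} = z_{L-1} + \operatorname{ker}(G_L)$ is the translate of the linear subspace $\operatorname{ker}(G_L)$ by the fixed vector $z_{L-1}$; since $z_{L-1}$ is a solution by hypothesis, $\mathcal{S}$ is nonempty, and translating a subspace preserves its dimension, so $\dim \mathcal{S} = \dim \operatorname{ker}(G_L)$. Then I apply rank--nullity, $\dim \operatorname{ker}(G_L) + \operatorname{rank}(G_L) = D_{L-1}$, and invoke surjectivity of $G_L$ onto $\mathbb{R}^{D_L}$, which forces $\operatorname{rank}(G_L) = D_L$. Combining these yields $\dim \mathcal{S} = D_{L-1} - D_L$.

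I expect no serious obstacle, as the result is entirely standard linear algebra (hence the textbook citation). The only points needing care are confirming that $\mathcal{S}$ is nonempty before speaking of its dimension --- guaranteed by the assumed existence of the solution $z_{L-1}$ --- and being precise that the ``dimension of an affine space'' here means the dimension of its underlying direction space $\operatorname{ker}(G_L)$, so that the rank--nullity count transfers cleanly to $\mathcal{S}$.
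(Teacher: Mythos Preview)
Your proposal is correct and is exactly the standard argument. The paper does not supply its own proof of this proposition; it is stated as a well-known fact with a citation to a linear algebra textbook, and your two-piece argument (linearity for the inclusion, rank--nullity plus surjectivity for the dimension count) is precisely the textbook route.
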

\noindent
First, this tells us that if $D_{L-1} > D_L$, there is not a single solution but an affine space of solutions of dimensionality $D_{L-1} - D_L$.
For instance, our experiments in Section~\ref{sec:exp_small} use a LSGAN with $D_{L-1}=64 \times 64 \times 64 = 262\,144$ and $D_{L}=3 \times 64 \times 64 = 12\, 288$, resulting in a $249\,856$-dimensional solution space of~\eqref{eq:linear_sytem}. Secondly, the solution sets $\mathcal{S}_1, \, \mathcal{S}_2$ for different outputs $o_1, \, o_2$ are just shifted versions of another. 

Hence, since infinitely many activations $z_{L-1}$ can generate~$o$, we want to modify~\eqref{eq:linear_sytem} such that we obtain reasonable and more likely activations. In particular, we can consider the expected activation generated by the generative model, which we estimate via Monte-Carlo
\begin{equation*}
    \bar{z}_{L-1} = \frac{1}{n} \sum_{i=1}^n \sigma_{L-1} (G_{L-1}(  \cdots ( \sigma_1 ( G_1 (z^{(i)}) \cdots )  \enspace  
\end{equation*}
with samples $z^{(i)}$ from the model's base distribution, that is, from $ \rvz \sim \mathcal{N}(0, I)$
and solve 
\begin{figure}[t]
    \centering \includegraphics[width=0.345\textwidth,trim=5 7 20 5,clip]{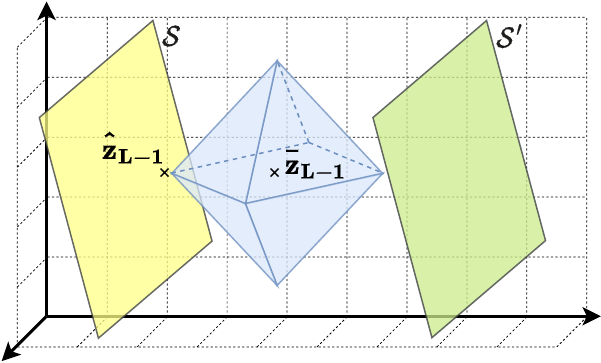}
    \caption{Geometry of the optimization problem~(\ref{eq:basis_pursuit}). According to Proposition~\ref{prop:linear_algebra_fund}, the solution sets $\mathcal{S}$ (yellow) and $\mathcal{S}^\prime$ (green) for outputs $o$ and $o^\prime$, respectively, are shifted versions of another. The solution $\hat{z}_{L-1}$ is the point where the smallest $\ell_1$-diamond (blue) around $\bar{z}_{L-1}$ touches the solution set $\mathcal{S}$. In particular, the second and third components of $\hat{z}_{L-1}$, i.e., the ones corresponding to the $y$- and $z$-axis, coincide with the component of $\bar{z}_{L-1}$.}
    \label{fig:geometry}
\end{figure} 
\begin{equation}
\label{eq:basis_pursuit}
\begin{split}
    \hat{z}_{L-1} \in \argmin_{z_{L-1} \in \mathbb{R}^{D_{L-1}}}  \Vert z_{L-1} - \bar{z}_{L-1} \Vert_1  \\ \text{subject to }   o = G_L (z_{L-1})   \enspace   .
\end{split}
\end{equation}
This optimization problem is a modification of the basis pursuit algorithm \citep{chen2001atomic}; in particular, it has a similar interpretation: \eqref{eq:basis_pursuit} returns solutions that are in the solution space $\mathcal{S}$ and are close to the average activation $\bar{z}_{L-1}$. The $\ell_1$-distance regularizes towards sparsity of $\hat{z}_{L-1} - \bar{z}_{L-1} $, i.e., towards many similar components of $\hat{z}_{L-1}$ and $\bar{z}_{L-1}$. We present a geometric illustration in Figure~\ref{fig:geometry}. In fact, exploiting the concept of sparsity is not new and has proven to be useful in theory and practice. For instance, \citet{parhi2021banach} have shown that deep neural networks are implicitly learning sparse networks, \citet{lederer2022statistical} has derived statistical guarantees for deep networks under sparsity, sparsity is a ubiquitous concept in high-dimensional statistics \citep{lederer2021fundamentals}, and it has proven successful in compressed sensing as well \citep{eldar2012compressed}.

Finally, due to numerical inaccuracies of computations in high-dimensional linear systems, we allow for some slack $\varepsilon > 0$ and  simplify~\eqref{eq:basis_pursuit} to a modification of the basis pursuit denoising \citep{chen2001atomic}
\begin{equation}
\begin{split}
    \label{eq:basis_pursuit_denoising}
     \hat{z}_{L-1} \in \argmin_{z_{L-1} \in \mathbb{R}^{D_{L-1}}} \quad \Vert z_{L-1} - \bar{z}_{L-1} \Vert_1  \\
    \quad \text{subject to } \quad  \Vert G_L (z_{L-1}) - o \Vert_2 \leq \varepsilon \enspace , 
\end{split}
\end{equation}
which is equivalent to the well-known (modified) lasso \citep{tibshirani1996regression} optimization problem, whose Lagrangian form is given by
\small
\begin{equation}
    \label{eq:opti_problem} 
    \hat{z}_{L-1} \in \argmin_{z_{L-1} \in \mathbb{R}^{D_{L-1}}} \Vert G_L (z_{L-1}) - o \Vert_2^2 + \lambda  \Vert z_{L-1} - \bar{z}_{L-1} \Vert_1 , 
\end{equation}
\normalsize
where $\lambda>0$ is a parameter that depends on $\varepsilon$. 

Re-examining the examples in Section~\ref{sec:motivationalExample}, we see that our proposed optimization problem~\eqref{eq:opti_problem} now combines the underlying motivations of all three examples: The reconstruction loss guarantees to provide solutions that approximately solve the linear system (Example~\ref{exa:impossible}) and the regularization towards the average activation $\bar{z}_{L-1}$ biases the solutions towards reasonable activations (Example~\ref{exa:unlikely}), which share a similar structure to $\bar{z}_{L-1}$ (Example~\ref{exa:structure}).
The rationale is that for an output $o^-$ that is not generated from $G$, we expect the estimated activations $\hat{z}$ necessary for approximately reconstructing $o^-$ using the model $G$ to be distinguishable from real activations of the model. 

While our inversion scheme is closely related to the inversion methods from Section~\ref{sec:related_work}, we want to emphasize that \method{} enjoys several advantages summarized in the following theorem. 
\begin{theorem}{(Informal)}\label{theo:informal}
    Assume that $G_L$ is a 2D-convolution with kernel weights sampled from a continuous distribution $\rvk$. Then,~\eqref{eq:opti_problem} satisfies the following properties: (i)~It is equivalent to a lasso optimization problem; (ii)~It has a unique solution with probability $1$;
    (iii)~Under certain moment assumption on $\rvk$ there exists some $S\in \mathbb{N}$ and constants $C_0, C_1$ such that $\Vert \hat{z} - z \Vert_2 \leq C_0 / \sqrt{S} \Vert z - z_S \Vert_1 + C_1 $ 
        if $\Vert z - \hat{z}\Vert_0 \leq S$, where $z_S$ is the best $S$-term approximation of $z$\footnote{See~\eqref{eq:sterm} in Section~\ref{sec:cs} and the modification to~\eqref{eq:opti_problem} in Section~\ref{sec:theory_modified}}. 
\end{theorem}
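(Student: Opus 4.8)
The three claims are largely independent, so I would treat them in turn after a single reduction to standard form. Since $G_L$ is a 2D-convolution it is a linear map, so fix its matrix representation $A\in\mathbb{R}^{D_L\times D_{L-1}}$ with $G_L(z)=Az$. Substituting $\beta:=z_{L-1}-\bar{z}_{L-1}$ and $\tilde{o}:=o-A\bar{z}_{L-1}$ turns \eqref{eq:opti_problem} into $\min_{\beta}\Vert A\beta-\tilde{o}\Vert_2^2+\lambda\Vert\beta\Vert_1$, which is exactly the lasso with design matrix $A$ and response $\tilde{o}$; this is claim (i). The change of variables is affine and bijective, so it preserves uniqueness and transfers any error bound on $\hat\beta$ back to $\hat{z}_{L-1}$, which is why I would establish it first and phrase (ii) and (iii) in terms of $A$ and $\beta$.

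\textbf{Part (ii).} For uniqueness I would invoke the characterization that the lasso solution is unique whenever the columns of $A$ are in general position (Tibshirani, \emph{The lasso problem and uniqueness}). The remaining work is to show this holds almost surely for the \emph{structured} convolution matrix. The entries of $A=A(k)$ are signed copies of the kernel weights $k$, so each determinant governing the general-position condition is a polynomial $p(k)$, and general position fails only on the union of the zero sets $\{p(k)=0\}$ over finitely many such $p$. The key step is to argue that none of these polynomials vanishes identically, i.e.\ to exhibit for each one at least one kernel $k$ making it nonzero, so that each zero set is a proper algebraic variety of Lebesgue measure zero; continuity of the distribution of $\mathbf{k}$ then gives probability zero for the finite union. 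Producing such a witness kernel is the delicate point, because the repeated-entry (Toeplitz/circulant) structure means the minors are not generic; I would choose a kernel with a convenient support pattern so that $A$ restricts to a submatrix with an obviously nonvanishing minor.

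\textbf{Part (iii).} This is the compressed-sensing recovery bound, which I would derive from the restricted isometry property (RIP). The plan is: first, after the reduction, $A$ is a (sub-sampled, strided) random 2D-convolution matrix whose only randomness is the kernel $\mathbf{k}$; second, invoke the RIP results for random partial circulant and convolution matrices (e.g.\ Rauhut--Romberg--Tropp), which under the stated moment assumption on $\mathbf{k}$ and for $D_L$ large enough relative to $S$ up to polylogarithmic factors give that the normalized $A$ satisfies RIP of order $2S$ with constant below the standard threshold, with high probability; third, feed this into the textbook robust $\ell_1$ recovery theorem with noise (Cand\`es; Foucart--Rauhut), which yields $\Vert\hat\beta-\beta\Vert_2\le (C_0/\sqrt{S})\,\Vert\beta-\beta_S\Vert_1+C_1\varepsilon$ with $C_0$ an absolute constant and $C_1$ controlled by the slack $\varepsilon$; fourth, transfer the bound back through the affine change of variables to recover the statement for $\hat{z}$ and $z$. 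I would align the exact normalization and moment hypothesis with those in the cited RIP theorem, reading the precise formal statement (and the conditioning on the support size of $z-\hat{z}$) from the modified problem in \cref{sec:theory_modified}.

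\textbf{Main obstacle.} The hard part will be the RIP step in part (iii): establishing a restricted isometry bound for the \emph{structured} random matrix of a 2D-convolution rather than an i.i.d.\ Gaussian design. The entries are heavily dependent, since every kernel weight is reused across all spatial positions, so elementary concentration does not apply and one must rely on the chaining and decoupling machinery for random circulant matrices, while verifying that the 2D, strided, possibly sub-sampled convolution of \eqref{eq:generative_model} genuinely fits the hypotheses of those theorems and that $D_L$ is large enough in the regime of interest. The general-position witness of part (ii) is a secondary difficulty of the same algebraic-genericity flavor, whereas parts (i) and the final transfer in (iii) are routine.
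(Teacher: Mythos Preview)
Your outline matches the paper on part~(i) and on the high-level strategy for (ii) and (iii); the differences lie in the technical execution.

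For (ii), the paper also invokes Tibshirani's general-position criterion but avoids the witness-finding step you flag as delicate. Rather than treating the relevant minors as opaque polynomials in $k$ and searching for a kernel that makes each nonzero, it writes every column as $G_j = U_j k$ for binary matrices $U_j$ with pairwise \emph{disjoint} supports (Proposition~\ref{prop:U}), and then shows directly that any affine dependence among the columns forces $k$ into the null space of a matrix of rank at least one (Theorem~\ref{theo:general_position}), hence into a set of Lebesgue measure zero. This structural argument buys you a clean proof without having to exhibit a witness kernel for each of the finitely many polynomial conditions; your polynomial-variety route is correct in principle but, as you anticipated, the witness construction would be the laborious part.

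For (iii), the paper takes a genuinely different route. You plan to invoke off-the-shelf RIP results for random partial circulant/convolution matrices (Rauhut--Romberg--Tropp); the paper explicitly observes that those results are either tied to unsuitable entry distributions (e.g.\ Rademacher) or do not cover 2D convolutions in full generality, including multiple channels, padding, stride, and dilation. Instead, it represents a 2D convolution as a Toeplitz matrix with certain rows and columns deleted, extends the Toeplitz RIP bound of \citet{haupt_toeplitz} to zero-padded generating sequences (Theorem~\ref{theo:haupt_modified}), and proves two short lemmas showing that row and column deletion preserves RIP with controlled degradation of the constants (Lemmas~\ref{lemma:del_col} and~\ref{lemma:del_row}); Corollary~\ref{cor:2dconv_recovery} then combines these with the standard Cand\`es recovery theorem. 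Your instinct that the RIP step is the main obstacle is correct, but your plan to ``verify the hypotheses'' of the circulant literature is precisely where the paper expects trouble---its Toeplitz-plus-deletion machinery is built to fill that gap rather than to cite around it.
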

\noindent 
Before providing a proof sketch, let us first examine the implications of Theorem~\ref{theo:informal}: Property 1) allows us to use fast and computationally tractable lasso algorithms such as FISTA~\citep{beck2009fast}, which enjoys finite-sample convergence guarantees. According to 2), it is not required to solve the optimization problem on multiple seeds, as in standard inversion techniques, see Section~\ref{sec:related_work}. Finally, 3) might be useful from a theoretical perspective, but we want to highlight its limitations. The bound depends on $\Vert z - z_S \Vert_1$, which might be expected to be small in diverse problems in compressed sensing, but it is not necessarily small for activations of a generative model. Finally, we provide a brief description of the proof. All details can be found in~\cref{sec:theory} and~\ref{sec:unique}.
\begin{proof}
    (i) By a suitable zero-extension $0=-G_L(\bar{z}) + G_L(\bar{z})$ within the $\ell_2$-norm in~\eqref{eq:opti_problem} and by using the linearity of $G_L$, we end up with a usual lasso optimization problem, see Section~\ref{sec:optiislasso}. (ii) Lasso optimization problems have a unique solution if the columns of the design matrix are in general linear position~\citep{tibshirani2013lasso}. We show that this is the case for convolutions $G_L$ with probability $1$, see Section~\ref{sec:unique}. (iii) If $G_L$ satisfies the restricted isometry property~\citep{candes_rip}, then we can obtain bounds of the desired form for lasso optimization problems. We modify a result by \citet{haupt_toeplitz} to prove this property for 2D-convolutions, see Section~\ref{sec:2dconv}. 
\end{proof}

\begin{figure}
\centering
\subfigure
{\includegraphics[width=.14\linewidth]{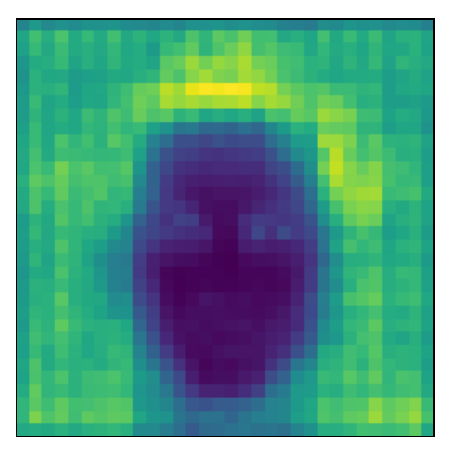}}
 \subfigure
 {\includegraphics[width=.14\linewidth]{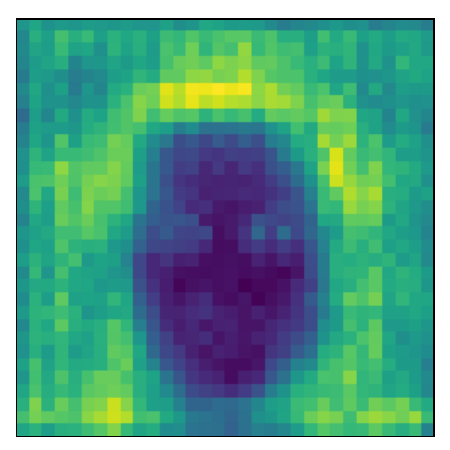}}
  \subfigure 
  {\includegraphics[width=.14\linewidth]{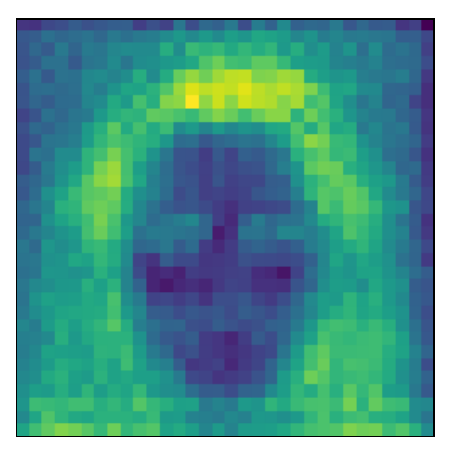}}
  \subfigure
  { \includegraphics[width=.14\linewidth]{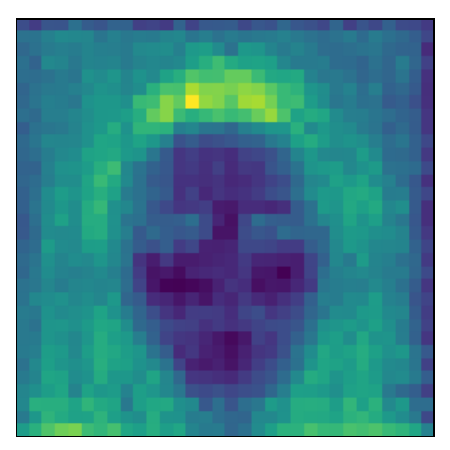}}
  \subfigure
  {\includegraphics[width=.14\linewidth]{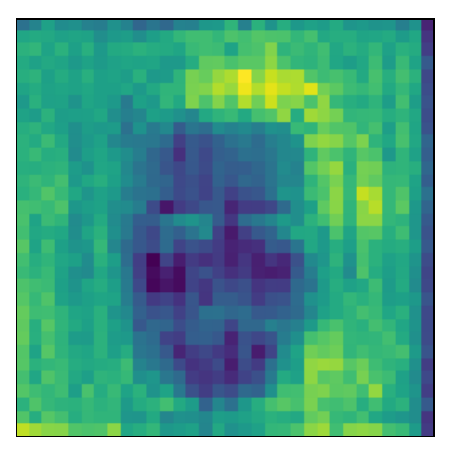}}
  \subfigure
  {\includegraphics[width=.14\linewidth]{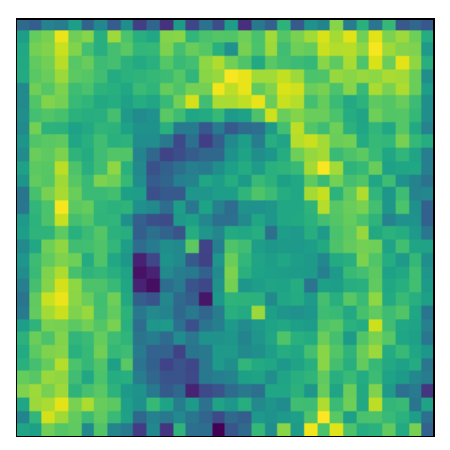}}
  
\caption{Cherry-picked channel dimension $c$ of the average reconstructed features according to (\ref{eq:opti_problem}) when $G$ is a DCGAN. The left-most figure shows the average activation $\bar{z}_c$ over channel $c$, and the remaining figures show the average feature taken over DCGAN, real, WGAN-GP, LSGAN, and EBGAN samples, respectively.}
\label{fig:features_dcgan}
\end{figure}



\section{Experiments}\label{sec:exp} \label{sec:exp_small} 
\begin{table*}[]
\small
\centering
\begin{tabular}{lccccrcccc}
\toprule
 & \multicolumn{4}{c}{CelebA} & & \multicolumn{4}{c}{LSUN} \\
 \cmidrule{2-5} \cmidrule{7-10}
 & DCGAN & WGAN-GP & LSGAN & EBGAN && DCGAN & WGAN-GP & LSGAN & EBGAN \\
\midrule

\finger & 64.81 &	50.18 &	53.57 &	95.78 &&	69.94& 	52.96 &	56.30 	&74.58 \\
\inv 	&98.52 	& \textbf{99.80}  &	62.55 &	\textbf{99.80} &&	51.44 &	50.20 &	65.46 	&54.35 \\
\incinv &	51.23 	&50.16 &	52.64 &	78.46 &&	50.61 &	50.35 &	51.61 &	52.12 \\
\raw &	94.95 &	59.12 &	\textbf{99.53} &	99.54 	&&67.14 &	52.36 &	\textbf{98.22} 	&93.96 \\
\dct 	&81.33 &	64.95 &	96.79 &	92.08&& 	80.48 	&68.10 &	67.93 &	90.40 \\
\method &	\textbf{99.34} &	99.40 &	98.94& 	99.68 	&& \textbf{97.75} &	\textbf{97.73} 	& 98.19 &	\textbf{99.38} \\ 
\bottomrule
\end{tabular}
\caption{Single-model attribution accuracy averaged over all $G'\in\mathcal{G}$ over five runs. A more detailed report is provided in Table~\ref{tab:confusion} in the Appendix. Note, due to their excessive computational load, we do not repeat the inversion methods multiple times.}
\label{tab:attribution_small}
\end{table*}

\begin{table*}[h]
\small
\centering
\begin{tabular}{lccccrcccc}
\toprule
 & \multicolumn{4}{c}{CelebA} & & \multicolumn{4}{c}{LSUN} \\
\cmidrule{2-5} \cmidrule{7-10}
 & DCGAN & WGAN-GP & LSGAN & EBGAN && DCGAN & WGANGP & LSGAN & EBGAN \\
\midrule
\raw & 95.77 &	61.49 &	\textbf{97.48} &	74.35& &	61.82& 	51.65 &	\textbf{92.71} &	\textbf{90.46} \\ 
\dct &	50.01 &	49.81 &	55.37 &	55.45 &	&49.84 &	51.03 &	51.44 &	50.96 \\
\method &	\textbf{99.31} 	& \textbf{97.88} &	69.75 &	\textbf{99.76} 	&& \textbf{98.04} 	& \textbf{97.52} 	& 88.84 	& 89.79 \\
\bottomrule
\end{tabular}
\caption{Average single-model attribution accuracy for the same model trained on different initialization seeds. Each score denotes the average accuracy over all $G' \in \mathcal{G}$ over five runs.}
\label{table:attr_seed}
\end{table*}

We demonstrate the effectiveness of the proposed methods---\raw{}, \dct{}, and \method{}---in a range of extensive experiments. We begin with smaller generative models trained on CelebA and LSUN, which allow a fine-grained evaluation in various setups. 
We extend the experiments to modern large-scale diffusion models and style-based models, to medical image generators, and to tabular models.
All experiments can be reproduced using our public code repository.\footnote{\url{https://github.com/MikeLasz/flipad}}

\subsection{Setup}
Throughout all experiments, we train on $n_{\text{tr}}$ labeled samples $(x_G , 1)$ generated by $G$, and on $n_{\text{tr}}$ labeled negative samples $(x_{\text{neg}}, -1)$ sampled either from the real set of images used to train $G$ or from another generative model $G'$. The latter case is reasonable in settings, in which i) it is not entirely clear on what data $G$ was trained on (e.g.\ in Stable Diffusion) or ii) when the real data is not available at all (e.g.\ in medical applications). We repeat all experiments five times. To evaluate the performance, we measure the classification accuracy over $n_{\text{test}}$ samples from $G$ and $G'$, respectively, where $G'\in \mathcal{G}\backslash \{ G\}$ for a set of generative models $\mathcal{G}$. We compare the proposed methods with fingerprinting (\finger{}) and inversion (\inv{} and \incinv{}) methods, which we adapt to the single-model setting (see Section \ref{sec:adapting}). 
All experimental and architectural details as well as further empirical results, containing standard deviations to all experiments and visualizations, are deferred to Sections~\ref{sec:exp_details} and \ref{sec:app_smallexp} in the Appendix.

\subsection{Results} \label{sec:exp_images}
We begin by considering a set $\mathcal{G}$ of small generative models, namely DCGAN~\citep{radford2015unsupervised}, WGAN-GP~\citep{gulrajani2017improved}, LSGAN~\citep{mao2017least}, EBGAN~\citep{zhao2016energy} trained on either the CelebA \citep{liu2015faceattributes} or the LSUN bedroom~\citep{yu2015lsun} dataset. 
\paragraph{Feature Extraction} 
As a first sanity check of the significance of the extracted features from \eqref{eq:opti_problem}, we present average features of images generated by different sources in Figure~\ref{fig:features_dcgan}. While the average reconstructed features from images generated by the model are very close to its actual average activation, the other images exhibit fairly different features. We provide further examples and investigate the robustness with respect to the parameter $\lambda$ in Section~\ref{sec:app_smallexp} of the Appendix. These results provide qualitative evidence that the extracted features are distinguishable and detectable by an anomaly detector, which we demonstrate in the next paragraph. 

\paragraph{Single-Model Attribution}
We present the single-model attribution performance of each method in Table~\ref{tab:attribution_small}. First of all, we see that the adapted methods are very unstable, their accuracy ranges from $50$\% to almost perfect attribution. Due to their inferior performance, yet high and restrictive computational costs of \inv{} and \incinv{} (see the runtime evaluation below), we exclude them from the remaining experiments on those datasets.  
Even though \raw{} and \dct{} are performing slightly better, their performance is not as consistent as \method{}'s, which achieves excellent results with an average attribution accuracy of over $97.5$\% in all cases.
In a second line of experiments, we evaluate the single-model attribution performance in a notably more difficult task: For a model $G$, we set $\mathcal{G}$ as the set of models with the exact same architecture and training data but initialized with a different seed.
We present the results in Table~\ref{table:attr_seed} and observe a similar pattern as in the previous experiments. Since \method{} involves the knowledge of the exact weights of $G$, we argue that it enables reliable model attribution even in the case of these subtle model variations.

\begin{table*}
    \centering
    \small 
    \begin{tabular}{@{\ }lc@{\hspace{1mm}}cc@{\hspace{1mm}}cc@{\hspace{1mm}}cc@{\hspace{1mm}}c@{\hspace{0mm}}
    cc@{\hspace{1mm}}cc@{\hspace{1mm}}cc@{\hspace{1mm}}cc@{\hspace{1mm}}c@{\ }
    }
    \toprule
    & \multicolumn{8}{c}{CelebA} && \multicolumn{8}{c}{LSUN}\\ 
    \cmidrule{2-9} \cmidrule{11-18}
    & \multicolumn{2}{c}{Blur} & \multicolumn{2}{c}{Crop} & \multicolumn{2}{c}{Noise} & \multicolumn{2}{c}{JPEG} 
    && \multicolumn{2}{c}{Blur} & \multicolumn{2}{c}{Crop} & \multicolumn{2}{c}{Noise} & \multicolumn{2}{c}{JPEG}
    \\
    & 1 & 3 & 60 & 55 & 0.05 & 0.1 & 90 & 80
    && 1 & 3 & 60 & 55 & 0.05 & 0.1 & 90 & 80\\ 
     \midrule 
     \raw &88.71 &	87.86& 	88.69 	&89.33 &	\textbf{84.09} &	\textbf{80.36} &	\textbf{87.05} &	\textbf{86.45} 	&&77.73 &	76.65 &	76.24 &	75.76 	& \textbf{73.47} &	\textbf{69.86} 	& \textbf{75.68} &	\textbf{74.87} \\ 
\dct 	&82.67 	&77.41 	&83.42 	&82.84 	&63.78 &	58.61 	&71.09 	&68.83 &&	76.91 &	65.71 &	74.70 &	73.78 &	51.43 &	50.43 	&55.20 	&52.77\\ 
\method{} &	\textbf{99.36} &	\textbf{98.34} 	& \textbf{98.25} 	&\textbf{98.27} & 	76.06& 	68.80 &	83.46 &	81.36 	&& \textbf{98.13} &	\textbf{97.21} 	& \textbf{94.64} &	\textbf{89.96} 	& 66.58& 	53.17 &	72.83 	&67.70\\
    \bottomrule
    \end{tabular}
    \caption{Single-model attribution accuracy with immunization averaged over all $G \in \mathcal{G},\; G' \in \mathcal{G}\backslash \{G\}$ over five runs. For blurring we consider kernel sizes of $1$ and $3$, for cropping we extract a center crop of $60\times60$ and $55\times55$ pixels followed by upsampling to $64\times64$ pixels. For noise we use standard deviations of $0.05$ and $0.01$, for JPEG we use quality factors of 90 and 80.} 
    \label{table:perturb1}
    \vspace{0pt}
\end{table*}
\begin{table*}
    \centering
    \small
    \begin{tabular}{@{\ }lccccc@{\hspace{1mm}}ccccc@{\hspace{1mm}}cc@{\ }}
        \toprule
        & \multicolumn{4}{c}{Stable Diffusion} && \multicolumn{4}{c}{Style-Based Models} && \multicolumn{2}{c}{Medical Image Models}
        \\ 
        \cmidrule{2-5} \cmidrule{7-10} \cmidrule{12-13}
         & v1-4 & v1-1 & v1-1+ & v2 & & \textit{real} & StyleGAN-XL & StyleNAT & StyleSwin && WGAN-GP & C-DCGAN \\
        \midrule
        \finger & 93.75 &  93.75 & \textbf{93.75} & \textbf{93.75} & & 99.60 &  99.60 & \textbf{99.60} & 99.60 && \textbf{99.81} & \textbf{99.81} \\ 
        \raw & 91.85 & 92.70 & 90.00 & 89.25 & & 55.00 & 54.17 & 53.62 & 51.84 && 99.73 & 78.76 \\ 
        \dct & \textbf{95.45} & \textbf{95.55} & 49.40 & 48.90  & & \textbf{99.93} & \textbf{99.93} & 99.54 & \textbf{99.85} && 99.73 & 71.49 \\ 
        \method & 92.08 & 92.65 & 90.55 & 91.30 && - & - & - & - && 99.75 & 99.63 \\ 
        \bottomrule
    \end{tabular}
    \caption{Single-model attribution accuracy for Stable Diffusion v2-1, StyleGAN2, and DCGAN trained on BCDR. Each score denotes the average accuracy against $G'$ as indicated by the column name over five runs.} 
    \label{tab:attribution_other_image}
    \vspace{0pt}
\end{table*}
\begin{table}[t]
    \centering
    \small
    \begin{tabular}{@{\ }lccc@{\ }}
        \toprule
         & TVAE & CTGAN & CopulaGAN \\
        \midrule
        \inv & 88.02 &  95.74 & 94.45  \\ 
        \raw & 90.61 &  96.60 & 94.93 \\
        \method & \textbf{91.18} &  \textbf{98.07} & \textbf{96.70} \\
        \bottomrule
    \end{tabular}
    \caption{Single-model attribution accuracy of KL-WGAN trained on Redwine against $G'$ as indicated by the column name averaged over five runs.} 
    \label{tab:redwine}
\end{table}
\begin{table*}[h]
    \centering
    \small
    \begin{tabular}{@{\ }lcccc@{\ }}
        \toprule
        & Essential Computation & CelebA ($1\,000$ samples) & LSUN ($1\,000$ samples) & StableDiffusion ($100$ samples) \\  
        \midrule
        \finger & Computing Fingerprints&	\phantom{+ 11}1.82&	\phantom{+ 11}1.77&	\phantom{+ 111}0.32 \\
        \inv & Inverting Samples	&\phantom{+ }300.75	&\phantom{+ }292.22	&\phantom{+ }4964.42\\ 
        \incinv & Inverting Samples&	\phantom{+ }452.60	&\phantom{+ }451.75	&\phantom{+ }5144.75 \\ 
        \raw & Training DeepSAD	&\phantom{+ 11}1.68	&\phantom{+ 11}1.72	&\phantom{+ 111}1.67 \\ 
        \dct & Computing DCT &	\phantom{+ 11}0.40 	&\phantom{+ 11}0.42&	\phantom{+ 111}0.03\\ 
        & + Training DeepSAD & \phantom{11}+ 1.78 & \phantom{11}+ 1.76 & \phantom{111}+ 1.68 \\ 
        \method{} & Final-Layer Inversion  &	\phantom{+ 1}17.48	&\phantom{+ 1}20.05 &\phantom{+ 11}11.63  \\
        & + Training DeepSAD & \phantom{11}+ 8.29 & \phantom{11}+ 8.21 & \phantom{111}+ 2.81 \\ 
        \bottomrule
    \end{tabular}
    \caption{Wall-clock run time of the essential computations in minutes.}
\label{tab:runtime}
    \vspace{0pt}
\end{table*}
\paragraph{Single-Model Attribution on Perturbed Samples}
To hinder model attribution, an adversary could perturb the generated samples. 
We investigate the attribution performance on perturbed samples in the immunized setting, i.e., we train the models on data that is modified by the same type of perturbation.
For the sake of simplicity, we average the attribution accuracies over all models and present the results 
in Table~\ref{table:perturb1}.  
For blurred and cropped images we observe superior performance of \method{} over \raw{} and \dct{}.
However, in the case of JPEG compression and the presence of random noise, we can see a performance drop of \method{}.
In contrast, those perturbations influence the performance of \raw{} only slightly. \ml{Table~\ref{table:perturb_higher_fnr} demonstrates how the performance is improved by allowing a more liberal false negative rate (see Section~\ref{sec:thresholding_method}).}

\paragraph{Stable Diffusion}
In this experiment, we evaluate that \method{} is effective against the powerful text-to-image model Stable Diffusion~\citep{rombach2022}.
Since multiple versions of Stable Diffusion have been released, an appropriate setting is to attribute images to a specific version.
In particular, we consider Stable Diffusion v2-1 to be our model, while the set of other generators consists of v1-1, v1-1+ (similar to v1-1 but with a different autoencoder\footnote{\url{https://huggingface.co/stabilityai/sd-vae-ft-mse}}), v1-4, and v2.
In contrast to previous experiments, we use images from v1-4 instead of real images for training, since it is not entirely clear on which data the model was trained. 
The results are provided in Table~\ref{tab:attribution_other_image}.
All methods can distinguish between v2-1 and v1-4, which they are trained on. 
Given that v1-1 and v1-4 share the exact same autoencoder, this is less surprising.
While \dct{} fails to generalize to other generative models, the other approaches achieve high attribution accuracies above $90\%$ across all models.  

\paragraph{Style-based Generative Models}
In the next experiment, we analyze the single-model attribution of style-based generative models trained on FFHQ~\cite{karras2019style}. 
Specifically, $\mathcal{G}$ consists of StyleGAN2~\cite{karras2021alias}, StyleGAN-XL~\cite{sauer2022stylegan}, StyleNAT~\cite{walton2022stylenat}, StyleSwin~\cite{zhang2022styleswin}, and of FFHQ images. We consider StyleGAN2 to be our model $G$. Note that the skip-connections in StyleGAN2 do not prohibit but complicate the application of \method{} considerably.\footnote{We discuss this challenge in more detail in Section~\ref{sec:skip} of the Appendix.} 
Consequently, we omit its application in this setting and shift our attention to \dct{}. It is known that StyleGANs possess distinctive DCT artifacts~\cite{frank2020leveraging}, which \dct{} apparently benefits from, as demonstrated by the attribution accuracy in~\cref{tab:attribution_other_image}. The performance is on par with \finger{}. 

\paragraph{Generative Models for Medical Image Data}
In contrast to the models in~\cref{sec:exp_images}, generative models for medical images are facing very different challenges. Most importantly, medical data is typically much more scarce and the images are contextually different~\cite{alyafi2020dcgans, varoquaux2022machine}, which is why they require a separate treatment.
We set $\mathcal{G}$ to consist of a DCGAN, a WGAN-GP, and a c-DCGAN~\cite{szafranowska2022sharing, osuala2023medigan} trained on the BCDR~\cite{Lopez2012BCDRA}, which contains $128\times 128$ breast mammography images. Since access to the original dataset is restricted, we train the models on samples from $\mathcal{G}$ and from WGAN-GP. 
We report the single-model attribution accuracies in Table~\ref{tab:attribution_other_image} and observe that both, \raw{} and \dct{}, are performing well on attributing samples from WGAN-GP but fail to generalize to C-DCGAN. Both, \finger{} and \method{}, achieve close to perfect performance. 

\paragraph{Generative Models for Tabular Data}
Many model attribution methods are specifically tailored toward high-dimensional image data, such as \finger{}, making use of well-studied image processing pipelines. 
However, to the best of our knowledge, there is not a single work conducting model attribution for tabular data, which we address in the following experiment. 
We set $\mathcal{G}$ to be the set consisting of KL-WGAN \cite{song2020bridging}, CTGAN, TVAE, and CopulaGAN~\cite{ctgan} trained on the Redwine dataset~\cite{wine}.
Since the real dataset is small, we train the model attribution methods on samples from $\mathcal{G}$ and from TVAE. As displayed in Table~\ref{tab:redwine}, \method{} achieves the highest attribution performance for all $G'$.

\paragraph{Runtime Comparison}
Finally, we summarize the most relevant computational procedures and their corresponding wall-clock time in Table~\ref{tab:runtime}.
To highlight the computational requirements of full inversion (\inv{} and \incinv), we also applied it to Stable Diffusion. Due to its computational constraints, we inverted only 12 samples\footnote{we could fit only a batch of size 4 on an NVIDIA A40 GPU} using \inv{} and \incinv. Inverting these 12 samples already took $595.73$ and $617.37$ minutes, respectively. The values presented in the above table are the inferred values for 100 samples, e.g., $4964.42 \approx 595.73 / 12 \cdot 100$. All computations were conducted on an NVIDIA A40 GPU.
While our proposed methods are not as efficient as \finger, they are still fast and significantly more efficient than those based on full inversion.

\paragraph{Discussion} In summary, 
we conclude that, first, \inv{} and \incinv{} perform well only in very few settings and their computational load restricts their practicality considerably. Second, \finger{} achieves excellent performance for high-dimensional images but fails for low-dimensional images ($64\times 64$). Furthermore, its application is bound to the image domain. 
Third, \raw{} and \dct{} are simple baselines that work decently in most settings but tend to generalize worse to the open-world setting. However, for the style-based generators, \dct{} achieves excellent results, even for unseen generative models. 
Lastly, \method{} is capable of adapting to various settings and performs best, or only slightly worse, than competing methods.


\section{Conclusion} 
This work tackles a neglected but pressing problem: determining whether a sample was generated by a specific model or not.
We associate this task, which we term single-model attribution, with anomaly detection---a connection that has not been drawn previously.
While applying anomaly detection directly to the images performs reasonably well, the performance can be further enhanced by incorporating knowledge about the generative model.    
More precisely, we develop a novel feature-extraction technique based on final-layer inversion.  
Given an image, we reconstruct plausible activations before the final layer under the premise that it was generated by the model at hand.
This reconstruction task can be reduced to a lasso optimization problem, which, unlike existing methods, can be optimized efficiently, globally, and uniquely. 
Beyond these beneficial theoretical properties, our approach is not bound to a specific domain and achieves excellent empirical results on a variety of different generative models, including GANs and diffusion models. 

\section*{Acknowledgements}
We thank Somnath Chakraborty for reviewing our paper and pointing out typos in an older preprint version. Furthermore, we thank all the anonymous reviewers, whose valuable feedback helped improve our work. 
This work was supported by the Deutsche Forschungsgemeinschaft
(DFG, German Research Foundation) under Germany’s Excellence
Strategy – EXC 2092 CASA – 390781972.
\section*{Impact Statement}
Generative AI has the potential to cause significant ethical and societal harm. As the past has revealed, a myriad of misuses of generative models exist, including deep fake generation~\citep{deepfake_survey} to defame individuals~\citep{deepfake_porn}, the spread of fake speech~\citep{howcroft2018faking} and disinformation and propaganda~\citep{genai_disinformation} to manipulate political campaigns~\citep{ chesney2019deep2, chesney2019deepfakes}.
As generative models become more accessible and efficient, the risk of large-scale generation of harmful content intensifies.

In light of these threats, diminishing the spread of misinformation and identifying the sources of misuse should be considered as one of the main challenges of future research in this field.
Our work is dedicated to contributing to this effort, making a valuable step towards the direction of accountable and responsible generative AI systems and their monitoring.

\bibliography{main}
\bibliographystyle{icml2024}


\newpage
\appendix
\onecolumn

\section{Summary of the Related Work}\label{app:related_work}
We complement the summary provided in Section~\ref{sec:related_work} with an overview given in Table~\ref{tab:related_work:overview}.

\begin{table*}[!htbp]
    \centering
    \begin{tabular}{llll} \toprule
                                                & Method                                        & Target        & Setting       \\ \midrule
         \multirow{5}{*}{Fingerprinting}        & \citet{marraGANsLeaveArtificial2019}          & multi-model   & closed-world  \\
                                                & \citet{yuAttributingFakeImages2019}           & multi-model   & closed-world  \\
                                                & \citet{xuanScalableFinegrainedGenerated2019}  & multi-model   & closed-world  \\
                                                & \citet{marra2019incremental}                  & multi-model   & closed-world  \\
                                                & \citet{frank2020leveraging}                   & multi-model   & closed-world  \\ \midrule
                                            
         \multirow{4}{*}{Inversion}             & \citet{albright2019source}                    & multi-model   & closed-world  \\
                                                & \citet{karrasAnalyzingImprovingImage2020}     & single-model  & closed-world  \\
                                                & \citet{zhangAttributionDeepfakes2021}         & multi-model   & closed-world  \\
                                                & \citet{hirofumi2022}                          & both          & both          \\ \midrule
        
         \multirow{5}{*}{Watermarking}          & \citet{yu2021artificial}                      & single-model  & open-world    \\
        & \citet{kim2021decentralized}                  & single-model  & open-world    \\
        & \citet{yu2020responsible}                     & single-model  & open-world    \\ 
        & \citet{wen2023tree}  & single-model  & open-world    \\ 
         & \citet{nie2023attributing}  & single-model  & open-world    \\ 
        \midrule
                                                
         Ours                                   &                                               & single-model  & open-world    \\ \bottomrule
    \end{tabular}
    \caption{Classification of the related work.}
    \label{tab:related_work:overview}
\end{table*}

\section{Deep Semi-Supervised Anomaly Detection}\label{sec:deep_sad}
In anomaly detection, we seek to identify samples that are most likely not generated by a given generative process. We call such samples out-of-distribution samples. But in contrast to classical binary classification procedures, anomaly detection methods strive for bounded decision areas for the in-distribution samples, while logistic regression for instance yields unbounded decision areas.
For example, SVDD \citep{Tax2004SupportVD} maps the data into a kernel-based feature space and finds the smallest hypersphere that encloses the majority of the data. Samples outside that hypersphere are then deemed as out-of-distribution samples. Recently, \cite{pmlr-v80-ruff18a} replaced the kernel-based feature space with a feature space modeled by a deep network, which was further extended to the semi-supervised setting in \cite{ruff2019deep}. More specifically, given $n$ unlabeled samples $x_1, \dots , x_n$, and $m$ labeled samples $(\tilde{x}_1, \tilde{y}_1), \dots  ,(\tilde{x}_m, \tilde{y}_m)$, where $\tilde{y}=1$ denotes an inlier sample, and $\tilde{y}=-1$ denotes an outlier sample, \citet{ruff2019deep} seek to find a feature transformation function $\phi$ parameterized by weights $\mathcal{W}=(W^1, \dots W^L)$. The corresponding optimization problem is defined as 
\begin{align}
    \mathcal{W}^\ast \in \argmin_{\mathcal{W}} &\frac{1}{n+m} \sum_{i=1}^n \Vert \phi(x_i; \mathcal{W}) - c \Vert_2^2 \nonumber \\ 
    +&\frac{\eta}{n+m} \sum_{j=1}^n \bigl( \Vert \phi(\tilde{x}_j; \mathcal{W}) - c \Vert_2^2 \bigr)^{\tilde{y_j}} \\ 
    + &\frac{\lambda}{2} \sum_{l=1}^L \Vert W^l\Vert_F^2 \enspace \nonumber ,  \label{eq:deepsad_opti}
\end{align}
where $\eta>0$ is a hyperparameter balancing the importance of the labeled samples, and $c$ is a point in the feature space representing the center of the hypersphere. 
Intuitively, out-of-distribution samples $x$ are mapped further away from $c$, resulting in a large $\ell_2$-distance $\Vert \phi(x; \mathcal{W}^\ast) - c \Vert_2$ and AD is performed by fixing a threshold $\tau> 0$ to classify 
\begin{equation}
    \hat{y} = \begin{cases}
    1 \;&, \text{ if }  \Vert \phi(x; \mathcal{W}^\ast) - c \Vert_2 \leq \tau   \\
    -1 \;&, \text{ if } \Vert \phi(x; \mathcal{W}^\ast) - c \Vert_2 > \tau   
    \end{cases}\enspace . \label{eq:deepsadscore}
\end{equation}
We explain how we tuned $\tau$ in Section~\ref{sec:thresholding_method}. 

While DeepSAD has been applied to typical anomaly detection tasks, it has never been applied for attributing samples to a generative model. Moreover, in our setting described in Section~\ref{sec:problem_setup}, we assume to have only access to samples generated by $G$ and to real samples. Hence, DeepSAD reduces to a supervised anomaly detection method in this case.

\section{Recovery Guarantees for the Optimization Problem~\ref{eq:opti_problem}} \label{sec:theory}
In this section, we equip the proposed optimization problem~\eqref{eq:opti_problem} with theoretical recovery guarantees of the activation for random 2D-convolutions\footnote{Transposed 2D-convolutions can be treated similarly.}.
First, we show in Section~\ref{sec:optiislasso} the equivalence of~\eqref{eq:opti_problem} to a standard lasso-problem. 
Hence, we can draw a connection to the theory of compressed sensing, which we briefly review in Section~\ref{sec:cs}. We show how 2D-convolutions originate from deleting rows and columns of a Toeplitz matrix in Section~\ref{sec:2dconv}, which motivates a proof for the restricted isometry property for 2D-convolutions. Finally, we generalize the recovery bounds for the classical lasso to our modified lasso problem in Section~\ref{sec:theory_modified}.

\subsection{\eqref{eq:opti_problem} is equivalent to a Lasso-Problem} \label{sec:optiislasso}
We can rewrite~\eqref{eq:opti_problem} as follows: 
\begin{align*}
    \hat{z}_{L-1} &= \argmin_{z_{L-1} \in \mathbb{R}^{D_{L-1}}} \Vert G_L (z_{L-1}) -o \Vert_2^2 + \lambda  \Vert z_{L-1} - \bar{z} \Vert_1 \\ 
    &= \argmin_{z_{L-1} \in \mathbb{R}^{D_{L-1}}} \Vert G_L (z_{L-1}) - G_L(\bar{z}) + G_L (\bar{z}) -o \Vert_2^2 \\
    &\hspace{2cm} + \lambda  \Vert z_{L-1} - \bar{z} \Vert_1 \\ 
    &=  \argmin_{z_{L-1} \in \mathbb{R}^{D_{L-1}}} \Vert G_L( z_{L-1} - \bar{z}) - ( o - G_L(\bar{z}) ) \Vert_2^2 \\ 
    &\hspace{2cm} + \lambda \Vert z_{L-1} - \bar{z} \Vert_1  \enspace .
\end{align*}
Furthermore, since 
\begin{align*}
    &\min_{z_{L-1} \in \mathbb{R}^{D_{L-1}}} \Vert G_L( z_{L-1} - \bar{z}_{L-1}) -  o^\prime \Vert_2^2 + \lambda \Vert z_{L-1} - \bar{z}_{L-1} \Vert_1  \\ = &\min_{z_{L-1} \in \mathbb{R}^{D_{L-1}}} \Vert G_L( z_{L-1}) - o^\prime \Vert_2^2 + \lambda \Vert z_{L-1}\Vert_1 \enspace ,
\end{align*}
for $o^\prime:= o - G_L(\bar{z})$, we can recover $\hat{z}_{L-1} = \hat{z}_{L-1}^\prime + \bar{z}_{L-1}$, where 
\begin{equation}\label{eq:opti_lasso}
     \hat{z}_{L-1}^\prime = \argmin_{z_{L-1} \in \mathbb{R}^{D_{L-1}}} \Vert G_L( z_{L-1}) - o^\prime \Vert_2^2 + \lambda \Vert z_{L-1}\Vert_1 \enspace . 
\end{equation}
Hence, to solve~\eqref{eq:opti_problem}, we can solve~\eqref{eq:opti_lasso} using standard lasso-algorithms such as FISTA \citep{beck2009fast}. 
Additionally, note that the optimization problem is convex and satisfies finite-sample convergence guarantees, see \citet{beck2009fast}. 

\subsection{Background on Compressed Sensing}\label{sec:cs} 
Having observed an output $o \in \mathbb{R}^{D_{\operatorname{out}}}$, the goal of compressed sensing is the recovery of a high-dimensional but structured signal $z^\ast \in \mathbb{R}^{D_{\operatorname{in}}}$ with $D_{\operatorname{in}}>>D_{\operatorname{out}}$ that has produced $o=\Phi(z^\ast)$ for some transformation $\Phi: \mathbb{R}^{D_{\operatorname{in}}} \rightarrow \mathbb{R}^{D_{\operatorname{out}}}$. 
In its simplest form, we define $\Phi$ as a linear function that characterizes an underdetermined linear system
\begin{equation*}
    o = G z^\ast 
\end{equation*}
where $G\in\mathbb{R}^{D_{\operatorname{out}} \times D_{\operatorname{in}}}$ is the defining matrix of $\Phi$. $G$ is usually referred to as the measurement matrix. As stated in Proposition~\ref{prop:linear_algebra_fund}, there usually exists a vector space of solutions $\{ z \in \mathbb{R}^{D_{\operatorname{in}}}\vert o = Gz \}$, and hence, signal recovery seems impossible. However, under certain assumptions on the true (but unknown) signal $z^\ast$ and measurement matrix $G$, we can derive algorithms that are guaranteed to recover $z^\ast$, which is the subject of compressed sensing. In this section, we review some results from compressed sensing for sparse signals along the lines of the survey paper by \citet{eldar2012compressed}.

One fundamental recovery algorithm for sparse signals is the basis pursuit denoising \citep{chen2001atomic}
\begin{equation}
\label{eq:bpd_basic}
    \min_{z} \Vert z \Vert_1 \; \text{subject to} \; \Vert G(z) - o \Vert_2 \leq \varepsilon \enspace 
\end{equation}
for some noise level $\varepsilon>0$.  
Next, we define one set of assumptions on $z^\ast$ and $G$---namely sparsity of $z^\ast$ and the restricted isometry property of $G$---that allow the recovery of $z^\ast$ using~\eqref{eq:bpd_basic}. 

\begin{defi}
    For $S\in \mathbb{N}$, we call a vector $z\in\mathbb{R}^{D_{\operatorname{in}}}$ $S$-sparse, if 
    \begin{equation*}
        \Vert z \Vert_0 := \# \{j \in \{1, \dots , D_{\operatorname{in}}\} : z_j \neq 0 \}  \leq S \enspace . 
    \end{equation*}
    We denote the set of all $S$-sparse vectors by $\Sigma_S \subset \mathbb{R}^{D_{\operatorname{in}}}$.
\end{defi}

\begin{defi}\label{def:rip}
    Let $G$ be a $D_{\operatorname{out}} \times D_{\operatorname{in}}$ matrix. We say $G$ has the restricted isometry property of order $S\in \mathbb{N}$, if there exists a $\delta_S \in (0, 1)$ such that 
    \begin{equation} \label{eq:RIP}
        (1- \delta_S) \Vert z \Vert_2^2 \leq \Vert Gz \Vert_2^2 \leq (1+\delta_S) \Vert z \Vert_2^2 \enspace  \forall z \in \Sigma_S \enspace . 
    \end{equation}
    In that case, we say that $G$ is $(S, \delta_S)$-RIP. 
\end{defi}
To provide some intuition to that definition, note that $G$ is $(S, \delta_S)$-RIP if 
\begin{equation*}
    \Vert G z \Vert_2^2 = z^\top G^\top G z \approx \Vert z \Vert_2^2 = z^\top z \enspace \forall z \in \Sigma_S \enspace ,
\end{equation*}
where the approximation is due to the bounds in~\eqref{eq:RIP}. 
Hence, the RIP is, loosely speaking, related to the condition $z^\top G^\top G z \approx z^\top z$ for $S$-sparse vectors $z$, i.e., $G$ behaves like an orthogonal matrix on vectors $z \in \Sigma_S$. In fact, the restricted isometry property can be interpreted as a generalized relaxation of orthogonality for non-square matrices. 

Lastly, let us define $z_S$ as the best $S$-term approximation of $z^\ast$, i.e., 
\begin{equation} \label{eq:sterm}
    z_S = \argmin_{z \in \Sigma_S} \Vert z^\ast - z \Vert_1 \enspace .
\end{equation}

\begin{theorem}[Noisy Recovery \citep{candes_rip}]\label{theo:candes2}
    Assume that $G$ is $(2S, \delta_{2S})$-RIP with $\delta_{2S} < \sqrt{2} - 1$ and let $\hat{z}$ be the solution to~\eqref{eq:bpd_basic} with a noise-level $\Vert \varepsilon \Vert_2 \leq E$. 
    Then, there exist constants $C_0, C_1$ such that 
    \begin{equation*}
        \Vert \hat{z} - z^\ast \Vert_2 \leq \frac{C_0}{\sqrt{S}} \Vert z^\ast - z_S
        \Vert_1 + C_1 E    \enspace . 
    \end{equation*}
\end{theorem}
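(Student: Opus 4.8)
The plan is to run the classical RIP argument for basis-pursuit denoising, working entirely with the error vector $h := \hat{z} - z^\ast$ and a carefully chosen partition of its coordinates. First I would extract the two elementary consequences of the hypotheses. Writing the noise model as $o = G z^\ast + e$ with $\Vert e\Vert_2 \leq E$, the signal $z^\ast$ is feasible for~\eqref{eq:bpd_basic} (its residual has norm $\Vert e\Vert_2 \leq E$), so $\ell_1$-optimality of $\hat{z}$ gives $\Vert \hat{z}\Vert_1 \leq \Vert z^\ast\Vert_1$, and since both $z^\ast$ and $\hat{z}$ satisfy the constraint, the triangle inequality yields the \emph{tube estimate} $\Vert G h\Vert_2 \leq 2E$. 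Letting $T_0$ be the support of the best $S$-term approximation $z_S$ (the indices of the $S$ largest entries of $z^\ast$), I would split the optimality inequality over $T_0$ and $T_0^c$ and apply the reverse triangle inequality to obtain the \emph{cone constraint}
\begin{equation*}
    \Vert h_{T_0^c}\Vert_1 \leq \Vert h_{T_0}\Vert_1 + 2\Vert z^\ast - z_S\Vert_1 \enspace ,
\end{equation*}
noting that $\Vert z^\ast_{T_0^c}\Vert_1 = \Vert z^\ast - z_S\Vert_1$.

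Next I would partition $T_0^c$ into blocks $T_1, T_2, \dots$ of size $S$ ordered by decreasing magnitude of the entries of $h$, and set $T_{01} := T_0 \cup T_1$. The \emph{shelling} inequality, which follows because each coordinate of $h$ on a block is dominated by the average magnitude on the preceding block, gives $\sum_{j\geq 2}\Vert h_{T_j}\Vert_2 \leq S^{-1/2}\Vert h_{T_0^c}\Vert_1$. The heart of the proof is to control $\Vert h_{T_{01}}\Vert_2$. Starting from the identity $\Vert G h_{T_{01}}\Vert_2^2 = \langle G h_{T_{01}}, G h\rangle - \sum_{j\geq 2}\langle G h_{T_0} + G h_{T_1}, G h_{T_j}\rangle$, I would lower-bound the left-hand side by $(1-\delta_{2S})\Vert h_{T_{01}}\Vert_2^2$ via the RIP, bound the first inner product using the tube estimate together with $\Vert G h_{T_{01}}\Vert_2 \leq \sqrt{1+\delta_{2S}}\,\Vert h_{T_{01}}\Vert_2$, and bound each remaining inner product through the near-orthogonality lemma $\vert\langle G u, G v\rangle\vert \leq \delta_{2S}\Vert u\Vert_2\Vert v\Vert_2$, valid for $u,v$ with disjoint supports of total size at most $2S$. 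Since $\Vert h_{T_0}\Vert_2 + \Vert h_{T_1}\Vert_2 \leq \sqrt{2}\,\Vert h_{T_{01}}\Vert_2$, this is exactly where the factor $\sqrt{2}$ enters.

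Combining these and dividing by $\Vert h_{T_{01}}\Vert_2$ produces an inequality of the form $(1-\delta_{2S})\Vert h_{T_{01}}\Vert_2 \leq 2\sqrt{1+\delta_{2S}}\,E + \sqrt{2}\,\delta_{2S}\,S^{-1/2}\Vert h_{T_0^c}\Vert_1$. Substituting the cone constraint after bounding $\Vert h_{T_0}\Vert_1 \leq \sqrt{S}\,\Vert h_{T_{01}}\Vert_2$ lets me collect the $\Vert h_{T_{01}}\Vert_2$ terms; the resulting coefficient is $1-(1+\sqrt{2})\delta_{2S}$, which is positive precisely under the hypothesis $\delta_{2S} < \sqrt{2}-1$. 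Solving for $\Vert h_{T_{01}}\Vert_2$, then using $\Vert h\Vert_2 \leq \Vert h_{T_{01}}\Vert_2 + \sum_{j\geq 2}\Vert h_{T_j}\Vert_2$ with the shelling bound and the cone constraint once more, yields $\Vert \hat{z}-z^\ast\Vert_2 \leq C_0 S^{-1/2}\Vert z^\ast - z_S\Vert_1 + C_1 E$ with constants $C_0, C_1$ depending only on $\delta_{2S}$.

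I expect the main obstacle to be the bookkeeping in the central estimate: one must keep every block at size $S$ so that each application of the near-orthogonality lemma involves disjoint supports of total size at most $2S$ (hence $\delta_{2S}$ and not a higher-order constant suffices), and track the coefficients tightly enough that the sharp threshold $\delta_{2S}<\sqrt{2}-1$ emerges rather than a weaker sufficient condition. Once that estimate is in place, the remainder is a routine chain of triangle inequalities and RIP bounds.
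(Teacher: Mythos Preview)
Your proposal is a correct and complete outline of the classical Cand\`es argument for noisy recovery under RIP. However, the paper does not supply its own proof of this statement: Theorem~\ref{theo:candes2} is presented in Section~\ref{sec:cs} purely as a cited background result from \citet{candes_rip}, and is then invoked as a black box in the proof of Corollary~\ref{cor:2dconv_recovery}. So there is nothing to compare against; your reconstruction simply fills in what the paper defers to the literature, and it does so faithfully to the original source.
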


This fundamental theorem allows deriving theoretical recovery bounds for sparse signals under the RIP-assumption on $G$. Unfortunately, the combinatorial nature\footnote{The assumption of a certain behavior for $S$-sparse inputs can be interpreted as an assumption on all $D_{\operatorname{out}} \times S$ submatrices of $G$.} of the RIP makes it difficult to certify. In fact, it has been shown that certifying RIP is NP-hard \citep{bandeira2013certifying}. However, it can be shown that random matrices drawn from a distribution satisfying a certain concentration inequality are RIP with high probability \citep{Baraniuk2008ASP}. Those random matrices include, for instance, the set of Gaussian random matrices, which could be used to model random fully-connected layers.

In the remainder of Section~\ref{sec:theory} we will deal with the derivation of the RIP for random 2D-convolutions (Section~\ref{sec:2dconv}) and a modified version of Theorem~\ref{theo:candes2} for the proposed optimization problem~\eqref{eq:opti_problem} (Corollary~\ref{cor:2dconv_recovery} and Section~\ref{sec:theory_modified}). 

\subsection{Recovery Guarantees for 2D-Convolutions} \label{sec:2dconv}
To the best of our knowledge, there exist RIP-results for random convolutions, however, they are restricted to unsuited matrix distributions, such as Rademacher entries \cite{krahmer_rip_convoltions}, and/or do not generalize to 2D-convolutions in their full generality \cite{haupt_toeplitz}, including convolutions with multiple channels, padding, stride, and dilation. 
In this section, we prove that randomly-generated 2D-convolutions indeed obey the restricted isometry property with high probability, which allows the application of recovery guarantees as provided by Theorem~\ref{theo:candes2}. 
We begin by viewing 2D-convolutions as Toeplitz matrices with deleted rows and columns. In fact, this motivates a proof strategy for the RIP of 2D-convolutions by recycling prior results \citep{haupt_toeplitz}.

\begin{defi}\label{def:toeplitz}
    Let $G\in \mathbb{R}^{D_{\operatorname{out}}\times D_{\operatorname{in}}}$ be a matrix given by 
    \begin{equation*}
        G := \begin{pmatrix}
            g_0 & g_{-1} & g_{-2} & \dots & \dots & g_{-(D_{\operatorname{in}} - 1)} \\ 
            g_1 & g_0 & g_{-1} & \ddots & & \vdots  \\ 
            g_2 & g_1 & \ddots & \ddots & \ddots & \vdots \\ 
            \vdots & \ddots & \ddots & \ddots & g_{-1} & g_{-2} \\ 
            \vdots &  & \ddots & g_1 & g_0 & g_{-1} \\ 
            g_{(D_{\operatorname{out}} - 1)} & \dots & \dots & g_2 & g_{1} & g_{0} \\              
        \end{pmatrix} \enspace . 
    \end{equation*}
    Then, we call $G$ the Toeplitz matrix generated by the sequence $\{g_i\}_{i\in I}$ for 
        $I:=\{-(D_{\operatorname{in}} - 1), \dots, 0, \dots, D_{\operatorname{out}} - 1\}$.
\end{defi}

We begin by illustrating the relationship between Toeplitz matrices and standard 2D-convolutions along the example depicted in Figure~\ref{fig:conv_a}. 
Algorithmically, we can construct the matrix $G$ by performing the following steps. First, we begin with constructing the Toeplitz matrix $T(k)$ generated by the sequence 
\begin{align*}
    &g_0 = k_{0,0}, \; g_{-1}=k_{0, 1}, \; g_{-4} = k_{1,0}, \; g_{-5} = k_{1,1}, 
    \\ &g_j=0 \text{ for } j \not\in \{ 0, -1, -4, -5\} \enspace .
\end{align*}
Secondly, we delete any row that corresponds to a non-conformal 2D-convolution, such as the convolution applied on the values $x_{0,3}, \; x_{1,3}, \; x_{1,0}, \;  x_{2, 0}$, as illustrated in the last row in Figure~\ref{fig:conv_a}. Similarly, we can implement strided convolutions by deleting rows of $T(k)$. 
Dilation is implemented by dilating $T(k)$, i.e., padding zeros in the sequence that generates $T(k)$ and multi-channel 2D-convolutions are implemented by simply stacking one-channel matrices $G_{\operatorname{channel} \, i}$ in a row-wise fashion.   
Padded 2D-convolutions can be implemented by either padding zeros to the input, see Figure~\ref{fig:conv_f}, or by deleting the corresponding columns of $G$ without altering the input. In summary, we can construct a 2D-convolution by generating a Toeplitz matrix from the sequence of zero-padded\footnote{the exact padding is given by the number of channels and the dilation} kernel parameters, from which we delete $n_r$ rows and $n_c$ columns, where $n_r$ is given by the input dimensions and the stride, whereas $n_c$ is given by the padding. 

\definecolor{xred}{RGB}{248,206,204}
\definecolor{xgreen}{RGB}{213,232,212}
\definecolor{xblue}{RGB}{218,232,252}
\definecolor{xyellow}{RGB}{255,242,204}

\newcommand{\rbg}[1]{\multicolumn{1}{>{\columncolor{xred}[1pt]}c}{#1}}
\newcommand{\gbg}[1]{\multicolumn{1}{>{\columncolor{xgreen}[1pt]}c}{#1}}
\newcommand{\bbg}[1]{\multicolumn{1}{>{\columncolor{xblue}[1pt]}c}{#1}}
\newcommand{\ybg}[1]{\multicolumn{1}{>{\columncolor{xyellow}[1pt]}c}{#1}}
\setcounter{MaxMatrixCols}{20}

\begin{figure*}
    \centering
    \subfigure{
        \parbox{.23\linewidth}{
            \includegraphics[width=100pt]{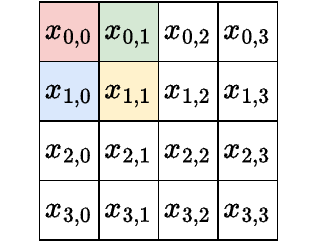}
        }
    }
    \subfigure{
        $
        \scriptsize
        \begin{bmatrix}
            \rbg{k_{0, 0}} & \gbg{k_{0, 1}} & 0 & 0 & \bbg{k_{1, 0}} & \ybg{k_{1, 1}} & 0 & 0 & 0 & 0& \cdots\\
            0 & k_{0, 0} & k_{0, 1} & 0 & 0 & k_{1, 0} & k_{1, 1} & 0 & 0 & 0&\cdots\\
            0 & 0 & k_{0, 0} & k_{0, 1} & 0 & 0 & k_{1, 0} & k_{1, 1} & 0 & 0 & \cdots\\
            0 & 0 & 0 & 0 & k_{0, 0} & k_{0, 1} & 0 & 0 & k_{1, 0} & k_{1, 1} & \cdots\\
            \vdots & \vdots & \vdots & \vdots & \vdots & \vdots & \vdots & \vdots & \vdots & \vdots & \ddots\\
        \end{bmatrix}
        \begin{bmatrix}
            \rbg{x_{0, 0}}\\
            \gbg{x_{0, 1}}\\
            x_{0, 2}\\
            x_{0, 3}\\
            \bbg{x_{1, 0}}\\
            \ybg{x_{1, 1}}\\
            x_{1, 2}\\
            x_{1, 3}\\
            \vdots\\
        \end{bmatrix}
        $
    }
    \subfigure{
        \parbox{.23\linewidth}{
            \includegraphics[width=100pt]{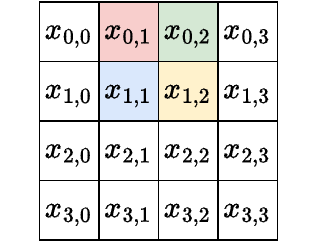}
        }
    }
    \subfigure{
        $
        \scriptsize
        \begin{bmatrix}
            k_{0, 0} & k_{0, 1} & 0 & 0 & k_{1, 0} & k_{1, 1} & 0 & 0 & 0 & 0 & \cdots\\
            0 & \rbg{k_{0, 0}} & \gbg{k_{0, 1}} & 0 & 0 & \bbg{k_{1, 0}} & \ybg{k_{1, 1}} & 0 & 0 & 0 & \cdots\\
            0 & 0 & k_{0, 0} & k_{0, 1} & 0 & 0 & k_{1, 0} & k_{1, 1} & 0 & 0 & \cdots\\
            0 & 0 & 0 & 0 & k_{0, 0} & k_{0, 1} & 0 & 0 & k_{1, 0} & k_{1, 1} & \cdots\\
            \vdots & \vdots & \vdots & \vdots & \vdots & \vdots & \vdots & \vdots & \vdots & \vdots & \ddots\\
        \end{bmatrix}
        \begin{bmatrix}
            x_{0, 0}\\
            \rbg{x_{0, 1}}\\
            \gbg{x_{0, 2}}\\
            x_{0, 3}\\
            x_{1, 0}\\
            \bbg{x_{1, 1}}\\
            \ybg{x_{1, 2}}\\
            x_{1, 3}\\
            \vdots\\
        \end{bmatrix}
        $
    }
    \subfigure{
        \parbox{.23\linewidth}{
            \includegraphics[width=100pt]{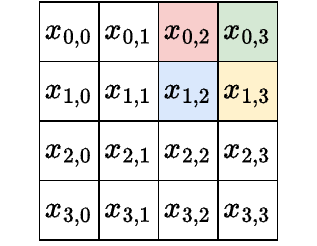}
        }
    }
    \subfigure{
        $
        \scriptsize
        \begin{bmatrix}
            k_{0, 0} & k_{0, 1} & 0 & 0 & k_{1, 0} & k_{1, 1} & 0 & 0 & 0 & 0 & \cdots\\
            0 & k_{0, 0} & k_{0, 1} & 0 & 0 & k_{1, 0} & k_{1, 1} & 0 & 0 & 0 & \cdots\\
            0 & 0 & \rbg{k_{0, 0}} & \gbg{k_{0, 1}} & 0 & 0 & \bbg{k_{1, 0}} & \ybg{k_{1, 1}} & 0 & 0 & \cdots\\
            0 & 0 & 0 & 0 & k_{0, 0} & k_{0, 1} & 0 & 0 & k_{1, 0} & k_{1, 1} & \cdots\\
            \vdots & \vdots & \vdots & \vdots & \vdots & \vdots & \vdots & \vdots & \vdots & \vdots & \ddots\\
        \end{bmatrix}
        \begin{bmatrix}
            x_{0, 0}\\
            x_{0, 1}\\
            \rbg{x_{0, 2}}\\
            \gbg{x_{0, 3}}\\
            x_{1, 0}\\
            x_{1, 1}\\
            \bbg{x_{1, 2}}\\
            \ybg{x_{1, 3}}\\
            \vdots\\
        \end{bmatrix}
        $
    }
    \subfigure{
        \parbox{.23\linewidth}{
            \includegraphics[width=100pt]{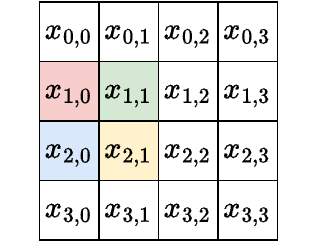}
        }
    }
    \subfigure{
        $
        \scriptsize
        \begin{bmatrix}
            k_{0, 0} & k_{0, 1} & 0 & 0 & k_{1, 0} & k_{1, 1} & 0 & 0 & 0 & 0 & \cdots \\
            0 & k_{0, 0} & k_{0, 1} & 0 & 0 & k_{1, 0} & k_{1, 1} & 0 & 0 & 0 &\cdots\\
            0 & 0 & k_{0, 0} & k_{0, 1} & 0 & 0 & k_{1, 0} & k_{1, 1} & 0 & 0 & \cdots\\
            0 & 0 & 0 & 0 & \rbg{k_{0, 0}} & \gbg{k_{0, 1}} & 0 & 0 & \bbg{k_{1, 0}} & \ybg{k_{1, 1}} & \cdots\\
            \vdots & \vdots & \vdots & \vdots & \vdots & \vdots & \vdots & \vdots & \vdots & \vdots & \ddots\\
        \end{bmatrix}
        \begin{bmatrix}
            \vdots\\
            \rbg{x_{1, 0}}\\
            \gbg{x_{1, 1}}\\
            x_{1, 2}\\
            x_{1, 3}\\
            \bbg{x_{2, 0}}\\
            \ybg{x_{2, 1}}\\
            x_{2, 2}\\
            x_{2, 3}\\
            \vdots\\
        \end{bmatrix}
        $ 
    }
\caption{Convolutional arithmetic as matrix multiplication. Each row shows one convolutional operation. Left: Conventional illustration of a 2D-convolution. Right: 2D-convolution as matrix multiplication $G\cdot x $.}
\label{fig:conv_a}
\end{figure*}

\begin{figure*}[t]
    \centering
    \subfigure{
        \parbox{.23\linewidth}{
            \centering
            \includegraphics[width=90pt]{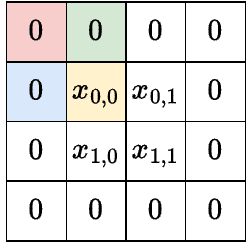}
        }
    }
    \subfigure{
        $
        \scriptsize
        \begin{bmatrix}
            \rbg{k_{0, 0}} & \gbg{k_{0, 1}} & 0 & 0 & \bbg{k_{1, 0}} & \ybg{k_{1, 1}} & 0 & 0 & \cdots\\
            0 & k_{0, 0} & k_{0, 1} & 0 & 0 & k_{1, 0} & k_{1, 1} & 0 & \cdots\\
            0 & 0 & k_{0, 0} & k_{0, 1} & 0 & 0 & k_{1, 0} & k_{1, 1} & \cdots\\
            \vdots & \vdots & \vdots & \vdots & \vdots & \vdots & \vdots & \vdots & \ddots\\
        \end{bmatrix}
        \begin{bmatrix}
            \rbg{0}\\
            \gbg{0}\\
            0\\
            0\\
            \bbg{0}\\
            \ybg{x_{0, 0}}\\
            x_{0, 1}\\
            0\\
            \vdots\\
        \end{bmatrix}
        $
    }
\caption{Padded convolutional arithmetic as matrix multiplication. Left: Conventional illustration of a padded 2D-convolution. Right: 2D-convolution as matrix multiplication $G\cdot x$.}
\label{fig:conv_f}
\end{figure*}

Those relationships motivate how we approach the proof for RIP for 2D-convolutions. First, we recycle a result by \cite{haupt_toeplitz} and generalize the restricted isometry property for Toeplitz matrices generated by sequences $\{g_i\}_{i=1}^p$ containing zeros (Theorem~\ref{theo:haupt_modified}). Secondly, we show that deleting rows and columns of the Toeplitz matrix retains the RIP to some extent (Lemma~\ref{lemma:del_col} and Lemma~\ref{lemma:del_row}). Combining both ideas provides the RIP, and therefore a recovery bound, for random 2D-convolutions with multiple channels, padding, stride, and dilation (Corollary~\ref{cor:2dconv_recovery}). 

\begin{theorem}\label{theo:haupt_modified}
    Let $\{ g_i \}_{i=1}^p$ be a sequence of length $p=p_0 + p_r$ such that 
    \begin{itemize}
        \item $g_i=0$ for $i\in\mathcal{O}$ 
  and
        \item $g_i \sim \rvg_i$, where $\rvg_i$ are i.i.d. zero-mean random variables with variance $\mathbb{E}(\rvg_i^2)=1/p_r$ and $\vert \rvg_i \vert \leq \sqrt{c/p_r}$ for some $c\geq 1$ and $i \not \in \mathcal{O}$,
    \end{itemize}
     where $\mathcal{O}\subset \{1, \dots, p\}$ with cardinality $\vert \mathcal{O}\vert = p_0$. 
     Furthermore, let $G$ be the Toeplitz matrix generated by the sequence $\{g_i\}_{i=1}^p$. 
     Then, for any $\delta_S \in (0, 1)$ there exist constants $c_1, c_2$, such that for any sparsity level $S \leq c_2 \sqrt{ p_r / \log(D_{\operatorname{in}}) } $ it holds with probability at least $1-\exp (-c_1 p_r / S^2 )$ that $G$ is $(S, \delta_S)$-RIP. 
\end{theorem}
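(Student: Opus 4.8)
The plan is to follow and adapt the argument of \citet{haupt_toeplitz}, reducing the restricted isometry property to a uniform concentration statement for the quadratic forms $\|Gw\|_2^2$ over sparse unit vectors and tracking, at every step, the effect of the forced zeros at positions in $\mathcal{O}$. First I would recall that $G$ is $(S,\delta_S)$-RIP, i.e.\ \eqref{eq:RIP} holds, if and only if $\bigl|\,\|Gw\|_2^2 - \|w\|_2^2\,\bigr|\le \delta_S$ for every $S$-sparse $w$ with $\|w\|_2=1$. Fixing a support $T\subseteq\{1,\dots,\Din\}$ with $|T|=S$ and an $\epsilon$-net $N_T$ of the unit sphere in $\operatorname{span}\{e_j: j\in T\}$ (of cardinality at most $(3/\epsilon)^S$), it suffices to control $\|Gw\|_2^2$ for each fixed $w\in N_T$ and then to union bound over the $\binom{\Din}{S}$ supports; the passage from the net to the full sphere is standard and costs only a constant factor in $\delta_S$.

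The core step is the concentration of $\|Gw\|_2^2$ for a fixed $S$-sparse unit vector $w$. Since $Gw$ is \emph{linear} in the generating sequence, I would write $Gw = A(w)\,g$, where $g=(g_i)_{i\notin\mathcal{O}}$ collects the independent random entries and $A(w)$ is the Toeplitz matrix of convolution by $w$ with the columns indexed by $\mathcal{O}$ removed. Hence $\|Gw\|_2^2 = g^\top B(w)\,g$ with $B(w)=A(w)^\top A(w)$, a quadratic form in the \emph{independent} bounded variables $g_i$. This is exactly where the modification enters cleanly: the zeros simply delete coordinates of $g$, so $\operatorname{tr}B(w)=\sum_{i\notin\mathcal{O}}\|A(w)_{\cdot i}\|_2^2 \approx p_r\|w\|_2^2$ up to boundary columns, and with the variance normalization $\E[g_i^2]=1/p_r$ one gets $\E\|Gw\|_2^2 = p_r^{-1}\operatorname{tr}B(w)\approx \|w\|_2^2$. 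I would then apply a Hanson--Wright-type deviation inequality for quadratic forms in bounded independent variables, using the bounds $\|B(w)\|_{\mathrm{op}}=\|A(w)\|_{\mathrm{op}}^2\le\|w\|_1^2\le S$ (convolution with an $S$-sparse unit vector) and $\operatorname{tr}B(w)\asymp p_r$. Because the subexponential term governed by $\|B(w)\|_{\mathrm{op}}\le S$ dominates, this yields a per-$w$ tail of the form $\mathbb{P}\bigl(|\,\|Gw\|_2^2-\|w\|_2^2|>\delta_S\bigr)\le 2\exp(-c\,p_r\delta_S^2/S)$.

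It remains to balance this tail against the enumeration cost. The net and the support count contribute $\log\!\bigl[(3/\epsilon)^S\binom{\Din}{S}\bigr]\le c_3\,S\log \Din$, so the total failure probability is at most $\exp\!\bigl(c_3 S\log\Din - c\,p_r\delta_S^2/S\bigr)$. Imposing $S\le c_2\sqrt{p_r/\log\Din}$ makes the enumeration term at most half of the tail term, and since $p_r/S\ge p_r/S^2$ for $S\ge1$ the residual is bounded by $\exp(-c_1 p_r/S^2)$, which is precisely the claimed probability; the same inequality $S^2\lesssim p_r/\log\Din$ is the advertised sparsity range.

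The main obstacle is genuinely the dependency structure of the Toeplitz matrix: the same variable $g_i$ is reused across many entries, which is why the sample complexity degrades to $S^2$ rather than the $S$ one sees for i.i.d.\ designs, and technically this shows up as $\|B(w)\|_{\mathrm{op}}$ scaling like $S$ instead of being $O(1)$. Relative to \citet{haupt_toeplitz}, the only new work is bookkeeping the set $\mathcal{O}$: I must verify that inserting the zeros and renormalizing by $1/p_r$ (i) keeps $\E\|Gw\|_2^2$ equal to $\|w\|_2^2$, and (ii) does \emph{not} inflate $\|B(w)\|_{\mathrm{op}}$ or $\operatorname{tr}B(w)$ beyond the stated orders. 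The one place requiring care is the boundary columns of $A(w)$, whose windows are truncated so that their contribution to $\operatorname{tr}B(w)$ is slightly below $\|w\|_2^2$; I would control these edge terms and absorb them into $\delta_S$, exactly as in the unmodified Toeplitz RIP proof.
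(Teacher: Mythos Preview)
Your proposal is correct and takes essentially the same route as the paper. The paper's own argument is only a remark: Lemmas~6 and~7 of \citet{haupt_toeplitz} (the concentration bounds for the diagonal and off-diagonal parts of the quadratic form $\|Gw\|_2^2$) continue to hold verbatim when some entries of the generating sequence are fixed to zero, after which the remaining steps of their Theorem~6 carry over unchanged. Your plan reconstructs exactly this argument---the $\epsilon$-net reduction over supports, the linearization $Gw=A(w)\,g$, concentration of the resulting quadratic form in the independent bounded variables, and the union bound---merely replacing the two separate lemmas by a single Hanson--Wright-type inequality, which is an equivalent packaging of the same tail bound.

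One bookkeeping point deserves a second look: your claim that $\operatorname{tr}B(w)\approx p_r\|w\|_2^2$ is not automatic. Summing by \emph{rows} rather than columns gives, for the full (undeleted) $A(w)$, the exact identity $\operatorname{tr}B(w)=D_{\mathrm{out}}\|w\|_2^2$, so after deleting the columns indexed by $\mathcal{O}$ one has $\operatorname{tr}B(w)\le D_{\mathrm{out}}\|w\|_2^2$ regardless of $p_r$. Reconciling this with the $1/p_r$ variance normalization is precisely the ``boundary column'' accounting you flag, and it is the only place where the forced zeros interact nontrivially with the expectation; the paper absorbs this into the appeal to Haupt et al.'s lemmas rather than spelling it out.
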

This result is very similar to Theorem~6 by \citet{haupt_toeplitz}. In fact, the difference here is that we pad the sequence that generates the Toeplitz matrix by zero values $\{g_i\}_{i \in \mathcal{O}}$. To prove this modified version, we note that we can apply Lemma~6 and Lemma~7 by \citet{haupt_toeplitz} for the sequence of zero-padded random variables $\{g_i\}_{i=1}^p$ as well. Besides that, we can perform the exact same steps as in the proof of Theorem~6 by \citet{haupt_toeplitz}, which is why we leave out the proof.

We want to highlight that the sparsity level $S$ and the probability for $G$ being $(S, \delta_S)$-RIP both scale with $p_r$, which is essentially determined by the number of kernels and the kernel size of the 2D-convolution.  

\begin{lemma}\label{lemma:del_col}
    Let $G\in \mathbb{R}^{D_{\operatorname{out}} \times D_{\operatorname{in}}}$ be $(S, \delta_S)$-RIP and let $G_{:, -j} \in \mathbb{R}^{D_{\operatorname{out}} \times (D_{\operatorname{in}} - 1)} $ be the matrix resulting from deleting the $j$th column from $G$. Then it follows that $G_{:, -j}$ is $(S-1, \delta_S)$-RIP. 
\end{lemma}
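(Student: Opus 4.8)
The plan is to prove the lemma by a \emph{zero-insertion} (lifting) argument: every sparse vector on which $G_{:, -j}$ acts can be turned into an equally sparse vector on which the full matrix $G$ acts identically, so that the RIP of $G$ transfers directly. First I would fix an arbitrary $w \in \mathbb{R}^{D_{\operatorname{in}} - 1}$ with $\Vert w \Vert_0 \leq S - 1$ and define its lift $z \in \mathbb{R}^{D_{\operatorname{in}}}$ by inserting a zero into coordinate $j$, that is $z_i = w_i$ for $i < j$, $z_j = 0$, and $z_i = w_{i-1}$ for $i > j$. Two elementary observations drive the argument: inserting a zero changes neither the Euclidean norm nor the support size, so $\Vert z \Vert_2 = \Vert w \Vert_2$ and $\Vert z \Vert_0 = \Vert w \Vert_0 \leq S - 1 \leq S$; in particular $z \in \Sigma_S$.

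The second key step is to verify that the deleted column never contributes, i.e. that $G z = G_{:, -j}\, w$. This is immediate upon expanding the matrix--vector product column by column: the $j$th column of $G$ is multiplied by $z_j = 0$, while the remaining columns of $G$ are precisely the columns of $G_{:, -j}$ and are multiplied by the matching entries of $w$. Consequently $\Vert G_{:, -j}\, w \Vert_2^2 = \Vert G z \Vert_2^2$.

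With these identities established, I would invoke the hypothesis that $G$ is $(S, \delta_S)$-RIP at the $S$-sparse vector $z$, yielding $(1 - \delta_S) \Vert z \Vert_2^2 \leq \Vert G z \Vert_2^2 \leq (1 + \delta_S) \Vert z \Vert_2^2$, and then substitute $\Vert z \Vert_2 = \Vert w \Vert_2$ and $\Vert G z \Vert_2 = \Vert G_{:, -j}\, w \Vert_2$ to obtain $(1 - \delta_S) \Vert w \Vert_2^2 \leq \Vert G_{:, -j}\, w \Vert_2^2 \leq (1 + \delta_S) \Vert w \Vert_2^2$. Since $w$ was an arbitrary vector with $\Vert w \Vert_0 \leq S - 1$, this is exactly the assertion that $G_{:, -j}$ is $(S - 1, \delta_S)$-RIP.

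There is no genuine obstacle here; the entire content lies in setting up the lift correctly, and the only point requiring (trivial) care is the index bookkeeping around the inserted coordinate together with the identity $G z = G_{:, -j}\, w$. In fact the same argument yields the slightly stronger statement that $G_{:, -j}$ is $(S, \delta_S)$-RIP, since lifting an $S$-sparse $w$ already produces an $S$-sparse $z$; the order $S - 1$ recorded in the lemma is all that is needed for the later combination with the row-deletion step in Lemma~\ref{lemma:del_row}.
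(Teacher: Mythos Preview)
Your proof is correct and follows essentially the same zero-padding argument as the paper: lift an $(S-1)$-sparse vector to an $S$-sparse one by inserting a zero at coordinate $j$, then transfer the RIP inequalities via the identities $\Vert Gz\Vert_2 = \Vert G_{:,-j}w\Vert_2$ and $\Vert z\Vert_2 = \Vert w\Vert_2$. Your closing remark that the argument in fact yields $(S,\delta_S)$-RIP for $G_{:,-j}$ is also correct and a nice sharpening the paper does not mention.
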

\begin{proof}
    Let $z^\prime \in \mathbb{R}^{D_{\operatorname{in}}-1}$ be a $S-1$-sparse vector and consider its zero-padded version $z \in \mathbb{R}^{D_{\operatorname{in}}}$ with marginals $z_i$ defined by 
    \begin{equation*}
        z_i := \begin{cases}
            z^\prime_i \quad &\text{if } i<j \\ 
            0 \quad &\text{if } i=j \\ 
            z^{\prime_{i - 1}} \quad &\text{if }i>j 
        \end{cases} \enspace .
    \end{equation*} 
    Note that $z$ is $S$-sparse and since $G$ is assumed to be $(S, \delta_S)$-RIP it follows that
    \begin{equation*}
        (1- \delta_S) \Vert z \Vert_2^2 \leq \Vert Gz \Vert_2^2 \leq (1+\delta_S) \Vert z \Vert_2^2 \enspace  . 
    \end{equation*}
    It holds that 
    \begin{equation*}
        \Vert G z \Vert_2^2 = \Vert G_{:, -j} z^\prime \Vert_2^2 \enspace \text{and} \enspace \Vert z \Vert_2^2 = \Vert z^\prime\Vert_2^2 \enspace 
    \end{equation*}
    and therefore 
    \begin{equation*}
        (1- \delta_S) \Vert z^\prime \Vert_2^2 \leq \Vert G_{:, -j}z^\prime \Vert_2^2 \leq (1+\delta_S) \Vert z^\prime \Vert_2^2 \enspace  ,
    \end{equation*}
    which proves that $G_{:, -j}$ is $(S-1, \delta_S)$-RIP.
\end{proof}

\begin{lemma}\label{lemma:del_row}
    Let $G\in \mathbb{R}^{D_{\operatorname{out}} \times D_{\operatorname{in}}}$ be $(S, \delta_S)$-RIP and let $G_{-j, :} \in \mathbb{R}^{(D_{\operatorname{out}} - 1) \times D_{\operatorname{in}}} $ be the matrix resulting from deleting the $j$th row from $G$. Then it follows that $G_{-j, :}$ is $(S, \delta_S^\prime)$-RIP, where 
    \begin{equation*}
        \delta_S^\prime:= \delta_S+ S \max_i G_{j, i}^2 \enspace .
    \end{equation*}
    
\end{lemma}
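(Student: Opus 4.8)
The plan is to reduce the whole statement to a single Pythagorean identity and then control one scalar term using the sparsity of the test vector. For any fixed $S$-sparse $z \in \mathbb{R}^{D_{\operatorname{in}}}$, note that $\Vert Gz\Vert_2^2 = \sum_{k=1}^{D_{\operatorname{out}}} (Gz)_k^2$, so deleting the $j$th row simply removes the $k=j$ summand and yields the exact identity $\Vert G_{-j,:}\,z\Vert_2^2 = \Vert Gz\Vert_2^2 - (Gz)_j^2$. Both RIP inequalities for $G_{-j,:}$ will then follow by sandwiching $(Gz)_j^2$ between $0$ and a suitable multiple of $\Vert z\Vert_2^2$, and by invoking the assumed $(S,\delta_S)$-RIP of $G$ for the remaining $\Vert Gz\Vert_2^2$ factor.

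The upper bound is immediate: since $(Gz)_j^2 \ge 0$, the identity gives $\Vert G_{-j,:}\,z\Vert_2^2 \le \Vert Gz\Vert_2^2 \le (1+\delta_S)\Vert z\Vert_2^2 \le (1+\delta_S')\Vert z\Vert_2^2$, where the final step uses $\delta_S' = \delta_S + S\max_i G_{j,i}^2 \ge \delta_S$. Intuitively, dropping a row can only shrink the image norm, so the upper RIP constant is never violated. The lower bound is the crux and is where I would spend the work: here I need an upper estimate on $(Gz)_j^2$. Writing $T := \operatorname{supp}(z)$ with $|T| \le S$ and applying Cauchy--Schwarz restricted to $T$ gives
\[
(Gz)_j^2 = \Bigl(\sum_{i\in T} G_{j,i} z_i\Bigr)^2 \le \Bigl(\sum_{i\in T} G_{j,i}^2\Bigr)\Bigl(\sum_{i\in T} z_i^2\Bigr) \le \bigl(|T|\max_i G_{j,i}^2\bigr)\Vert z\Vert_2^2 \le S\max_i G_{j,i}^2\,\Vert z\Vert_2^2 .
\]
Substituting this into the identity together with the RIP lower bound for $G$ yields $\Vert G_{-j,:}\,z\Vert_2^2 \ge (1-\delta_S)\Vert z\Vert_2^2 - S\max_i G_{j,i}^2\,\Vert z\Vert_2^2 = (1-\delta_S')\Vert z\Vert_2^2$, which is exactly the claimed lower RIP bound.

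The only genuinely nontrivial step is the lower bound, and the one subtlety there is precisely where the factor $S$ enters. Restricting the Cauchy--Schwarz sum to the at most $S$ active coordinates of $z$ is what converts the per-entry row bound $\max_i G_{j,i}^2$ into the cumulative penalty $S\max_i G_{j,i}^2$; without using $S$-sparsity one would be forced to sum $G_{j,i}^2$ over all $D_{\operatorname{in}}$ coordinates, giving a far weaker (non-uniform-in-dimension) constant. Everything else is the bookkeeping of the split $\Vert Gz\Vert_2^2 = \Vert G_{-j,:}\,z\Vert_2^2 + (Gz)_j^2$, and the argument holds uniformly over all $S$-sparse $z$, so the resulting constant $\delta_S'$ is a valid (if not necessarily tight) RIP constant for $G_{-j,:}$.
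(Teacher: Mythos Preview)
Your proof is correct and follows essentially the same route as the paper: both use the Pythagorean split $\Vert Gz\Vert_2^2 = \Vert G_{-j,:}z\Vert_2^2 + (Gz)_j^2$, dispose of the upper bound trivially, and handle the lower bound by restricting Cauchy--Schwarz to the support of $z$ and bounding $\sum_{i\in T} G_{j,i}^2$ by $S\max_i G_{j,i}^2$. If anything, your write-up is slightly more explicit about where the factor $S$ enters than the paper's own version.
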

\begin{proof}
    Let $z \in \Sigma_S$. Since $G$ is assumed to be $(S, \delta_S)$-RIP, it holds that 
    \begin{equation*}
        \Vert G_{-j, :} z\Vert_2^2 \leq \Vert G z \Vert_2^2 \leq (1+\delta_S) \Vert z \Vert_2^2 \leq (1+\delta_S^\prime) \Vert z \Vert_2^2 \enspace ,
    \end{equation*}
    where the first inequality follows from the positivity of the summands, the second inequality follows from~\eqref{eq:RIP}, and the third inequality follows from $\delta_S \leq \delta_S^\prime$. Hence, it remains to prove the lower bound from~\eqref{eq:RIP}: 
    \begin{align*}
          \Vert G_{-j, :} z\Vert_2^2  &= \sum_{\substack{i=1\\ i \neq j}}^{D_{\operatorname{out}}} \bigl( G_{i,:}^\top z \bigr)^2  \\ 
          &= \biggl( \sum_{i=1}^{D_{\operatorname{out}}} \bigl( G_{i,:}^\top z \bigr)^2 \biggr)- \bigl(G_{j,:}^\top z \bigr)^2 \\ 
          &= \biggl( \sum_{i=1}^{D_{\operatorname{out}}} \bigl( G_{i,:}^\top z \bigr)^2 \biggr)- \bigl(G_{j,\mathcal{S}}^\top z_{\mathcal{S}} \bigr)^2 \enspace,
    \end{align*}
    where $\mathcal{S}:=\{j: z_j\neq 0\}$ denotes the support of $z$. 
    Finally, we can apply the Cauchy-Schwartz inequality to conclude that 
    \begin{equation*}
     \Vert G_{-j, :} z\Vert_2^2   \geq (1-\delta_S) \Vert z \Vert_2^2 - \Vert G_{j, \mathcal{S}} \Vert_2^2 \Vert z_{\mathcal{S}} \Vert_2^2 
          = (1-\delta_S^\prime) \Vert z \Vert_2^2 \enspace 
    \end{equation*}
    since $\Vert z \Vert_2^2 = \Vert z_{\mathcal{S}} \Vert_2^2$. Hence, $G_{j, :}$ is $(S, \delta_S^\prime)$-RIP. 
\end{proof}
 
Gathering our results, we can derive the following probabilistic recovery bound. 
\begin{corollary}
\label{cor:2dconv_recovery}
    Let $G$ be a 2D-convolution resulting from $n_r$ row deletions with $n_k$ kernel parameters sampled from an i.i.d. zero-mean random variables $\rvk_i$ with $\mathbb{E}(\rvk_i^2)=1/n_k$ and $\vert \rvk_i\vert\leq \sqrt{c / n_k}$ for some $c\geq 1$.
    Then, there exists a $S\in \mathbb{N}$ and $C_0, C_1, C_2 \geq 0$ such that if additionally $\vert \rvk_i \vert \leq \sqrt{C_2/n_d S}$ then it holds with high probability that
    \begin{equation}
        \Vert \hat{z} - z^\ast \Vert_2 \leq \frac{C_0}{\sqrt{S}} \Vert z^\ast - z_S
        \Vert_1 + C_1 E    \enspace , \label{eq:recovery_bound}
    \end{equation}
    where $E\geq \Vert \varepsilon \Vert_2$ is the noise-level.
\end{corollary}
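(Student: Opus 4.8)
The plan is to reduce the statement to the noisy-recovery guarantee of Theorem~\ref{theo:candes2} by showing that the random 2D-convolution $G$ satisfies the restricted isometry property of order $2S$ with a constant $\delta_{2S} < \sqrt{2} - 1$ on a high-probability event. The entire argument is a bookkeeping exercise that tracks how the RIP order and constant evolve as one passes from the full Toeplitz matrix to $G$ through the row and column deletions described in Section~\ref{sec:2dconv}.

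First I would realize $G$ as the Toeplitz matrix $T(k)$ generated by the zero-padded kernel sequence, with $n_c$ columns and $n_r$ rows removed. Because the kernel entries $\rvk_i$ are i.i.d., zero-mean, have variance $1/n_k$, and satisfy $|\rvk_i| \leq \sqrt{c/n_k}$, the nonzero entries of the generating sequence meet exactly the hypotheses of Theorem~\ref{theo:haupt_modified} with $p_r = n_k$. Hence, for any target constant $\delta \in (0,1)$ there is a sparsity level $S$ (subject to $2S + n_c \leq c_2 \sqrt{n_k / \log D_{\operatorname{in}}}$, where I deliberately request order $2S + n_c$ so that the later column deletions leave order $2S$) for which $T(k)$ is $(2S + n_c, \delta)$-RIP with probability at least $1 - \exp(-c_1 n_k / (2S + n_c)^2)$. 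I would fix $\delta$ comfortably below the threshold, say $\delta = (\sqrt{2}-1)/2$.

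Next I would peel off the deletions. Applying Lemma~\ref{lemma:del_col} exactly $n_c$ times accounts for the column deletions: each lowers the RIP order by one but leaves the constant untouched, so we arrive at a $(2S, \delta)$-RIP matrix. Applying Lemma~\ref{lemma:del_row} then $n_r$ times accounts for the row deletions: each preserves the order $2S$ but inflates the constant by at most $2S \max_i G_{j,i}^2$. This is precisely where the additional moment assumption $|\rvk_i| \leq \sqrt{C_2/(n_d S)}$ enters, since it bounds every squared entry by $C_2/(n_d S)$, so each row deletion costs at most $2 C_2 / n_d$ and the $n_r$ deletions together add at most $2 n_r C_2 / n_d$ to the constant. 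Choosing $C_2$ small enough, and using that $n_r / n_d$ is controlled, keeps the accumulated constant $\delta_{2S} = \delta + 2 n_r C_2 / n_d$ strictly below $\sqrt{2} - 1$. With $G$ now $(2S, \delta_{2S})$-RIP and $\delta_{2S} < \sqrt{2} - 1$ on the stated event, Theorem~\ref{theo:candes2} applies verbatim to the recovered $\hat{z}$ and yields~\eqref{eq:recovery_bound} with $C_0, C_1$ inherited from that theorem.

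I expect the main obstacle to be the constant bookkeeping in the third step: the row-deletion lemma is the only operation that degrades the RIP constant, and its degradation scales with both the order $S$ and the entry magnitudes, so the extra assumption on $\rvk_i$ must be calibrated against $S$ precisely enough that the cumulative inflation over all $n_r$ deletions still leaves room beneath the threshold $\sqrt{2}-1$ of Theorem~\ref{theo:candes2}. A secondary point to verify is that the solution to which Theorem~\ref{theo:candes2} applies coincides with the one produced by our formulation~\eqref{eq:opti_problem}; this is exactly the equivalence recorded in Section~\ref{sec:optiislasso} together with the modification discussed in Section~\ref{sec:theory_modified}.
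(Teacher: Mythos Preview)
Your proposal is correct and follows the same route as the paper: obtain RIP for the zero-padded Toeplitz matrix via Theorem~\ref{theo:haupt_modified}, propagate it through the row-deletion lemma while absorbing the degradation of the constant via the extra bound on $|\rvk_i|$, and then invoke Theorem~\ref{theo:candes2}. You are in fact slightly more careful than the paper---you request order $2S$ (which Theorem~\ref{theo:candes2} genuinely needs) and additionally treat column deletions, which the paper defers to a post-proof remark---but the skeleton is identical.
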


\begin{proof}
    From Theorem~\ref{theo:haupt_modified}, it follows that there exists a $\delta_S<\sqrt{2}-1$ and a sparsity level $S$ such that the Toeplitz matrix $T(G)$ corresponding to $G$ is $(\delta_S, S)$-RIP with high probability. Furthermore, let us define $C_2:=\sqrt{2}-1-\delta_S$. By iteratively deleting $n_d$ rows from on $T(G)$, we end up with $G$, which is---according to Lemma~\ref{lemma:del_col}---$(\delta_S^\prime, S)$-RIP with 
    \begin{equation*}
        \delta_S^\prime = \delta_S + n_d S \max_i k_i \leq \delta_S + n_d S \frac{C_2}{n_d S} = \delta_S + C_2 = \sqrt{2}-1 \enspace.   
    \end{equation*}
    Hence, we can apply Theorem~\ref{theo:candes2} to conclude that there exist constants $C_0, C_1$ such that 
    \begin{equation*}
        \Vert \hat{z} - z^\ast \Vert_2 \leq \frac{C_0}{\sqrt{S}} \Vert z^\ast - z_S
        \Vert_1 + C_1 E    \enspace .
    \end{equation*}
\end{proof} 

Note that by iteratively applying Lemma~\ref{lemma:del_col}, we can derive a similar result for 2D-convolutions that employ zero-padding, which we leave out for the sake of simplicity. 

\subsection{Extending the Results to the Optimization Problem~\eqref{eq:opti_problem}} 
\label{sec:theory_modified}
So far, we have only derived recovery guarantees for solutions of the optimization problem~\eqref{eq:bpd_basic}. Fortunately, as presented in~\eqref{eq:opti_lasso} in the main paper, we can readily draw a connection between~\eqref{eq:bpd_basic} and~\eqref{eq:opti_problem}. 

\begin{defi}[$(\bar{z}, S)$-Similarity]
        Let $z, \bar{z} \in \mathbb{R}^D$. We say that $z$ is $(\bar{z},S)$-similar, if $\Vert z - \bar{z} \Vert_0$ is $S$-sparse. We denote the set of all $(\bar{z}, S)$-similar vectors by $\Sigma_S (\bar{z})$.
\end{defi}

In the following, we switch back to the notation from Section~\ref{sec:main} but leave out the layer index\footnote{compare with~\eqref{eq:opti_lasso} for instance} to emphasize the generality of this section, i.e.,  
\begin{align*}
    \hat{z} &= \argmin_{z \in \mathbb{R}^{D_{\operatorname{in}}}} \Vert Gz - o \Vert_2^2 + \lambda \Vert z - \bar{z} \Vert_1 \enspace ,  \\ 
    \hat{z}^\prime &= \argmin_{z \in \mathbb{R}^{D_{\operatorname{in}}}} \Vert Gz - o^\prime \Vert_2^2 + \lambda \Vert z \Vert_1 \enspace , 
\end{align*}
with $o^\prime = o - G\bar{z} = G (z^\ast - \bar{z})$. Since $z^\ast - \bar{z}$ is the input producing $o^\prime$, we define $z^{\ast \prime}=z^\ast - \bar{z}$. 
Then, we can readily extend sparse recovery results, such as Corollary~\ref{cor:2dconv_recovery}, to $(\bar{z}, S)$-similar recovery results by applying a simple zero-extensions: 
\begin{align*}
    \Vert \hat{z} - z^\ast \Vert_2^2 &= \Vert \hat{z} - \bar{z} + \bar{z} - z^\ast \Vert_2^2 = \Vert (\hat{z} - \bar{z}) - ( z^\ast - \bar{z}) \Vert_2^2 \\
    &= \Vert \hat{z}^\prime - z^{\ast\prime} \Vert_2^2 \enspace . 
\end{align*}
The right-hand side can be upper-bounded by our sparse recovery bound~\eqref{eq:recovery_bound} since, per definition, $(\bar{z}, S)$-similarity of $z^\ast$ implies $S$-sparsity of $z^{\ast \prime}$.

\section{Uniqueness of the Solution of~\eqref{eq:opti_problem}}\label{sec:unique}
The optimization problem~\eqref{eq:opti_problem} can be reformulated as an equivalent lasso optimization problem (Section~\ref{sec:optiislasso}), offering the advantage of leveraging efficient optimization algorithms like FISTA~\citep{beck2009fast}, which are guaranteed to converge. In addition, we show in this section that the solution is also unique, i.e., FISTA converges to a \textit{unique solution}. 
Therefore, there is no need to solve the optimization problem for different seeds, as seen in similar methods discussed in Section~\ref{sec:related_work}. We prove the uniqueness by applying techniques from~\citet{tibshirani2013lasso}. \\ In the following, we assume that $\Din > \Dout$.
\begin{defi}\label{def:generalpos}
    Let $G\in\mathbb{R}^{D_{\operatorname{out}} \times D_{\operatorname{in}}}$. We say that $G$ has columns in \textit{general position} if for any selection of columns $G_{i_1}, \dots, G_{i_{\Din}}$ of $G$ and any signs $\sigma_1, \dots , \sigma_{i_{\Din}} \in \{-1, 1\}$ it holds
    \small
    \begin{equation*}
        \operatorname{Aff}(\{\sigma_1 G_{i_1}, \dots , \sigma_n G_{i_{\Din}} \}) \cap \{\pm G_i: \; i \not \in  \{i_1, \dots , i_{\Din}\} \} = \emptyset \enspace , 
    \end{equation*}
    \normalsize
    where $\operatorname{Aff}(S):=\{\sum_{n=1}^N \lambda_n s_n: \; \sum_{n=1}^N \lambda_n = 1 \}$ defines the affine hull of the set $S=\{s_1, \dots, s_N\}$. 
\end{defi}

\begin{lemma}[\cite{tibshirani2013lasso}]\label{lemma:uniquelasso}
    If the columns of $G\in\mathbb{R}^{D_{\operatorname{out}} \times D_{\operatorname{in}}}$ are in general position, then for any $\lambda>0, \, o\in\mathbb{R}^{D_{\operatorname{out}}}$ the lasso problem 
    \begin{equation*}
        \hat{z} = \argmin_{z \in \mathbb{R}^{D_{\operatorname{in}}}} \Vert G z - o \Vert_2^2 + \lambda \Vert z \Vert_1 
    \end{equation*}
    has a unique solution. 
\end{lemma}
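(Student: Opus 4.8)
The plan is to follow the uniqueness argument of \citet{tibshirani2013lasso}, organized around the \emph{equicorrelation set}. First I would record the stationarity condition: $\hat z$ minimizes $\Vert Gz - o\Vert_2^2 + \lambda\Vert z\Vert_1$ if and only if $G^\top(o - G\hat z) = \tfrac{\lambda}{2}\gamma$ for some $\gamma$ in the subdifferential of $\Vert\cdot\Vert_1$ at $\hat z$, i.e.\ $\gamma_i = \sign(\hat z_i)$ wherever $\hat z_i \neq 0$ and $\gamma_i \in [-1,1]$ otherwise. This is the only place convexity of the problem enters, and it characterizes the full solution set.

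The first substantive step is to show that every minimizer shares the same fitted vector $G\hat z$, hence the same residual $r := o - G\hat z$. This follows because $u \mapsto \Vert u - o\Vert_2^2$ is strictly convex: if two minimizers had distinct fits, their midpoint would strictly decrease the squared-error term while the $\ell_1$ term remains convex, contradicting optimality. Consequently the subgradient $\gamma = \tfrac{2}{\lambda}G^\top r$ is common to all solutions, so the equicorrelation set $\mathcal{E} := \{i : \vert G_i^\top r\vert = \lambda/2\}$ and the signs $s_i := \sign(G_i^\top r)$ for $i \in \mathcal{E}$ are uniquely determined. Every solution has support contained in $\mathcal{E}$, and writing $v_i := s_i G_i$ we obtain $\langle v_i, r\rangle = \lambda/2 > 0$ for all $i \in \mathcal{E}$; that is, all signed equicorrelation columns lie on a single affine hyperplane that misses the origin.

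The crux is to deduce from Definition~\ref{def:generalpos} that the columns $\{G_i : i \in \mathcal{E}\}$ are linearly independent. I would argue by contradiction: assume a \emph{minimal} linearly dependent subset $\{G_i : i \in B\}$ with $B \subseteq \mathcal{E}$, so that $\sum_{i\in B} c_i G_i = 0$ with all $c_i \neq 0$, and minimality forces $\vert B\vert - 1$ vectors to be independent, hence $\vert B\vert - 1 \le \Dout$. Rewriting in terms of the signed columns $v_i$ and pairing the dependence with $r$ converts linear dependence into \emph{affine} dependence: since $\langle v_i, r\rangle = \lambda/2$ takes the same nonzero value for every $i \in B$, the coefficients are forced to sum to zero, and solving for any single index $k$ exhibits $s_k G_k$ as an affine combination of the remaining $\vert B\vert - 1 \le \Dout$ signed columns $\{s_i G_i : i \in B \setminus \{k\}\}$. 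This is exactly the configuration forbidden by general position, contradicting the hypothesis; hence $G_{\mathcal{E}}$ has full column rank.

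To finish, suppose $z^{(1)}$ and $z^{(2)}$ are two solutions. Both are supported on $\mathcal{E}$ and share the common fit, so $G_{\mathcal{E}}\,(z^{(1)} - z^{(2)})_{\mathcal{E}} = G(z^{(1)} - z^{(2)}) = 0$, and full column rank of $G_{\mathcal{E}}$ forces $z^{(1)} = z^{(2)}$, proving uniqueness. I expect the main obstacle to be the third paragraph, namely correctly converting the linear dependence of the equicorrelation columns into an affine dependence of the \emph{signed} columns (the role of the shared inner product $\lambda/2$, which pins down the zero-sum of coefficients) and tracking cardinalities so that Definition~\ref{def:generalpos} genuinely applies. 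Minor care is also needed for the degenerate cases $r = 0$ or $\mathcal{E} = \emptyset$, where $\lambda > 0$ immediately yields the unique solution $\hat z = 0$.
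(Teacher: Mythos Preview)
The paper does not supply its own proof of this lemma; it is quoted from \citet{tibshirani2013lasso} and used as a black box, and what you have written is exactly Tibshirani's argument (unique fit via strict convexity of the squared loss, hence a common equicorrelation set $\mathcal{E}$ and signs, then linear independence of $G_{\mathcal{E}}$ from general position). Your flagged concern about ``tracking cardinalities so that Definition~\ref{def:generalpos} genuinely applies'' is justified: the paper's Definition~\ref{def:generalpos} writes $\Din$ where Tibshirani's original has at most $\Dout$ signed columns, and it is your bound $\lvert B\rvert - 1 \le \Dout$ that matches the correct version and makes the contradiction go through.
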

In the following, we prove that a $2D$-convolution has columns in general position.  
\begin{propo}\label{prop:U}
    Let $G\in\Routin$ be a 2D-convolution with kernel $k\in\mathbb{R}^{D_k}$ 
    with pairwise different kernel values and $D_k\geq 2$. 
    Then, there exist binary matrices $U_j \in \{0,1\}^{\Dout \times \Din }$ for $j\in\{1,\dots, \Din\}$ such that:
    \begin{enumerate}
        \item the $j$th column of $G$ can be written as $G_j = U_j \cdot k \in \mathbb{R}^{\Dout}$ \enspace ; 
        
        \item The supports of $U_j$, 
        \small
        $$
        S_j := \{ (l,m) \in \{1, \dots, \Dout\} \times \{1,\dots, \Din \}: \bigl(U_j\bigr)_{l,m} \neq 0 \} 
        $$
        \normalsize
        are pairwise disjoint for all $j$.
    \end{enumerate}
\end{propo}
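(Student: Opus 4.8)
The plan is to read the matrices $U_j$ directly off the convolution's incidence structure, exploiting the matrix picture from \cref{sec:2dconv}: every entry of $G$ is either zero or exactly one kernel weight $k_m$, and \emph{which} weight sits in position $(l,j)$ is dictated purely by the convolution geometry (stride, dilation, padding, and channel bookkeeping), independently of the numerical values of $k$. Concretely, I would index the rows $l$ of $G$ by output coordinates and fix an enumeration of the kernel taps by $m\in\{1,\dots,D_k\}$. For each input coordinate $j$ I then define a binary matrix $U_j$ with rows indexed by $l$ and columns indexed by the taps $m$, setting $(U_j)_{l,m}=1$ precisely when input coordinate $j$ feeds output coordinate $l$ through the $m$-th tap, and $(U_j)_{l,m}=0$ otherwise. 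The support $S_j$ is then the set of such active pairs $(l,m)$.

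The crux is a single-valuedness claim: for a convolution, a fixed output coordinate $l$ together with a fixed tap $m$ determines \emph{at most one} input coordinate $j$. Writing the correlation relation $\mathrm{out}[p]=\sum_t k[t]\,\mathrm{in}[s\,p+d\,t]$ with stride $s$ and dilation $d$, the input position contributing to output $p$ through tap $t$ is forced to be $s\,p+d\,t$, which is uniquely pinned down by $(p,t)$; in the multi-channel case one simply lets $m$ carry the $(\text{out-channel},\text{in-channel},\text{tap})$ label, and $(l,m)$ still determines $j$ (with no incidence when the output-channel labels in $l$ and $m$ disagree). I would establish this first for the plain single-channel case and then argue it is preserved under the row/column deletions and channel-stacking of \cref{sec:2dconv}: deleting rows or columns of the generating Toeplitz matrix only removes incidences, and stacking channels appends labels consistently, so the map $(l,m)\mapsto j$ remains well defined.

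Granting the claim, both properties follow at once. For property~1, fix $j$ and a row $l$: single-valuedness gives at most one tap $m(l)$ through which $j$ reaches $l$, so $G_{l,j}=k_{m(l)}$ (or $0$ if there is no incidence); since the $l$-th row of $U_j$ is the indicator of $m(l)$, the $l$-th entry of $U_j k$ equals $k_{m(l)}$, whence $G_j=U_j k$. For property~2, suppose $(l,m)\in S_j\cap S_{j'}$; then both $j$ and $j'$ feed output $l$ through tap $m$, so single-valuedness forces $j=j'$, proving the supports are pairwise disjoint. Note that this works even under weight sharing: although a single scalar $k_m$ is reused in many entries of $G$, the \emph{index pairs} $(l,m)$ are partitioned among the columns, which is all disjointness requires.

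The main obstacle I anticipate is bookkeeping rather than conceptual: making the incidence map $(l,m)\mapsto j$ and the tap enumeration fully rigorous across the general convolution (multiple channels, padding, stride, dilation), so that ``the $m$-th tap'' is unambiguous and the deletions of \cref{sec:2dconv} are correctly accounted for. I would isolate this as a single technical lemma and dispatch the remaining cases via the reductions already set up there. Finally, I observe that the construction of $U_j$ is value-independent, so the decomposition and the disjointness do not themselves invoke distinctness of the kernel entries or $D_k\geq 2$; these are precisely the nondegeneracy conditions that the \emph{subsequent} argument combines with \cref{def:generalpos} and \cref{lemma:uniquelasso} to pass from this block structure to general position, and hence to uniqueness of the lasso solution for a generic kernel.
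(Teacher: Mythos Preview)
Your proposal is correct and reaches the same conclusion, but the route differs from the paper's in a meaningful way. The paper constructs $U_j$ \emph{by value}: it sets $(U_j)_{l,m}=1$ precisely when $(G_j)_l=k_m$. This definition is only unambiguous because the kernel entries are assumed pairwise distinct; disjointness of the supports is then deduced by noting that $(U_j)_{l,m}=(U_{j'})_{l,m}=1$ would force $G_{l,j}=G_{l,j'}=k_m$, contradicting the Toeplitz-like structure of $G$. You instead construct $U_j$ \emph{by geometry}, via the incidence relation ``input $j$ feeds output $l$ through tap $m$'', and isolate the structural fact that $(l,m)$ pins down $j$. Your approach is value-independent and, as you observe, does not itself require distinctness of the $k_m$ or $D_k\ge 2$; it also makes explicit the combinatorial lemma that the paper leaves implicit under the phrase ``contradicts the Toeplitz-like structure''. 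The paper's version is shorter because the value-based definition lets property~1 fall out by inspection, at the cost of burning the distinctness hypothesis already in the construction.

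One small point to tighten: you state the single-valuedness claim as ``$(l,m)$ determines $j$'' and use it correctly for property~2, but for property~1 you invoke ``at most one tap $m(l)$ through which $j$ reaches $l$'', which is the companion fact that $(l,j)$ determines $m$. Both follow from the same convolution arithmetic $j=s\,p+d\,t$ (and its multi-channel labeling), but they are distinct directions of uniqueness; state and prove both in your technical lemma so that the reader does not stumble over the switch.
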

\begin{proof}
    $G$ is a 2D-convolution, therefore, each column $G_j$ is a sparse vector with $l$th entry $(G_j)_l\in\{0, k_1,\dots , k_{D_k}\}$, i.e., it is either $0$ or one of the kernel values. 
    Hence, we can define the $U_j$ by 
    \begin{equation} \label{eq:construct_binary}
        (U_j)_{l, m} = \begin{cases}
            1\enspace , \quad &\text{if } (G_j)_l = k_m \\ 
            0 \enspace, \quad &\text{otherwise }
        \end{cases}
    \end{equation}
    for all $(l,m) \in \{1, \dots, \Dout\} \times \{1,\dots, \Din \}$.
    Using that construction, we get 
    \begin{align*}
        \bigl(U_j \cdot k \bigr)_{l} &=  (U_j)_l \cdot k  \\
        &= \begin{cases}
            k_m &\enspace, \text{if } (G_j)_l = k_m \\
            0 &\enspace, \text{otherwise}
        \end{cases}   \enspace  \\
        &= (G_j)_l  \enspace \text{ for all } l \in \{1, \dots, \Dout\} \enspace , 
    \end{align*}
    where $(U_j)_l$ denotes the $l$th row of $U_j$. This proves the first claim, that is, $G_j= U_j \cdot k$. 
    The second claim becomes immediate from our construction: Assume that there exist $j, j^\prime$ with $j\neq j^\prime$ but $S_j \cap S_{j^\prime} \neq \emptyset$. This means that there exists a $(l,m)$ such that $(U_j)_{l,m} = (U_{j^\prime})_{l,m}$. According to~\eqref{eq:construct_binary}, this means that $(G_j)_l=k_m=(G_{j^\prime})_l$, which contradicts the Toeplitz-like structure of $G$, see Section~\ref{sec:2dconv}. Hence, $S_j$ and $S_{j\prime}$ must be disjoint. 
\end{proof}

\begin{theorem}\label{theo:general_position}
    Let $G\in\Routin$ be a 2D-convolution with kernel values $k_j$ sampled from independent continuous random variables $\rvk_j$. Then, $G$ has columns in general position with probability $1$.
\end{theorem}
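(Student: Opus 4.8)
The plan is to leverage \cref{prop:U}: every column is a fixed linear image $G_j = U_j k$ of the single random kernel vector $k$, and the binary matrices $U_j$ have pairwise disjoint supports. This reduces the theorem to the elementary measure-theoretic fact that, when $\rvk$ is absolutely continuous with independent coordinates, any fixed nontrivial polynomial in $k$ vanishes with probability zero, and a finite union of such null events is again null. The entire difficulty is therefore pushed into showing that the polynomials encoding failure of general position are \emph{not} identically zero in $k$, and this is precisely where disjoint supports enter.

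First I would unfold \cref{def:generalpos} into a finite list of \emph{configurations}. A configuration consists of a selection of distinct column indices $i_0, i_1, \dots, i_r$ together with signs $\sigma_0, \sigma_1, \dots, \sigma_r \in \{-1,1\}$, and the associated bad event is $\sigma_0 G_{i_0} \in \operatorname{Aff}(\{\sigma_t G_{i_t}\}_{t=1}^{r})$. Since the number of index selections and sign patterns is finite, the event that $G$ is not in general position is a finite union of such bad events, and it suffices to control each one separately. Homogenising the columns to $\tilde G_j(k) := (1, U_j k) \in \mathbb{R}^{\Dout + 1}$ turns affine membership into linear dependence: the bad event holds exactly when the $r+1$ vectors $\tilde G_{i_0}, \dots, \tilde G_{i_r}$ are linearly dependent, i.e. when the $(\Dout + 1) \times (r+1)$ matrix they form has rank at most $r$. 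Each entry of this matrix is affine in $k$, so every $(r+1) \times (r+1)$ minor is a polynomial in $k$, and the bad event is contained in the common zero set of these minors.

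Consequently, each bad event has probability zero as soon as one of its minors is a nonzero polynomial, equivalently as soon as the selected columns are affinely independent for \emph{some} kernel realisation. The crux, and the step I expect to be the main obstacle, is certifying this non-vanishing. A naive appeal to continuous entries is unavailable here, because the convolution ties all $\Dout \Din$ entries of $G$ to only $D_k$ free parameters; the content of \cref{prop:U} is that this coupling is nonetheless harmless. Concretely, I would evaluate a well-chosen minor along the curve $k_m = \theta^m$, so that every nonzero entry of $G$ becomes a distinct power of $\theta$ keyed by its kernel index. Because the supports $S_j$ are pairwise disjoint, distinct columns never place the same kernel index in the same row, and tracking a leading monomial under a suitable ordering, which disjointness prevents from cancelling, certifies that the minor is a nonzero polynomial in $\theta$ and hence a nonzero polynomial in $k$.

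Putting the pieces together, each of the finitely many configurations contributes a null event, their union is null, and therefore $G$ has columns in general position with probability $1$. I would finally note the payoff: combined with \cref{lemma:uniquelasso}, this establishes the uniqueness claim (property (ii)) of the informal theorem. A minor technical point to dispatch along the way is that no selected column is identically zero, which holds since each support $S_j$ is nonempty for the generative convolutional layers under consideration; this guarantees that the curve evaluation above actually activates the relevant entries.
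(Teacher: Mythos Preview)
Your proposal is sound and takes a genuinely different route from the paper. The paper converts the affine condition to a span condition via \cref{lemma:affine}, then argues that if the bad event occurs there exist coefficients $\lambda$ with $k \in \operatorname{nul}\bigl(\sum_j \lambda_j (U_{i_j} - U_{i_n})\bigr)$, and uses the disjoint supports of \cref{prop:U} to show that this null space is a proper subspace of $\mathbb{R}^{D_k}$. You instead homogenise and phrase the bad event as the simultaneous vanishing of a fixed finite family of $(r{+}1)$-minors, each a polynomial in $k$; non-vanishing of a single minor then gives the measure-zero conclusion directly. The payoff of your route is that it sidesteps a point the paper glosses over: the coefficients $\lambda$ in the paper's null-space step are not fixed in advance but depend on $k$, so the implicit bound is over an \emph{uncountable} union $\bigcup_{\lambda}\operatorname{nul}(U(\lambda))$ of proper subspaces, and such a union need not be Lebesgue-null. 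Your minors are fixed polynomials in $k$, so this issue never arises and the finite union bound over configurations is clean.

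One place to tighten: the ``evaluate along $k_m=\theta^m$ and track a leading monomial'' step is plausible but not yet a proof. Disjoint supports ensure that within any fixed row the selected columns carry distinct powers of $\theta$, but a determinant multiplies entries across rows, and you have not said which monomial survives or why the signed permutation contributions do not collide. A cleaner certificate is simply to exhibit a single $k$ for which the chosen signed columns are affinely independent; for the convolutional $G$ this can be read off from its Toeplitz shift structure (e.g., by choosing rows where the kernel windows of the selected columns are staggered) rather than from disjoint supports alone, and it avoids the monomial bookkeeping entirely.
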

Before proving Theorem~\ref{theo:general_position}, let us provide a basic result from linear algebra.
\begin{lemma}\label{lemma:affine}
    Let $G_1, \dots , G_{n+1}\in\mathbb{R}^D$ for some $n, D \in \mathbb{N}$. Then it is 
    \begin{align*}
        &G_{n+1} \in \operatorname{Aff}(\{G_1, \dots, G_n \})  
        \\ 
        \Leftrightarrow \quad &G_{n+1} - G_n \in  \operatorname{span}(G_1 - G_n , \dots, G_{n-1} - G_n) \enspace ,  
    \end{align*}
    where $\operatorname{span}$ denotes the linear span. 
\end{lemma}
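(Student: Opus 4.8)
The plan is to prove both implications by direct algebraic manipulation, using the defining constraint $\sum_i \lambda_i = 1$ of the affine hull (Definition~\ref{def:generalpos}) as the hinge that converts an affine combination into a linear combination of the difference vectors $G_i - G_n$, and conversely.

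For the forward direction, I would assume $G_{n+1} \in \operatorname{Aff}(\{G_1, \dots, G_n\})$, so that there exist scalars $\lambda_1, \dots, \lambda_n$ with $\sum_{i=1}^n \lambda_i = 1$ and $G_{n+1} = \sum_{i=1}^n \lambda_i G_i$. The key trick is to subtract $G_n$ and rewrite it as $G_n = \bigl(\sum_{i=1}^n \lambda_i\bigr) G_n$ using the constraint, which gives
\[
    G_{n+1} - G_n = \sum_{i=1}^n \lambda_i (G_i - G_n) = \sum_{i=1}^{n-1} \lambda_i (G_i - G_n) \enspace ,
\]
where the $i=n$ summand vanishes. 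Hence $G_{n+1} - G_n \in \operatorname{span}(G_1 - G_n, \dots, G_{n-1} - G_n)$.

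For the converse, I would assume $G_{n+1} - G_n = \sum_{i=1}^{n-1} \mu_i (G_i - G_n)$ for some scalars $\mu_1, \dots, \mu_{n-1}$, and then recover an affine combination by collecting the $G_n$ terms:
\[
    G_{n+1} = \sum_{i=1}^{n-1} \mu_i G_i + \Bigl( 1 - \sum_{i=1}^{n-1} \mu_i \Bigr) G_n \enspace .
\]
Setting $\lambda_i := \mu_i$ for $i < n$ and $\lambda_n := 1 - \sum_{i=1}^{n-1} \mu_i$ yields coefficients with $\sum_{i=1}^n \lambda_i = 1$, so $G_{n+1} \in \operatorname{Aff}(\{G_1, \dots, G_n\})$, completing the equivalence.

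There is no genuine obstacle here, as the statement is a standard fact about affine versus linear hulls; the only point requiring care is the index bookkeeping. In particular, the crucial observation is that the affine constraint $\sum_i \lambda_i = 1$ is exactly what allows the $G_n$ term to absorb the difference, so that the two membership statements are literally the same identity read in opposite directions. I would therefore keep the writeup short and purely computational.
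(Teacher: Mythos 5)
Your proof is correct and follows essentially the same route as the paper's: both directions are handled by the same algebraic rewriting, using $\sum_i \lambda_i = 1$ to absorb the $G_n$ term, with the paper merely presenting the two implications in the opposite order. No further comment is needed.
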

\begin{proof}
    Let $G_{n+1} - G_n \in \operatorname{span}(G_1 - G_n , \dots, G_{n-1} - G_n)$. Therefore, there exist $\lambda_1, \dots, \lambda_{n-1} \in \mathbb{R}$ such that 
    \begin{align*}
        G_{n+1} - G_n &= \sum_{j=1}^{n-1} \lambda_j (G_j - G_n)  \\ 
        \Leftrightarrow G_{n+1} &= \sum_{j=1}^{n-1} \lambda_j G_j - \sum_{j=1}^{n-1} \lambda_j G_n + G_n \\ 
        \Leftrightarrow G_{n+1} &= \sum_{j=1}^{n-1} \lambda_j G_j + \biggl(1 - \sum_{j=1}^{n-1} \lambda_j \biggr) G_n \enspace .         
    \end{align*}
    The above coefficients of $G_j$ satisfy 
    \begin{equation*}
        \sum_{j=1}^{n-1} \lambda_j + \biggl(1-\sum_{j=1}^{n-1} \lambda_j \biggr) = 1 \enspace. 
    \end{equation*}
    Therefore, we have shown that there exist coefficients $\lambda_j^\prime$ such that $G_{n+1} = \sum_{j=1}^n \lambda_j^\prime G_j$, that is, $G_{n+1} \in \operatorname{Aff}(\{G_1, \dots, G_n \}$. \\
    Conversely, let $G_{n+1} \in \operatorname{Aff}(\{G_1, \dots, G_n \}$. Then, there exist $\lambda_1, \dots, \lambda_{n} \in \mathbb{R}$ with $\sum_{j=1}^{n}\lambda_j = 1$ such that 
    \begin{align*}
        G_{n+1} &= \sum_{j=1}^n \lambda_j G_j \\    
        \Leftrightarrow \quad G_{n+1} - G_n &= \sum_{j=1}^n \lambda_j G_j - G_n \\
         &= \sum_{j=1}^n \lambda_j G_j - \sum_{j=1}^n \lambda_j G_n    \\
         &= \sum_{j=1}^n \lambda_j (G_j - G_n) \enspace , 
    \end{align*}
    which proves that $G_{n+1} - G_n \in \operatorname{span}(G_1 - G_n , \dots, G_{n-1} - G_n)$. 
\end{proof}
Next, we provide the proof of Theorem~\ref{theo:general_position}. 

\begin{proof}
    Let $G_{i_1}, \dots , G_{i_{\Din}}, G_{i_{\Din + 1}}$ be a collection of $\Din + 1$ columns of $G$. We show that $G_{i_{\Din + 1}} \in \operatorname{Aff}(\{G_{i_1}, \dots, G_{i_{\Din}} \})$ by proving that $G_{i_{\Din+1}} - G_{i_{\Din}} \in \operatorname{span}(G_{i_1} - G_{i_{\Din}}, \dots, G_{i_{\Din- 1}} - G_{i_{\Din}})$ with probability~$1$, see Lemma~\ref{lemma:affine}. According to Proposition~\ref{prop:U}, there exist binary matrices $U_j$ such that 
    \begin{align}
        & G_{i_{\Din+1}} - G_{i_{\Din}} \in \operatorname{span}(G_{i_1} - G_{i_{\Din}},  \nonumber
        \\
        & \quad \quad \dots, G_{i_{\Din- 1}} - G_{i_{\Din}} ) \nonumber \\ 
          \Leftrightarrow & \;(U_{i_{\Din + 1}} - U_{i_{\Din}}) \cdot  k \in \operatorname{span} ( (U_{i_1} - U_{i_{\Din}})\cdot k ,  \nonumber\\ 
          & \quad \quad   \dots, (U_{i_{\Din - 1}} - U_{i_{\Din}}) \cdot k ) \nonumber
          \\
          \Leftrightarrow & \; \exists 
        (\lambda_1, \dots, \lambda_{\Din - 1} ) \neq (0, \dots, 0): \nonumber \\  & (U_{i_{\Din}} - U_{i_{\Din - 1}}) \cdot k = \sum_{j=1}^{\Din - 1} \lambda_j (U_{i_j} - U_{i_{\Din}}) \cdot k \nonumber \\ 
        \Leftrightarrow &\; \exists 
        (\lambda_1, \dots, \lambda_{\Din - 1} ) \neq (0, \dots, 0): \nonumber \\ & 0 = \biggl( \sum_{\substack{j=1 \\ j \neq \Din }}^{\Din +1} \lambda_j (U_{i_{j}} - U_{i_{\Din}}) \biggr) \cdot k , \enspace \text{ with } \lambda_{\Din +1}:= -1  \nonumber
        \\ 
         \Rightarrow &\; k \in \operatorname{nul}\biggl(\sum_{\substack{j=1 \\ j \neq \Din }}^{\Din +1} \lambda_j (U_{i_{j}} - U_{i_{\Din}}) \biggr) \enspace 
         \label{eq:kinnull}  \\ & \text{ for the above choice of } \lambda_1, \dots \lambda_{\Din - 1} , \lambda_{\Din +1} \enspace , \nonumber 
    \end{align}
    where $\operatorname{nul}(A)$ denotes the null-space of $A$. We show that~\eqref{eq:kinnull} holds with probability $0$ by proving that the null-space has dimension less than $D_k$. By the rank-nullity theorem, this is equivalent to showing that 
    \begin{equation*}
        \operatorname{rank}(U) \geq 1 \enspace \text{ for } U:= \sum_{\substack{j=1 \\ j \neq \Din }}^{\Din +1} \lambda_j (U_{i_{j}} - U_{i_{\Din}}) \enspace . 
    \end{equation*}
    Since $U$ is a linear mapping, it is sufficient to show that there exists some $k \in \mathbb{R}$ such that $Uk\neq 0$. 
    \\
    Without loss of generality, let $\lambda_1 \neq 0$. Then, since $U_1\neq 0$, we know that there exist some $l,m$ such that $(U_1)_{l,m}=1$. Then, it holds for $k=e_m$, where $e_m$ is the $m$th Euclidean unit vector, that 
    \begin{align*}
        \bigl(  U k \bigr)_l &= \biggl(  \sum_{j=1}^{\Din -1} \lambda_j U_j e_m - \sum_{j=1}^{\Din -1} \lambda_j U_{\Din} e_m  \biggr)_l \\ 
        &= \biggl(  \sum_{j=1}^{\Din } \lambda_j U_j e_m \biggr)_l = \lambda_1 \neq 0 \enspace ,
    \end{align*}
    where $\lambda_{\Din}:= -\sum_{j=1}^{\Din } \lambda_j $. Note, the last equality follows by the fact that all $U_j$ have pairwise disjoint supports. Since the $l$th component of $Uk$ is not zero, we conclude that there exists some $k\in\mathbb{R}^{D_k}$ such that $Uk\neq 0$. Therefore, $\operatorname{rank}(U)\geq 1$, from which derive that $\operatorname{dim}(\operatorname{nul}(U))<D_k$, i.e., the null-space of $U$ has Lebesgue mass $0$. \\
    In summary, if $G_{i_{\Din + 1}} \in \operatorname{Aff}(\{G_{i_1}, \dots, G_{i_{\Din}} \})$, then $k$ needs to be in a Lebesgue null-set, which has probability $0$ since $\rvk_j$ are assumed to be continuous random variables. We can repeat the same arguments independent of the sign of $G_{i_{\Din + 1}}$, the sign of $G_{i_1}, \dots, G_{i_{\Din}}$, for each $G_i$ with $i \not\in \{i_1, \dots, i_{\Din}\}$, and for any collection of $\{i_1, \dots, i_{\Din}\}$. By taking a union bound over all choices, we conclude that the columns of $G$ are in general position with probability $1$.
\end{proof}

Finally, plugging the pieces together, we prove the uniqueness of~\eqref{eq:opti_problem}. 
\begin{corollary}
    Let $G\in\Routin$ be a 2D-convolution with kernel values $k_j$ sampled from independent continuous random variables $\rvk_j$. Then, the optimization problem~\eqref{eq:opti_problem} has a unique solution with probability $1$. 
\end{corollary}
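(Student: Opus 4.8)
The plan is to assemble three ingredients that have already been established, so the proof is essentially a bookkeeping exercise rather than new mathematics. First I would invoke the reduction carried out in Section~\ref{sec:optiislasso}: the change of variables $z \mapsto z - \bar{z}$ transforms \eqref{eq:opti_problem} into the standard lasso \eqref{eq:opti_lasso} with shifted target $o^\prime = o - G_L(\bar{z})$, and conversely any minimizer of \eqref{eq:opti_problem} is recovered via $\hat{z} = \hat{z}^\prime + \bar{z}$. Because the affine map $z \mapsto z - \bar{z}$ is a bijection of $\mathbb{R}^{D_{L-1}}$ onto itself, the two problems have identical objective values up to reparametrization, and in particular \eqref{eq:opti_problem} has a unique minimizer if and only if the lasso \eqref{eq:opti_lasso} does.

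Second, I would establish uniqueness for \eqref{eq:opti_lasso}. By Theorem~\ref{theo:general_position}, a 2D-convolution $G$ with kernel values drawn from independent continuous random variables $\rvk_j$ has columns in general position with probability $1$. On this probability-$1$ event, Lemma~\ref{lemma:uniquelasso} (due to \citet{tibshirani2013lasso}) applies directly: for any $\lambda > 0$ and any right-hand side $o^\prime \in \mathbb{R}^{D_{\operatorname{out}}}$, the lasso problem has a unique solution. Since $\lambda > 0$ is exactly the regime assumed in \eqref{eq:opti_problem}, the hypotheses match verbatim.

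Combining the two steps, with probability $1$ the lasso \eqref{eq:opti_lasso} admits a unique minimizer $\hat{z}^\prime$, and by the bijection of Step~1 the original problem \eqref{eq:opti_problem} inherits this uniqueness through $\hat{z} = \hat{z}^\prime + \bar{z}$, which completes the argument. I do not expect any genuine obstacle here: the only points that require a sentence of justification are that the reparametrization truly preserves uniqueness (immediate, since it is a bijection) and that the sampling model of the corollary coincides with the hypothesis of Theorem~\ref{theo:general_position} (continuous, independent kernel entries). All of the substantive work---verifying general position for the Toeplitz-structured convolution matrix---has already been discharged in Theorem~\ref{theo:general_position} via Proposition~\ref{prop:U} and the null-space dimension count.
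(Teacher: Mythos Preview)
Your proposal is correct and follows essentially the same approach as the paper: reduce \eqref{eq:opti_problem} to the standard lasso \eqref{eq:opti_lasso} via Section~\ref{sec:optiislasso}, invoke Theorem~\ref{theo:general_position} to obtain general position with probability~$1$, and conclude uniqueness via Lemma~\ref{lemma:uniquelasso}. The only difference is that you spell out the bijection argument for transferring uniqueness between the two formulations, which the paper leaves implicit.
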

\begin{proof}
    As shown in Section~\ref{sec:optiislasso}, we can rewrite~\eqref{eq:opti_problem} to the equivalent lasso optimization problem~\eqref{eq:opti_lasso}. According to Theorem~\ref{theo:general_position}, the columns of $G$ are in general linear position with probability $1$. Using Lemma~\ref{lemma:uniquelasso}, we conclude that~\eqref{eq:opti_problem} has a unique solution with probability $1$. 
\end{proof}

\section{Adapting Closed-World Attribution Methods to the Open-World Setting}\label{sec:adapting} 
While the method proposed in this work provides an efficient solution to the problem of single-model attribution in the open-world setting, it might not be suitable under all circumstances (e.g., if final-layer inversion is infeasible due to non-invertible ReLU-activations).
Furthermore, if a particularly effective method exists in the closed-world setting, it might be beneficial to build upon this strong foundation.
We, therefore, propose two adapted methods, SM (single-model)-Fingerprinting and SM-Inversion, which leverage the existing approaches to the open-world setting.
Both return an anomaly score $s(x)$ that can be used to identify outliers by comparing it to a threshold $\tau$.
All samples with an anomaly score greater than $\tau$ are classified as outliers. 

\paragraph{SM-Fingerprinting}
\citet{marraGANsLeaveArtificial2019} propose to use the photo response non-uniformity (PRNU) to obtain noise residuals from images.
The fingerprint $f_G$ is computed by averaging the residuals over a sufficient amount of images generated by $G$.
Given an unknown image $x$, the anomaly score is defined as the negative inner product between $x$ and $f_G$, $s(x):= - \tilde{f}_G^\top \tilde{x}$, where $\tilde{f}_G$ and $\tilde{x}$ are the flattened and standardized versions of $\tilde{f}_G$ and $x$, respectively. In the following, we refer to this kind of SM-Fingerprinting as \finger. 

\paragraph{SM-Inversion}
For inversion-based methods, the anomaly score is naturally given by the distance between the original image $x \in \mathbb{R}^D$ and its best reconstruction $G(\hat{z})$.
It is defined as $s(x) := 1/{D} \Vert x  - G(\hat{z}) \Vert$, where $\Vert \cdot \Vert$ is either the $\ell_2$-norm \cite{albright2019source} or a distance based on a pre-trained Inception-V3\footnote{In our experiments on $64\times 64$ CelebA and LSUN, we omit resizing to $299\times299$, since this would introduce significant upsampling patterns.} \citep{Szegedy_2016_CVPR} as proposed by \citet{zhangAttributionDeepfakes2021}. 
In the following, we refer to the two variants as \inv{} and \incinv.
The reconstruction seed $\hat{z}$ is found by performing multiple gradient-based reconstruction attempts, each initialized on a different seed, and selecting the solution resulting in the smallest reconstruction distance.

\section{Final-Layer Inversion with Skip-Connections} \label{sec:skip}
The inversion scheme of \method{}, as written down in Section~\ref{sec:method}, is not suited for architectures that employ skip-connections. 
\emph{But} note that this does not mean that it cannot be adapted to that setting. To see that, consider the output of a residual network 
$
x = \sigma_L (G_L z_{\operatorname{last}} + z_{\operatorname{skip}})
$,
where $z_{\operatorname{last}}$ is the last hidden activation, $z_{\operatorname{skip}}$ is the part that is added from the skip-connection, $G_L$ is the matrix representing the last linear layer, and $\sigma_L$ is the last activation function. We can rewrite the above as 
$
x = \sigma_L ( G_L^\prime (z_{\operatorname{last}}, z_{\operatorname{skip}})^\top ) 
$,
where $G_L^{\prime}$ is an extended matrix padded by binary values, which takes as input the concatenated vector $(z_{\operatorname{last}}, z_{\operatorname{skip}})^\top$. 
Now, we are ready to apply the same techniques as proposed in Section~\ref{sec:method}. The difference is that, instead of only reconstructing $z_{\operatorname{last}}$, we additionally need to reconstruct $z_{\operatorname{skip}}$. 

While being a valuable and interesting extension (both, from a practical and theoretical point of view), we have not investigated this idea further due to the following technical considerations. First, architectures like StyleGAN employ skip-connections from multiple hidden layers to the output. This may result in prohibitively large  $z_{\operatorname{skip}}:= ( z_{\operatorname{skip}_1}, \dots, z_{\operatorname{skip}_h})^\top$, 
where $z_{\operatorname{skip}_j}$ for $j\in\{1, \dots, h\}$ denote the latents corresponding to the $j$th skip-layer. Secondly, the structural properties, such as the level of sparsity, in each hidden layer might be different. Therefore, the regularization term in~\eqref{eq:opti_problem} should treat latents from each hidden layer separately. 
While, in theory, we could use a separate regularization parameter $\lambda$ for each group of latents, the implementation involves nontrivial engineering decisions and hyperparameter choices.

From a theoretical point of view, we need to expand the results from \Cref{sec:theory} to convolutional operations padded by binary values. For instance, reconstruction guarantees for binary matrices can be derived using the robust null space property (see e.g.,~\citet{lotfi2020compressed}).

Another way of extending \method{} to skip-connections could build upon the results by~\citet{lei2019inverting}.
Their proposed linear program inverts a single layer, leading to solutions that satisfy recovery guarantees (see Theorem~4 in~\citet{lei2019inverting}) for $z_{\operatorname{last}}$ under assumptions closely related to the restricted isometry property (Definition~\ref{def:rip}). 

\section{Experimental Details}\label{sec:exp_details}

\subsection{Thresholding Method} \label{sec:thresholding_method}
We select the threshold $\tau$ in~\eqref{eq:deepsadscore} by fixing a false negative rate $\operatorname{fnr}$ and set $\tau$ as the $(1-\operatorname{fnr})$-quantile of $\{s(x)\}_{x_i \in \mathcal{X}_{\operatorname{val}}}$, where $s(x)$ is the anomaly score function and $\mathcal{X}_{\operatorname{val}}$ is a validation set consisting of $n_{\text{val}}$ inlier samples (i.e., generated by $G$). We specify $n_{\text{val}}$ for each experiment is the following sections. 
In all of our experiments, we set $\operatorname{fnr}=0.005$ or $\operatorname{fnr}=0.05$ in the case of the Stable Diffusion experiments. In the latter, we decide to set a higher $\operatorname{fnr}$ due to the smaller validation set size.

\subsection{Generative Models trained on CelebA and LSUN}\label{sec:trained_gen_models}
In the following, we describe the generative models utilized in Section~\ref{sec:exp_small} in full detail. All models were optimized using Adam with a batch size of $128$, a learning rate of $0.0002$ and---if not stated differently---parameters $\beta_1 =0.5, \, \beta_2=0.999$. The goal was not to train SOTA-models but rather to provide a diverse set of generative models as baselines for evaluating attribution performances in the setting described in Section~\ref{sec:problem_setup}. We visualize generated samples in Figure~\ref{fig:samples_small_models}.
All experiments on CelebA and LSUN utilize $n_{\text{tr}}=10\,000$ and $n_{\text{val}}=n_{\text{test}}=1\,000$ samples. 

\begin{figure}[ht]
  \centering
  \subfigure[CelebA]{\includegraphics[width=.45\linewidth]{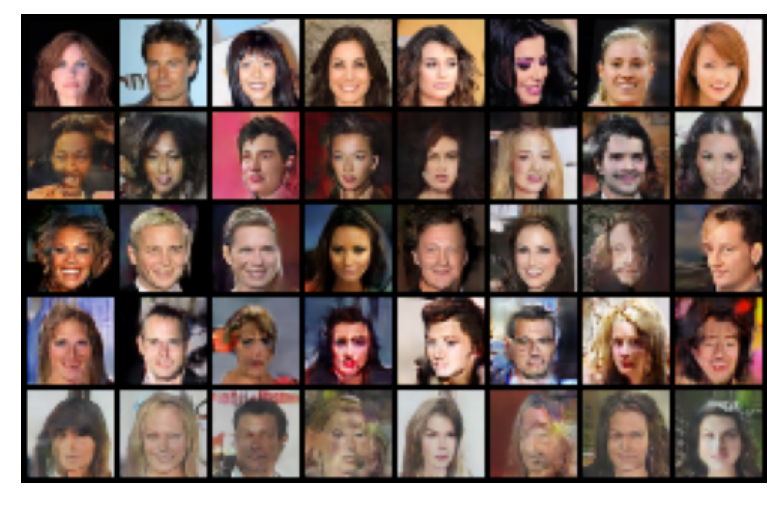}
  \label{fig:examples_celeba}}
 \subfigure[LSUN]{\includegraphics[width=.45\linewidth]{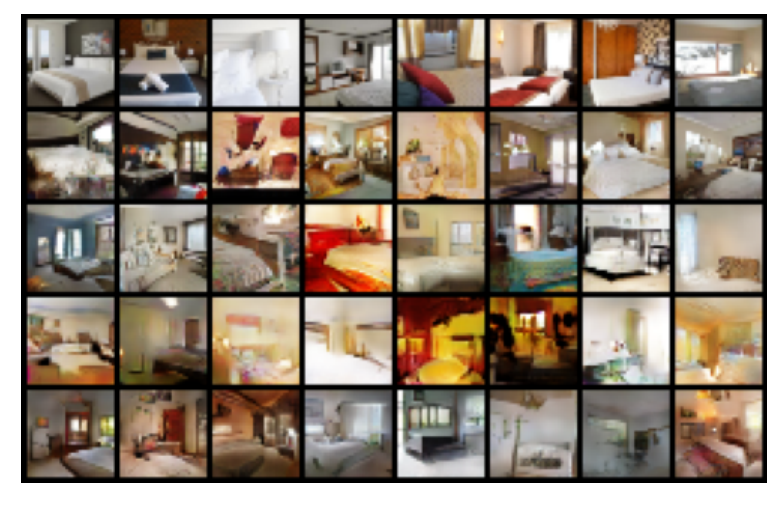}
  \label{fig:examples_lsun}}
\caption{Examples of generated images. The first row represents real images, and each further row represents samples from DCGAN, 
WGAN-GP, 
LSGAN, 
EBGAN, 
respectively.}
\label{fig:samples_small_models}
\end{figure}

\paragraph{DCGAN}
Our DCGAN Generator has $5$ layers, where the first 4 layers consist of a transposed convolution, followed by batchnorm and a ReLU-activation, and the last layer consists of a transposed convolution (without bias) and a $\operatorname{tanh}$-activation. The transposed convolutions in- and output-channel dimensions are $100 \rightarrow 512 \rightarrow 256 \rightarrow 128 \rightarrow 64 \rightarrow 3$ and have a kernel size of $4$, stride of $2$, and a padding of $1$ (besides the first transposed convolution, which has a padding of $0$). The discriminator has $5$ layers, where the first layer is a convolution (without bias) followed by a leaky-ReLU-activation, the next $3$ consist of a convolution, batchnorm, and leaky-ReLU-activation, and the last layer consists of a convolution and a final sigmoid-activation. We set the leaky-ReLU tuning parameter to $0.2$. The convolutions channel dimensions are $3 \rightarrow 64 \rightarrow 128 \rightarrow 256 \rightarrow 512 \rightarrow 1$, and we employed a kernel of size $4$, stride of $2$, and a padding of $1$ in all but the last layer, which has a padding of $0$. 
We used soft-labels of $0.1$ and $0.9$ and trained the generator in each iteration for $50$ and $10$ epochs for CelebA and LSUN, respectively. 

\paragraph{WGAN-GP}
We utilized the same architecture for the generator as the one in DCGAN. The discriminator is similar to the one in DCGAN but uses layernorm instead of batchnorm and has no sigmoid-activation at the end. We trained the generator every $5$th iteration and utilized a gradient penalty of $\lambda=10$. In that case, we used $\beta_1=0.0$ and $\beta_2=0.9$ as Adam parameters and trained for $200$ and $10$ epochs for CelebA and LSUN, respectively.

\paragraph{LSGAN}
The generator consists of a linear layer mapping from $100$ to $128 \times 8 \times 8$ dimensions, followed by 3 layers, each consisting of a nearest-neighbor upsampling ($\times 2$), a convolution, batchnorm, and a ReLU-activation. The final layer consists of a convolution, followed by a $\operatorname{tanh}$-activation. The convolution channel dimensions are $128 \rightarrow 128 \rightarrow 128 \rightarrow 64 \rightarrow 3$ and utilize a kernel of size $3$, stride of $1$, and a padding of $1$. The discriminator consists of $5$ layers discriminator blocks and a final linear layer. Each discriminator block contains a convolution, leaky-ReLU with parameter $0.2$, dropout with probability $0.25$, and batchnorm in all but the first layer. The convolution channel dimensions are $3 \rightarrow 16 \rightarrow 32 \rightarrow 64 \rightarrow 128 \rightarrow 128$ and have a kernel of size $3$, stride of $2$, and padding of $1$. The final linear layer maps from $512$ to $1$ dimensions. We trained the generator every $5$th iteration for $100$ and $10$ epochs for CelebA and LSUN, respectively.   

\paragraph{EBGAN}
The architecture of the generator is similar to the one of the DCGAN generator. The discriminator uses an encoder-decoder architecture. The encoder employs convolutions (without bias) mapping from $3 \rightarrow 64 \rightarrow 128 \rightarrow 256$ channel dimensions, each having a kernel of size $4$, stride of $2$, and padding of $1$. Between each convolution, we use batchnorm and a leaky-ReLU-activation with parameter $0.2$. The decoder has a similar structure but uses transposed convolutions instead of convolutions with channel dimensions $256 \rightarrow 128 \rightarrow 64 \rightarrow 3$ and ReLUs instead of leaky-ReLUs. We set the pull-away parameter to $0.1$ and trained the generator every $5$th iteration for $100$ and $5$ epochs for CelebA and LSUN, respectively.  

\subsection{Stable Diffusion}
We generate samples using the \textit{diffusers}\footnote{\url{https://github.com/huggingface/diffusers}} library which provides checkpoints for each version of Stable Diffusion.
We use a subset of the COCO2014 annotations~\cite{coco} as prompts and sample with default settings.
All generated images have a resolution of $512\times 512$ pixels.
We set $n_{\text{tr}}=2\,000, \; n_{\text{val}}=100, \; n_{\text{test}}=200$. 

\subsection{Style-based Generators}
Each style-based generative model comes with an official implementation, which offers pre-trained models on FFHQ for download. We used those implementations to generate $n_{\text{tr}}=10\,000,\; n_{\text{val}}=n_{\text{test}}=1\,000$ images of resolution $256\times 256$. 

\subsection{Medical Image Generators}
We sampled from the pretrained models provided by the \emph{medigan} library \cite{osuala2023medigan}. 
The model identifiers\footnote{The model identifiers specify the exact \emph{medigan}-models.} are 5, 6, and 12 for the DCGAN, WGAN-GP, and c-DCGAN, respectively. The resolution images have a resolution of $128 \times 128$ pixels.
We set $n_{\text{tr}}=10\,000,\; n_{\text{val}}=n_{\text{test}}=1\,000$. 

\subsection{Tabular Generators}
The KL-WGAN was trained for $500$ epochs as specified in \citet{song2020bridging} using the official repository\footnote{\url{https://github.com/ermongroup/f-wgan}}. CTGAN, TVAE, and Copula GAN were trained using the implementations and the default settings provided by the \emph{SDV} library~\cite{SDV}. We set $n_{\text{tr}}=100\,000$ and $n_{\text{val}}=n_{\text{test}}=10\,000$.

\subsection{Hyperparameters of Attribution Methods}
\paragraph{DeepSAD}
DeepSAD uses the $\ell_2$-distance to the center $c$ as anomaly score, see~\eqref{eq:deepsadscore}. 
We propose $3$ variants of DeepSAD for the problem of single-model attribution:
\begin{enumerate}
    \item \raw{}, i.e., DeepSAD trained on downsampled versions of the images. 
    In the experiments involving the architectures from Section~\ref{sec:trained_gen_models}, we downsample to $32 \times 32$ images using nearest neighbors interpolation. In the Stable Diffusion experiments, we center-crop to $128 \times 128$ images due to the large downsampling factor. 
    In the style-based experiments, we downsample to $128 \times 128$ using nearest neighbors interpolation. 
    When the image is downsampled to $32\times 32$, we set $\phi$ as the 3-layer LeNet that was used on CIFAR-10 in~\citet{ruff2019deep}, otherwise we use a 4-layer version of the LeNet-architecture. 
    In the tabular experiments, we set $\phi$ to be a standard feed-forward network with hidden dimensions $512 \rightarrow 1024 \rightarrow 1024 \rightarrow 512 \rightarrow 256 \rightarrow 128$.
    
    \item \dct{}, i.e., DeepSAD on the downsampled DCT features of the images. We obtain those features by computing the 2-dimensional DCT on each channel, adding a small constant $\eps=1e^{-10}$ for numerical stability, and taking the $\operatorname{log}$. 
    To avoid distorting the DCT-spectra, we reduce the image size by taking center-crops. We use the same architecture $\phi$ as in \raw{}. 
    \item \method{}, i.e., DeepSAD on activations result from the optimization problem~\eqref{eq:opti_problem}. We solved the optimization problem using FISTA \citep{beck2009fast} with the regularization parameter $\lambda$ set to $0.0005$ for the experiments involving the models from Section~\ref{sec:trained_gen_models}, to $0.00001$ for the Stable Diffusion experiments, to $0.001$ for the medical image experiments, and to $0.0005$ for the Redwine and Whitewine experiments. 
    We found that reducing the feature dimensions via max-pooling performed best. The spatial dimensions are similar to the setup in \raw. 
    In the experiments using the architectures from Section~\ref{sec:trained_gen_models}, we end up with $64 \times 32 \times 32$-dimensional features, where the first dimension represents the channel dimension. Hence, we modified the channel dimensions of the convolutions used in $\phi$ to $64 \rightarrow 128 \rightarrow 256 \rightarrow 128$. We use the same $\phi$ for the medical image experiments. 
    In the Stable Diffusion experiments, we account for the smaller amount of training data by selecting only 3---out of a total of 128---channels. We do this by choosing the channels that differ the most on average between the in- and out-of-distribution classes. We use the exact same $\phi$ as in \raw{} and \dct{}. 
    For the tabular experiments, we choose the same architecture as in \raw{} with an adapted input dimension. 
     
\end{enumerate}
In the experiments involving the models from Section~\ref{sec:trained_gen_models} and the medical images, we trained $\phi$ using the adam optimizer for $50$ epochs with a learning rate of $0.0005$, which we reduce to $5e^{-5}$ after $25$ epochs, and a weight-decay of $0.5e^{-6}$. In the Stable Diffusion, StyleGAN, and tabular experiments, we increase the number of epochs to $100$ and reduce the learning rate after $25$ and $50$ epochs to $0.5e^{-6}$ and $0.5e^{-7}$, respectively. 

\paragraph{SM-Fingerprinting}
To compute the fingerprint $f_G$ we take the average of $n_{\text{tr}}$ samples from $G$.

\paragraph{SM-Inversion}
For each image $x$ we perform $10$ reconstruction attempts with different initializations.
We optimize using Adam \citep{kingma2015} with a learning rate of $0.1$ and $1\,000$ optimization steps. Since the inversion methods do not require training, we compensate the advantage by replacing the validation set with the bigger training set ($n_{\operatorname{val}}<n_{\operatorname{tr}} $ in all our experiments). That is, we use $n_{\text{tr}}$ samples to find the threshold $\tau$ described in Section~\ref{sec:thresholding_method}.

\section{Additional Experiments}
\label{sec:app_smallexp}

\paragraph{Activations are distinctive across Generative Models}
The motivational examples in Section~\ref{sec:motivationalExample}, as well as our theoretical derivation of \method{} in Section~\ref{sec:method}, are based on the assumption that the activations, and therefore also the intrinsic computations involved in the generative process, differ from model to model. In the following, we provide qualitative evidence for that by presenting average activations that arise from the forward pass in various different models. For fully distinct models, such as those in Figure~\ref{fig:real_activations_celeba} and Figure~\ref{fig:real_activations_lsun}, we can see a clear difference in the activation pattern. The difference is less visible in Figure~\ref{fig:real_activations_coco}, which shows the activations of the investigated Stable Diffusion models. Note that those models share much more similarities: v1-1 and v1-4 share the same autoencoder. The same applies to v1-1+, v2, and v2-1. Hence, we can see distinguishable activation patterns among models using a different autoencoder, which are much more subtle if the models are sharing the same autoencoder.

\begin{figure}[ht]
\begin{center}
\centerline{\includegraphics[width=\columnwidth]{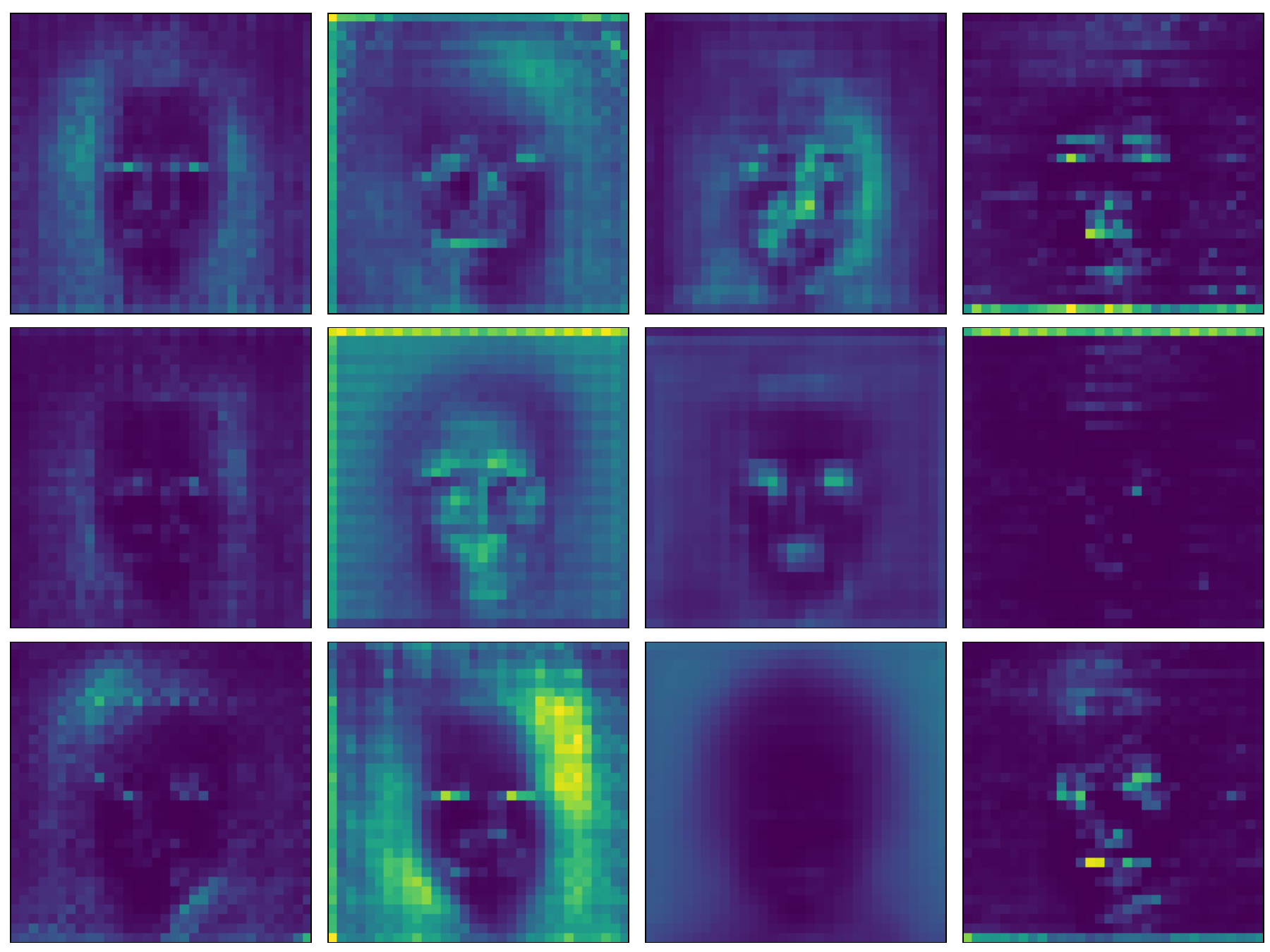}}
\caption{Average activations of DCGAN, WGAN-GP, LSGAN, EBGAN (from left to right) trained on CelebA. The rows depict the average last activation over the 6th, 22th, and 29th activation channel.
}
\label{fig:real_activations_celeba}
\end{center}
\end{figure}

\begin{figure}[ht]
\begin{center}
\centerline{\includegraphics[width=\columnwidth]{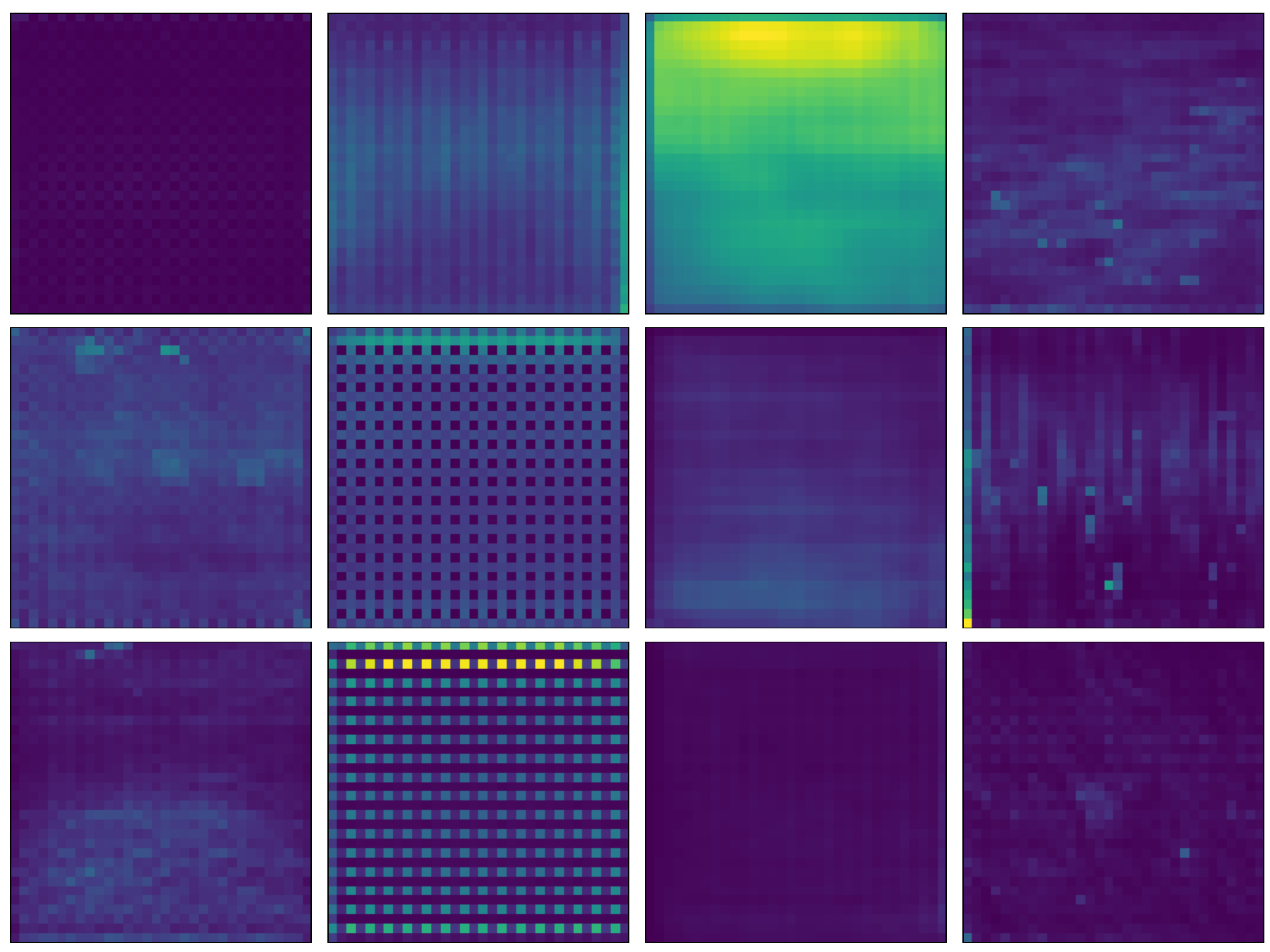}}
\caption{
Average activations of DCGAN, WGAN-GP, LSGAN, EBGAN (from left to right) trained on LSUN. The rows depict the average last activation over the 7th, 29th, and 30th activation channel.
}
\label{fig:real_activations_lsun}
\end{center}
\end{figure}

\begin{figure}[ht]
\begin{center}
\centerline{\includegraphics[width=\columnwidth]{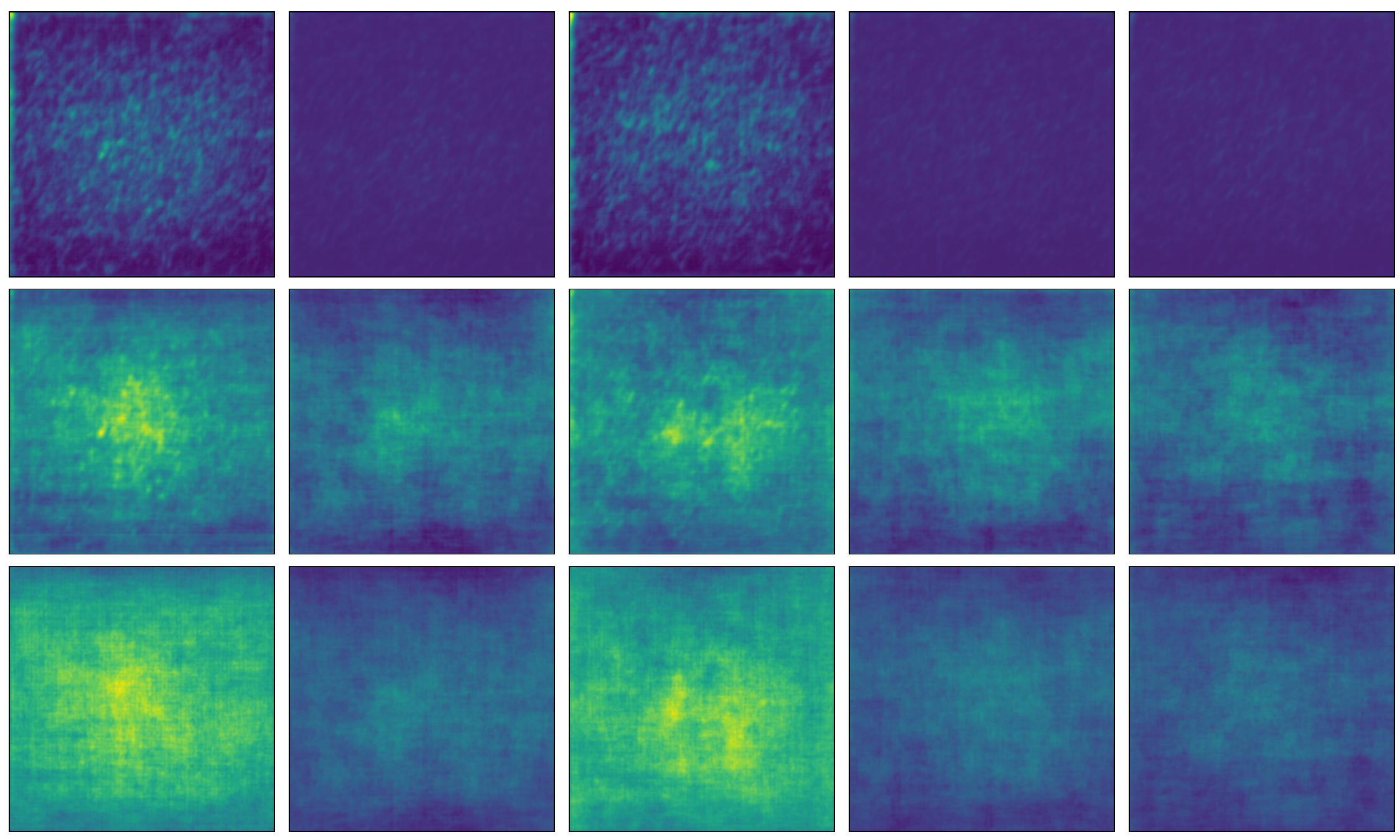}}
\caption{
Average activations of Stable Diffusion v1-1, v1-1+, v1-4, v2, v2-1 (from left to right). The rows depict the average last activation over the 60th, 78th, and 86th activation channel.
}
\label{fig:real_activations_coco}
\end{center}
\end{figure}

\paragraph{Feature Extraction} 
We visualize the average images (Figure~\ref{fig:avg_image_celeba}), that is, the average of the input of \raw{}, and the average DCT-features (Figure~\ref{fig:avg_dct_celeba}), that is, the average of the input of \dct{}. It is seen that the averaged images look very similar. Qualitatively, \method{} improves on that as demonstrated in Figure~\ref{fig:features_dcgan} in the main paper, and once more, in Figure~\ref{fig:features_dcgan_other}.
Furthermore, the features remain distinguishable even when varying the regularization parameter~$\lambda$ as visualized in Figure~\ref{fig:varying_lambda_celeba} and Figure~\ref{fig:varying_lambda_lsun}. 
For illustration purposes, we select the 3 channel dimensions (out of 64 total activation dimensions in DCGAN), which differ the most during training, i.e., we choose the channel that maximizes the channel-wise average distance between reconstructed DCGAN and real images. 
Notably, the corresponding average single-model attribution accuracies are $99.11, \, 99.10,\, 99.44,\, 86.84$ in CelebA and $99.99,\, 96.58,\, 99.63,\, 85.90$ for $\lambda\in \{ 0.0001, 0.0005, 0.001, 0.1\}$. Therefore, the performance remains stable in $\lambda$.

\begin{figure}[ht]
\begin{center}
\centerline{\includegraphics[width=\columnwidth]{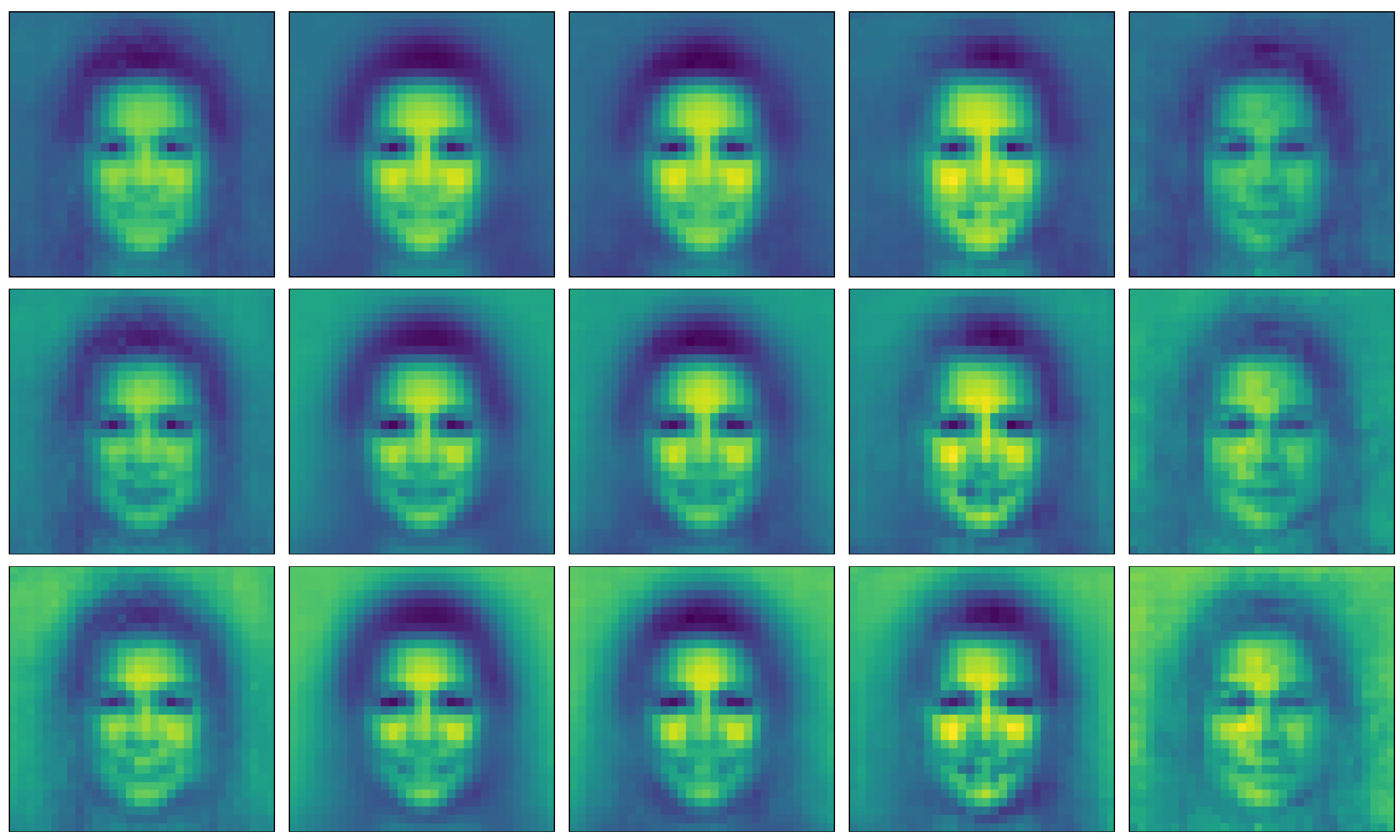}}
\caption{Average image on CelebA for real, DCGAN, WGAN-GP, LSGAN, and EBGAN samples (from left to right). Each row represents one color channel.}
\label{fig:avg_image_celeba}
\end{center}
\end{figure}

\begin{figure}[ht]
\begin{center}
\centerline{\includegraphics[width=\columnwidth]{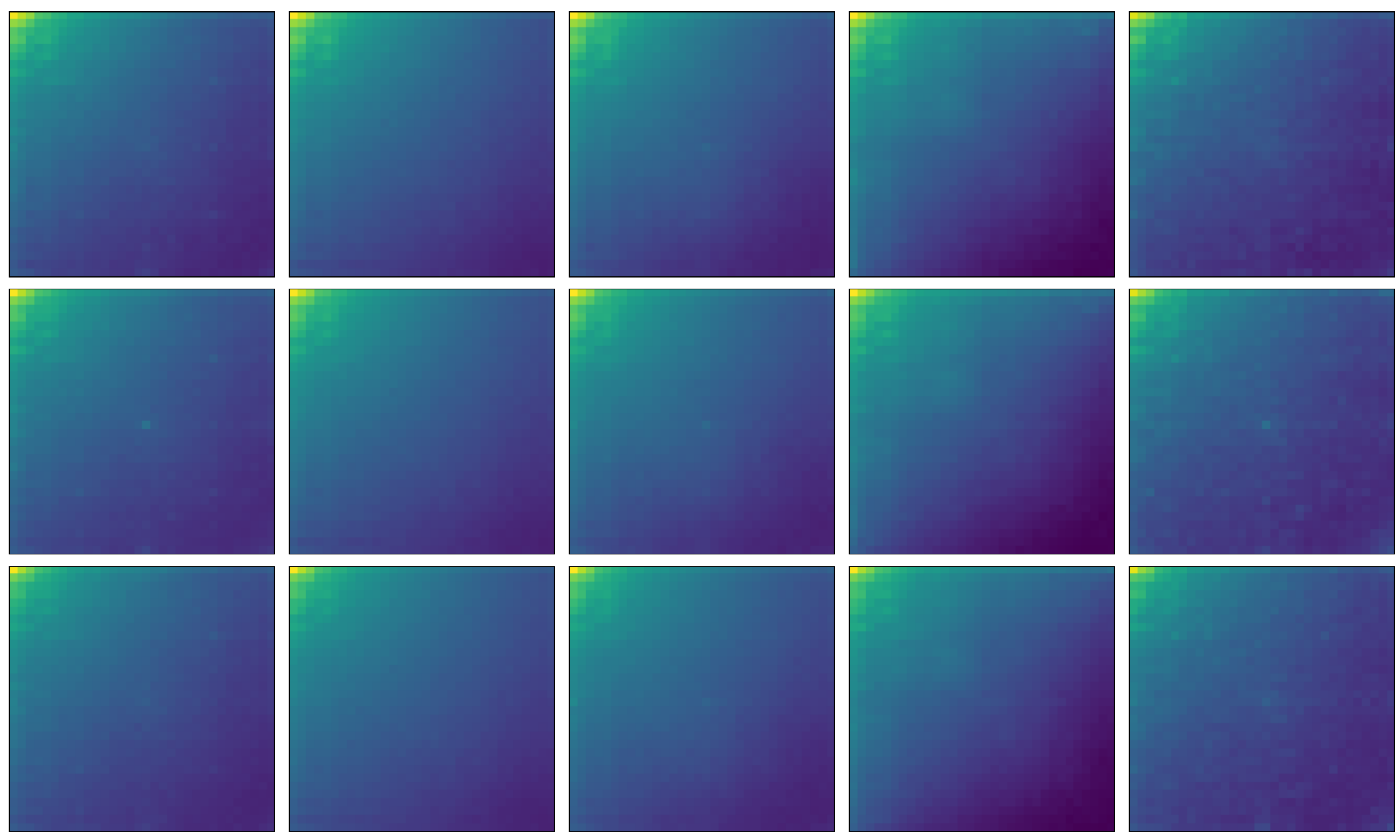}}
\caption{Average DCT-features on CelebA for real, DCGAN, WGAN-GP, LSGAN, and EBGAN samples (from left to right). Each row represents one color channel.}
\label{fig:avg_dct_celeba}
\end{center}
\end{figure}

\begin{figure*}[t]
\centering
\subfigure
{\includegraphics[width=.13\linewidth]{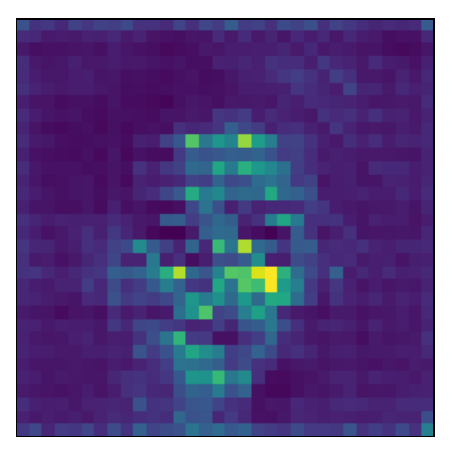}}
 \subfigure 
 {\includegraphics[width=.13\linewidth]{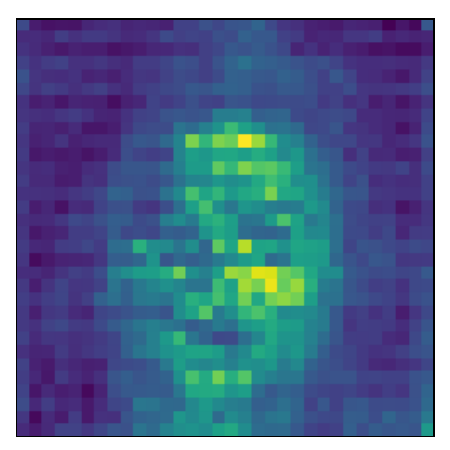}}
  \subfigure 
  {\includegraphics[width=.13\linewidth]{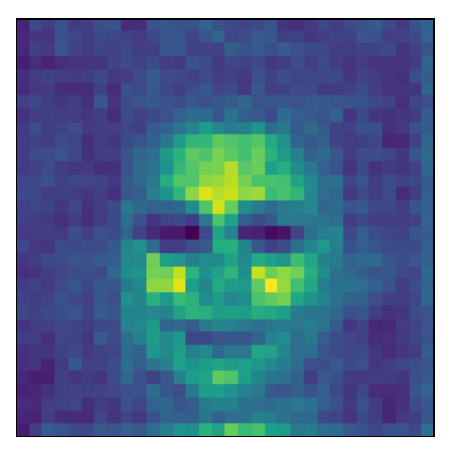}}
  \subfigure
  { \includegraphics[width=.13\linewidth]{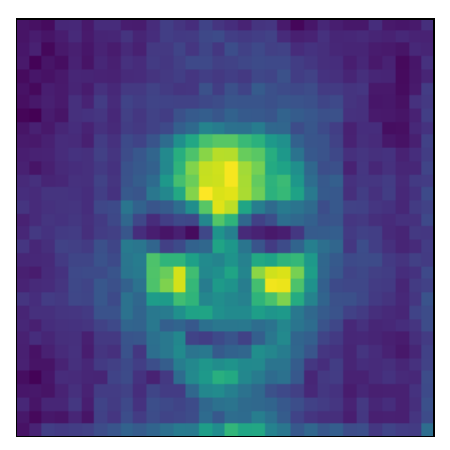}}
  \subfigure
  {\includegraphics[width=.13\linewidth]{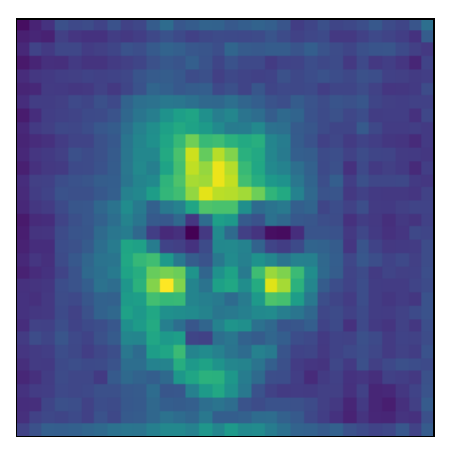}}
  \subfigure
  {\includegraphics[width=.13\linewidth]{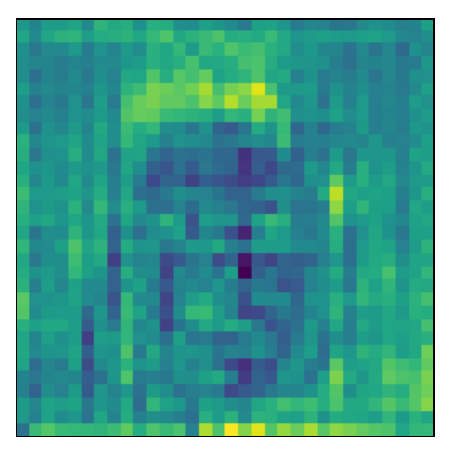}}
  
\caption{Cherry-picked channel dimension $c$ of the average reconstructed features according to (\ref{eq:opti_problem}) when $G$ is a DCGAN. The left-most figure shows the average activation $\bar{z}_c$ over channel $c$, the remaining figures show the average feature taken over DCGAN, real, WGAN-GP, LSGAN, and EBGAN samples, respectively.
}
\label{fig:features_dcgan_other}
\end{figure*}

\begin{figure*}[t]
\centering
\subfigure
{\includegraphics[width=.58\linewidth]{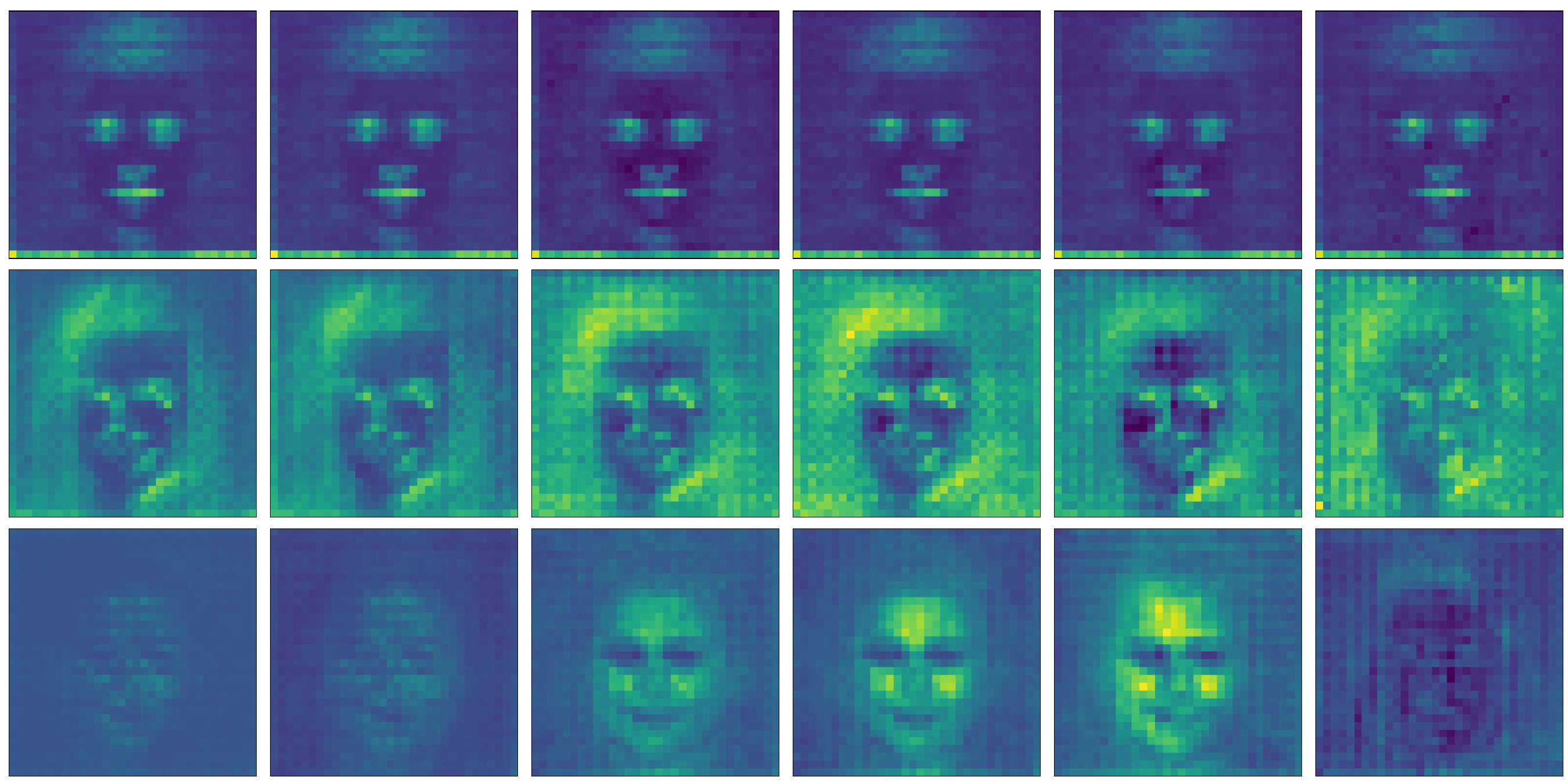}}
\subfigure
{\includegraphics[width=.58\linewidth]{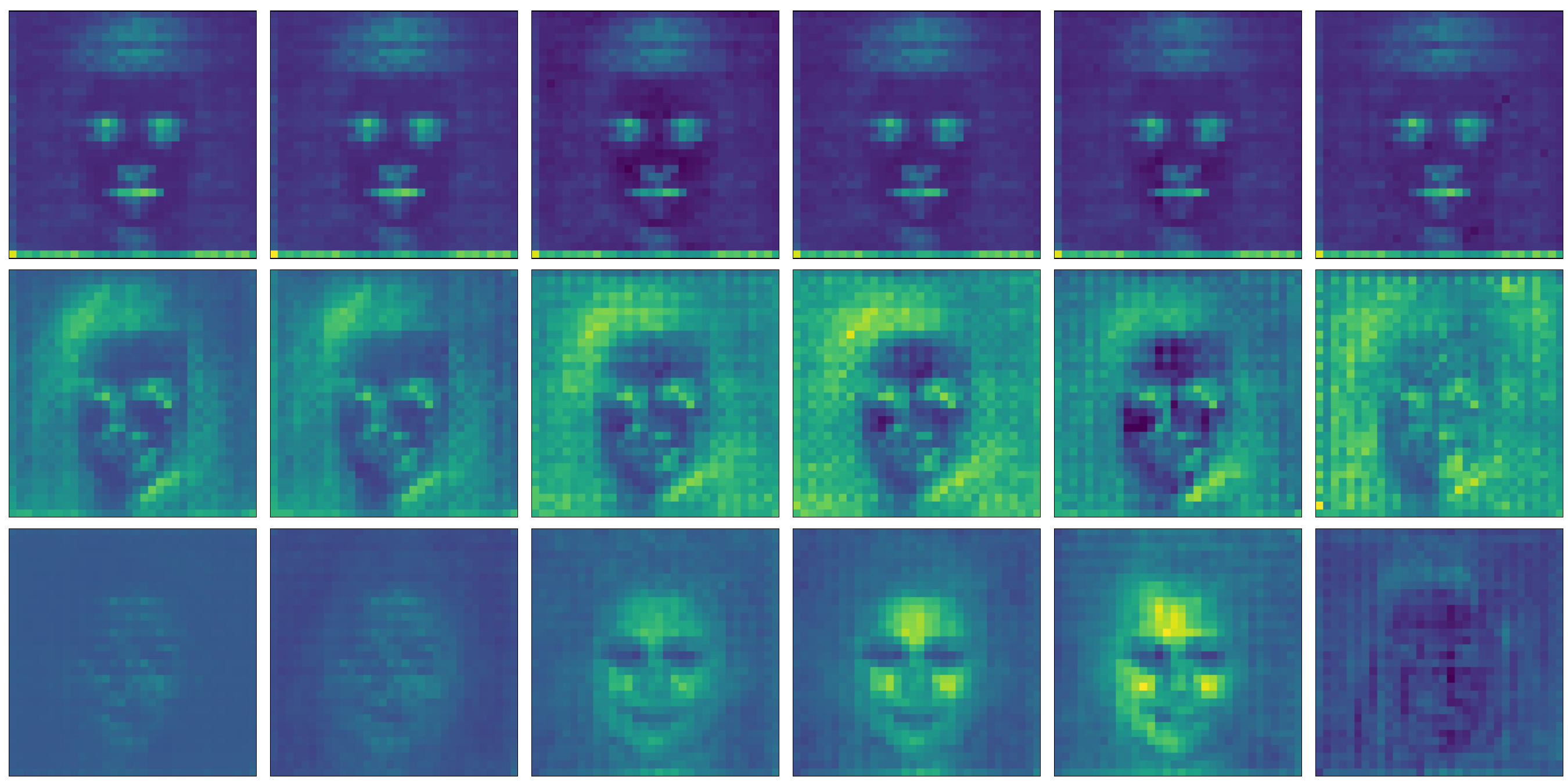}}
\subfigure
{\includegraphics[width=.58\linewidth]{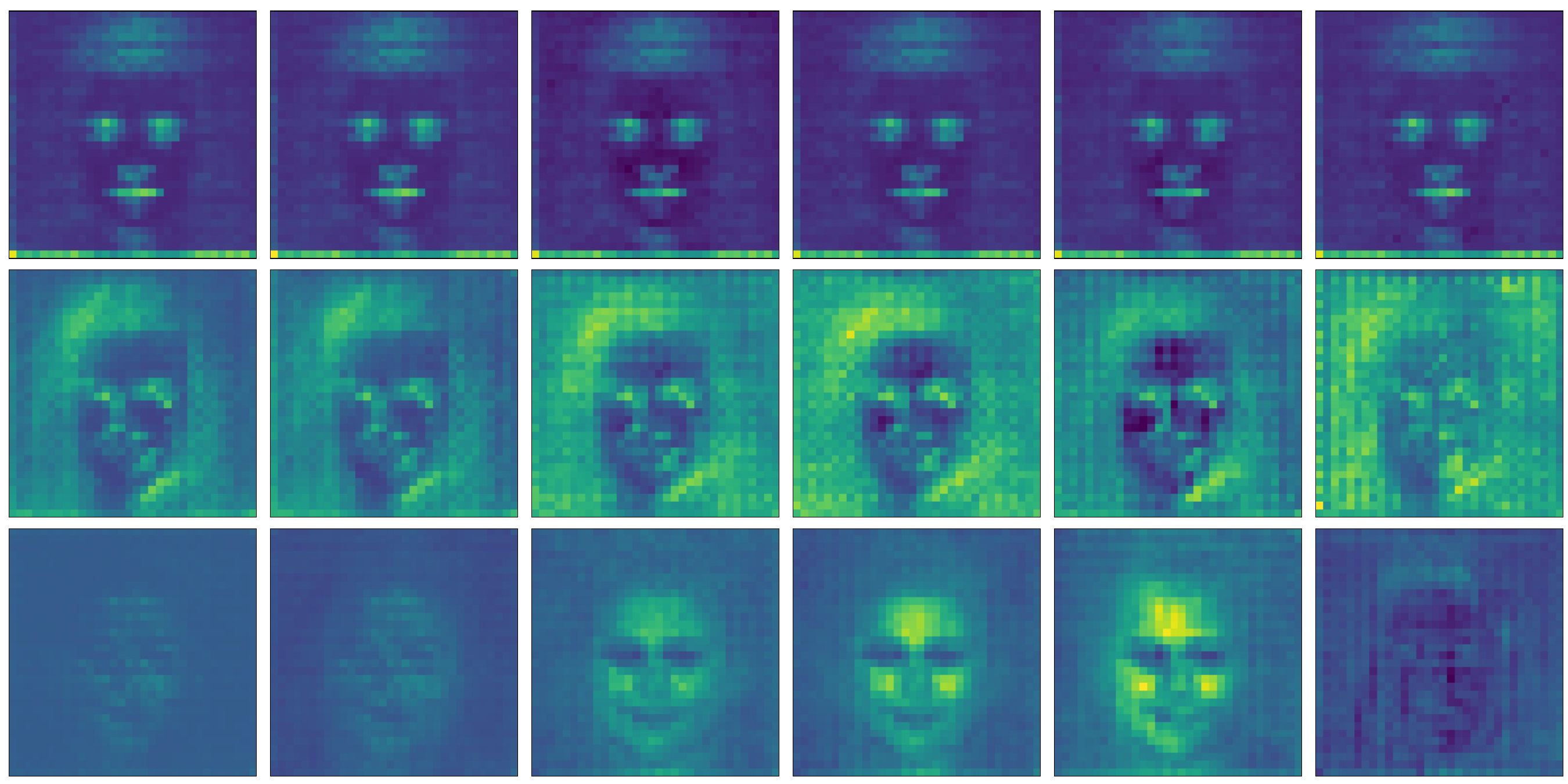}}
\subfigure
{\includegraphics[width=.58\linewidth]{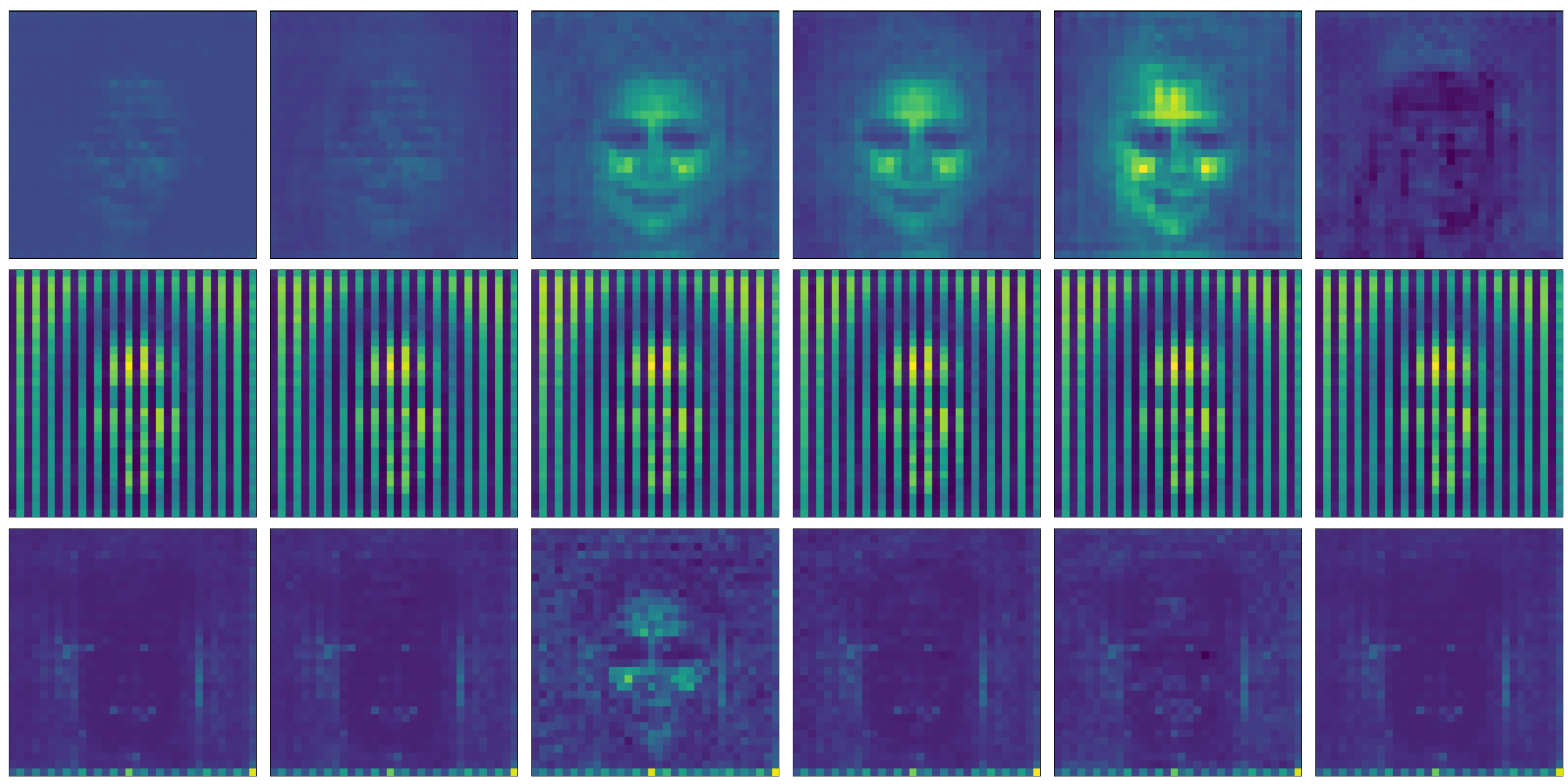}}
\caption{
Average reconstructed features according to~\eqref{eq:opti_problem} when $G$ is a DCGAN trained on CelebA. Each subfigure, from top to bottom, represents the reconstructed features for a different regularization parameter $\lambda\in\{ 0.0001, \,0.0005,\, 0.001, \, 0.1\}$, respectively. 
Every line represents one activation channel, and the columns (from left to right) show averages taken over real activations and reconstructed DCGAN, real, WGAN-GP, LSGAN, and EBGAN activations, respectively. 
}
\label{fig:varying_lambda_celeba}
\end{figure*}

\begin{figure*}[t]
\centering
\subfigure
{\includegraphics[width=.58\linewidth]{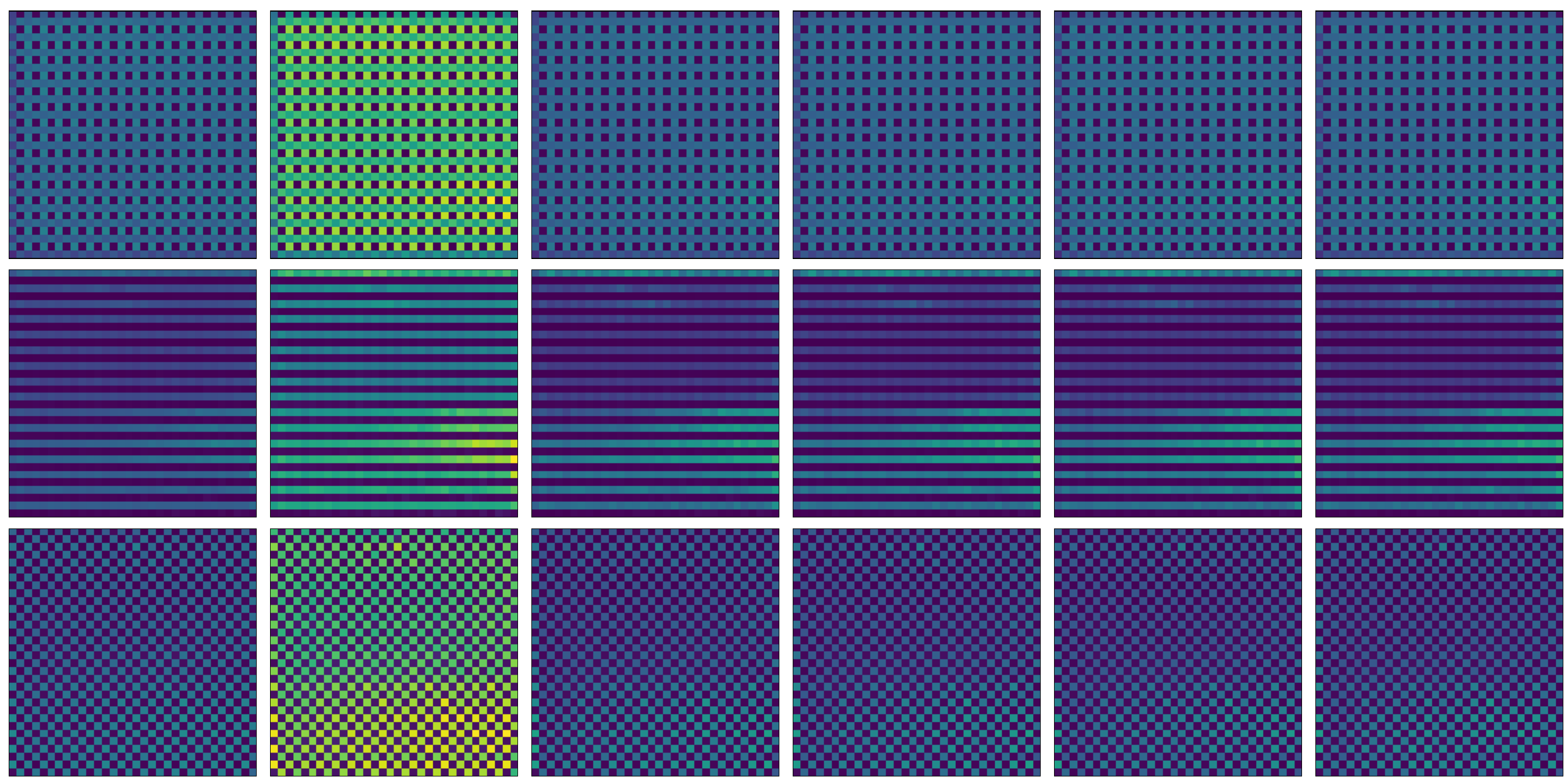}}
\subfigure
{\includegraphics[width=.58\linewidth]{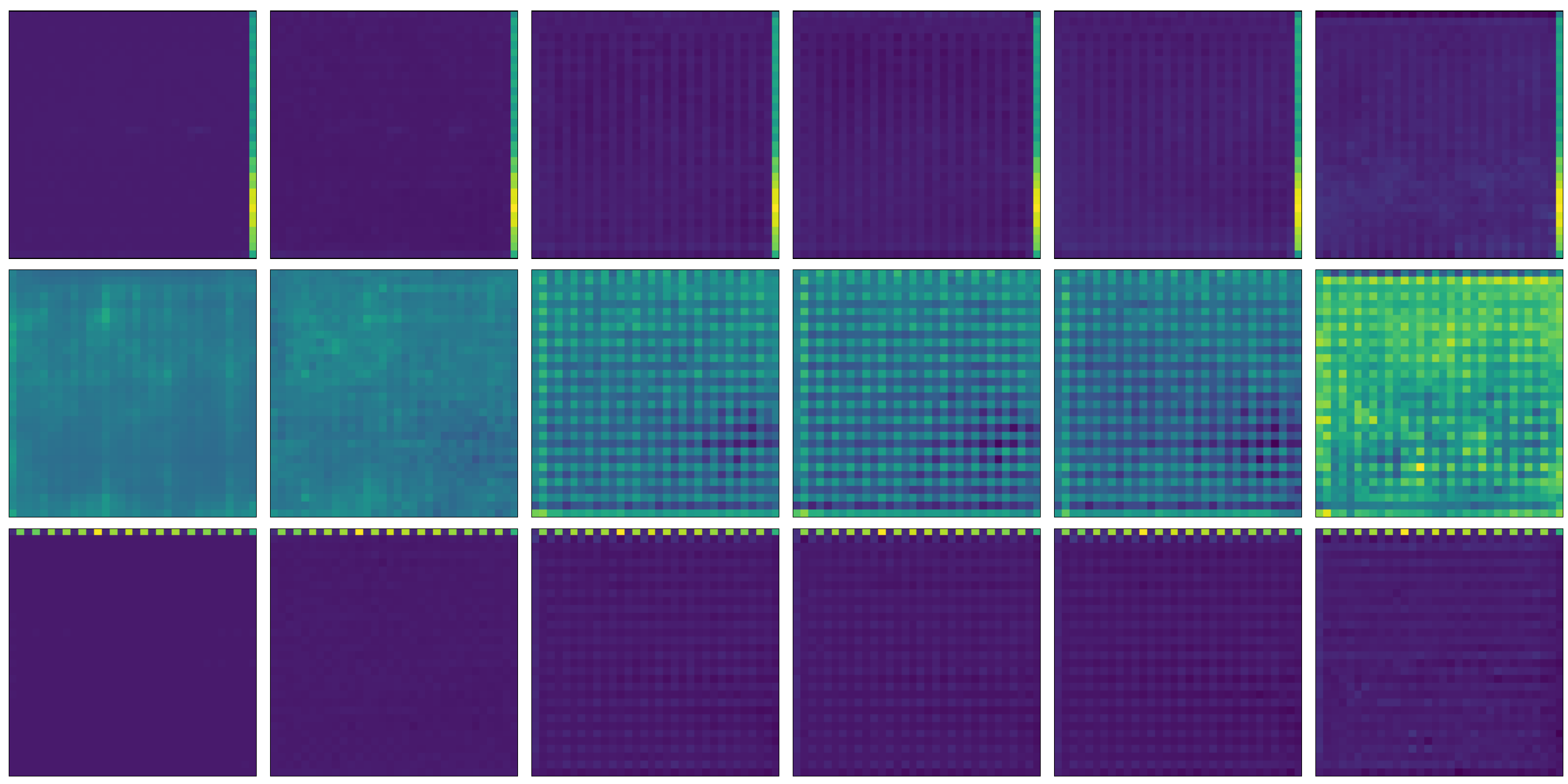}}
\subfigure
{\includegraphics[width=.58\linewidth]{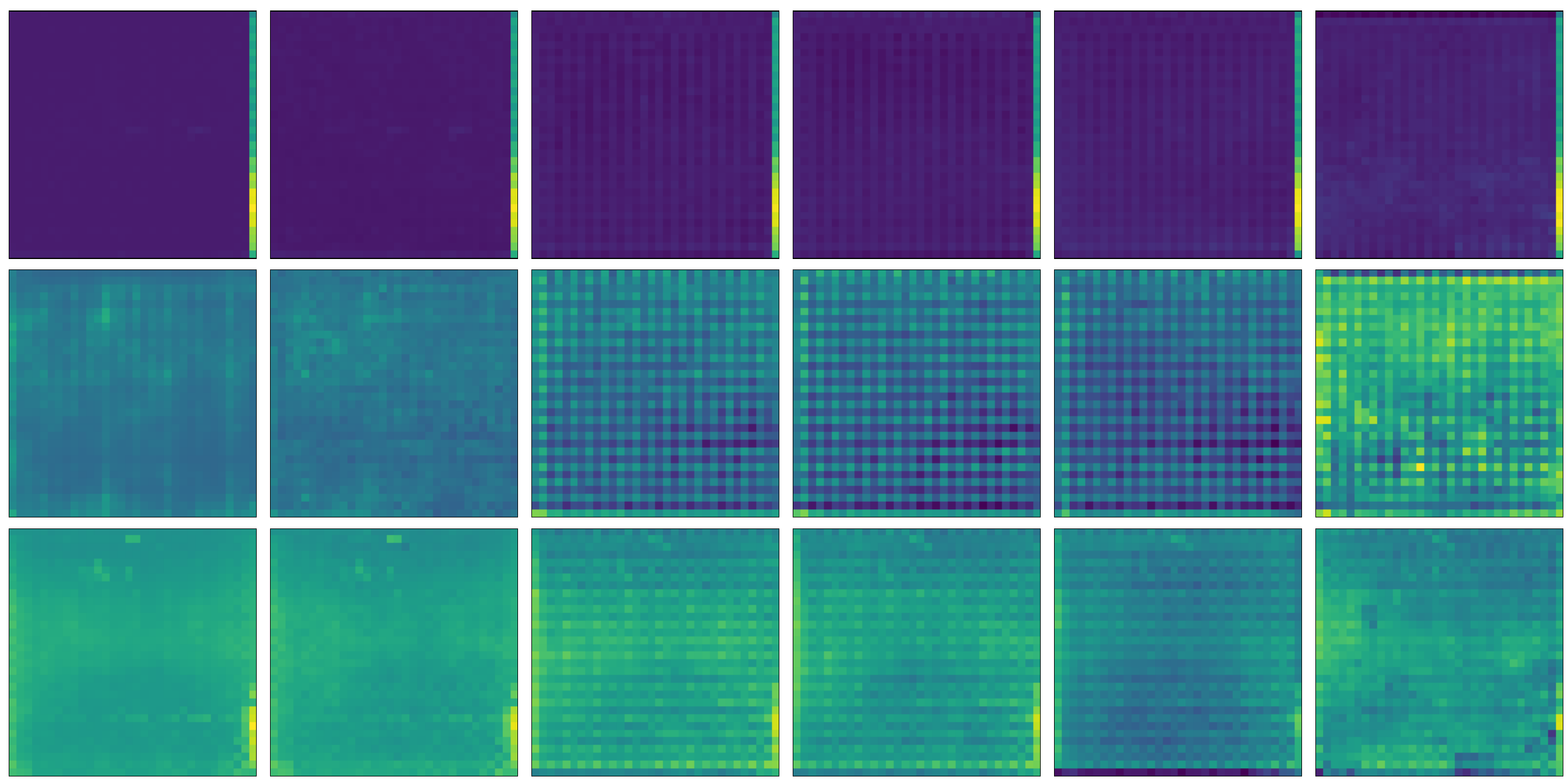}}
\subfigure
{\includegraphics[width=.58\linewidth]{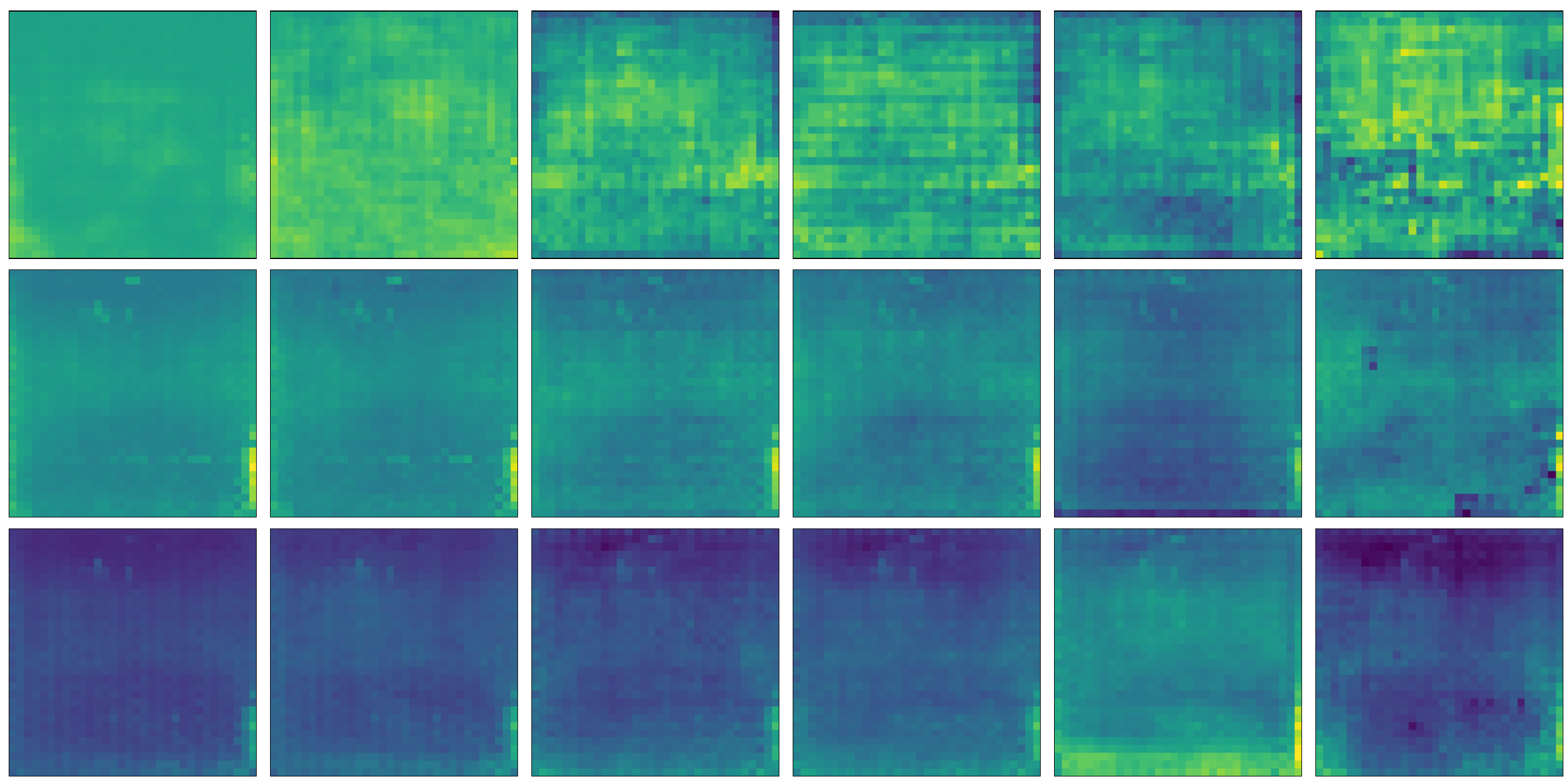}}
\caption{Average reconstructed features according to~\eqref{eq:opti_problem} when $G$ is a DCGAN trained on LSUN. Each subfigure, from top to bottom, represents the reconstructed features for a different regularization parameter $\lambda\in\{ 0.0001, \,0.0005,\, 0.001, \, 0.1\}$, respectively. 
Every line represents one activation channel, and the columns (from left to right) show averages taken over real activations and reconstructed DCGAN, real, WGAN-GP, LSGAN, and EBGAN activations, respectively.}
\label{fig:varying_lambda_lsun}
\end{figure*}

\paragraph{Single-Model Attribution}
We provide a more detailed view of the experiments from Table~\ref{tab:attribution_small} by showing each model's attribution accuracies against all individual models in Table~\ref{tab:confusion}. 
Furthermore, we present the corresponding AUCs in Table~\ref{tab:aucs}. However, we want to highlight that those results should be taken with a grain of salt since AUC is measuring an overly optimistic potential of a classifier.
In practice, we rely on a specific choice of threshold $\tau$ which might deteriorate the actual attribution performance. Choosing that threshold becomes even more crucial in the open-world setting, in which we do not have any access to the other models we test against, making the calibration of the threshold using only validation data considerably more difficult. 

We report standard deviations to the experiments from the main paper in Table~\ref{tab:attribution_small_stds},~\ref{table:attr_seed_stds}, and~\ref{tab:attribution_bigmodels_stds}. 
\begin{table*}[t]
\centering
\small
\begin{tabular}{lcccccc@{\hspace{1mm}}ccccc}
\toprule
 & \multicolumn{5}{c}{CelebA} && \multicolumn{5}{c}{LSUN}\\ 
 \cmidrule{2- 6} \cmidrule{8-12}
 & Real & DCGAN & WGAN-GP & LSGAN & EBGAN && Real & DCGAN & WGANGP & LSGAN & EBGAN \\
\midrule
\underline{\finger} \\ 
DCGAN &  64.82  &    - &  67.64 &  66.18 &  60.61 && 70.06 &     -&   69.34 &  70.74 &  69.63 \\ 
WGAN-GP &  50.31 &  49.97  &    -  & 50.31  & 50.13 && 53.07  & 52.92 &     - &  52.52  & 53.31 \\ 
LSGAN &    53.70  & 52.18  & 54.01   &   - &  54.37 &&  56.28 &  57.31&   56.92  &    - &  54.67 \\ 
EBGAN   & 96.28  & 92.19 &  96.56 &  98.07  &    - &&  75.10 &  74.68 &  74.48 &  74.07 &     -  \\ 
\midrule{}
\underline{\inv} \\
DCGAN & 99.55   &   -  & 99.45 &  99.55 &  95.55 && 51.20 &     -  &   52.00 &  51.05  &  51.50 \\ 
WGAN-GP&    99.80 &   99.80  &    -  &  99.80   & 99.80 &&  50.15 &  49.95  &    -  & 50.20   & 50.50 \\ 
LSGAN &    71.40 &  59.65  &  67.90 &     - &  51.25 &&  67.20 &   64.50 &   65.30 &     - &  64.85 \\ 
EBGAN &    99.80  &  99.80 &   99.80 &   99.80  &    - &&  54.80&   53.25 &  55.15  &  54.20 &     - \\ 
\midrule{}
\underline{\incinv} \\
DCGAN  &  51.45  &    -  &  51.10&   51.25  &  51.10 &&  50.80&      - &  50.65  &  50.50  &  50.50 \\ 
WGAN-GP &  50.15 &   50.10 &     - &  50.25&   50.15 && 50.65 &   50.30&      -  & 50.15  &  50.30 \\ 
LSGAN  &   53.30 &   52.50   & 52.90&      - &  51.85 && 52.15&   51.05  &  51.10  &    -  & 52.15 \\ 
EBGAN  &  79.65 &  73.25 &  79.25  &  81.70  &    - && 52.30&    51.90  & 52.35 &  51.95  &    - \\ 
\midrule{}
\underline{\raw} \\ 
DCGAN  &  98.23 &     - &  98.03  & 96.92 &  86.63 &&  69.30    &  - &  66.61&   62.15 &  70.52 \\ 
WGAN-GP  & 65.45 &  60.18  &    - &  59.78 &  51.08 && 52.96 &   51.20 &     - &  52.87  & 52.41 \\ 
LSGAN &   99.65&    99.50 &  99.65&      - &  99.33 && 98.23  & 97.98&   98.33&      -  & 98.33 \\ 
EBGAN  &  99.65 &  99.26 &  99.59  & 99.67 &     - &&  94.53&    94.10 & 93.79 &   93.40 &     - \\ 
\midrule 
\underline{\dct} \\ 
DCGAN  &  99.81  &    -  &  80.20  & 95.44&  49.86 && 99.64 &     - &  74.98 &  96.96&   50.34 \\ 
WGAN-GP&   99.68 &  49.71  &    -  &  60.70 &  49.71 && 99.04 &  49.71 &     -  & 73.91 &  49.73 \\ 
LSGAN  &  99.70 &  98.74 &  96.34 &     -&   92.39 && 97.95  & 52.44 &  67.15    &  -  & 54.18 \\ 
EBGAN &   99.66  & 70.39&   98.69 &  99.59   &   - && 99.65&   65.93  & 96.45  & 99.59  &    - \\ 
\midrule 
\underline{\method} \\ 
DCGAN  &  99.72  &    - & 99.72  & 99.72  & 98.21 &&  99.69 &     -  & 99.69 &  99.66 &  91.95 \\ 
WGAN-GP&   99.83  & 99.15  &    -   & 98.80 &  99.83 && 99.65&   98.71 &     - &  99.27 &  93.29 \\ 
LSGAN &   99.68  & 99.38&   97.95&      - &  98.77 &&  99.57 &  98.86 &    96.00 &     - &  98.32 \\ 
EBGAN &   99.68 &  99.68  & 99.68  & 99.68 &     - && 99.68 &  98.49 &  99.67 &  99.68   &   - \\ 
\bottomrule
\end{tabular}
\caption{Detailed view of the individual model attribution accuracies leading to the results in Table~\ref{tab:attribution_small}. Each score corresponds to an average over five runs. Note, due to their excessive computational load, we do not repeat the inversion methods multiple times.}
\label{tab:confusion}
\end{table*}

\begin{table*}[ht!]
\centering
\begin{tabular}{lccccrcccc}
\toprule
 & \multicolumn{4}{c}{CelebA} & & \multicolumn{4}{c}{LSUN} \\
 \cmidrule{2-5} \cmidrule{7-10}
 & DCGAN & WGAN-GP & LSGAN & EBGAN && DCGAN & WGANGP & LSGAN & EBGAN \\
\midrule
\finger &   95.45  &   79.09  &  87.94   & 99.84  &&  96.99  &   79.09    &86.73 &   98.36\\
\inv    &   99.42   & \textbf{100.00}   & 99.25 &  \textbf{100.00} &&   71.64   &  80.36 &   99.33    &97.66\\
\incinv      &    83.14  &   67.39   & 85.66  &  99.25 &&   61.12    & 55.71   & 74.31 &   92.28\\
\raw         &    99.33  &   85.03  &  \textbf{99.99} &   99.96 &&   93.82  &   74.65   & 99.83 &   99.53\\
\dct       &      82.85    & 75.69  &  99.80 &   97.47  &&  84.89   &  79.22  &  82.91 &   94.31 \\
\method{}    &         \textbf{99.91} &    99.96&   99.91  & \textbf{100.00}  &&  \textbf{99.81}    & \textbf{99.49}&    \textbf{99.73}  &  \textbf{99.98}\\
\bottomrule
\end{tabular}
\caption{Averaged area under curve (AUC) values corresponding to the results in Table~\ref{tab:attribution_small}.}
\label{tab:aucs}
\end{table*}

\begin{table*}
\small
\centering
\begin{tabular}{lccccrcccc}
\toprule
 & \multicolumn{4}{c}{CelebA} & & \multicolumn{4}{c}{LSUN} \\
 \cmidrule{2-5} \cmidrule{7-10}
 & DCGAN & WGAN-GP & LSGAN & EBGAN && DCGAN & WGAN-GP & LSGAN & EBGAN \\
\midrule

\finger & 2.95 &	0.26 &	0.95 &	2.81 &&	1.60 &	0.87& 	1.65 &	4.16 \\ 
\inv &	1.98 	&0.00& 	9.00 &	0.00 &&	0.42 	&0.23 &	1.20 &	0.83 \\
\incinv &	0.17 	&0.06 	&0.62 	&3.64 	&&0.14 	&0.21& 	0.62 &	0.23\\
\raw &	5.53 &	5.60 &	0.20 	&0.21 &&	3.79 	&1.08 &	0.34 &	1.66 \\
\dct 	&20.09 &	21.11 &	2.93 	&12.95 	&&20.43 &	21.04 &	18.76 	&14.57 \\
\method &	0.73 &	0.56 	&0.92 	&0.07 	&&4.48 	&2.79& 	1.54 &	0.58 \\
\bottomrule
\end{tabular}
\caption{Standard deviation corresponding to the results from Table~\ref{tab:attribution_small}. For illustration purposes, we scale all values by $10^2$.}
\label{tab:attribution_small_stds}
\end{table*}

\begin{table*}
\small
\centering
\begin{tabular}{lccccrcccc}
\toprule
 & \multicolumn{4}{c}{CelebA} & & \multicolumn{4}{c}{LSUN} \\
\cmidrule{2-5} \cmidrule{7-10}
 & DCGAN & WGAN-GP & LSGAN & EBGAN && DCGAN & WGANGP & LSGAN & EBGAN \\
\midrule
\raw 	&2.21 	&3.11 &	1.23 &	15.72&& 	3.29 &	0.88 &	5.04 &	3.73 \\ 
\dct 	&0.27 &	0.14 	&4.31& 	4.13 &&	0.19 	&2.33& 	1.05 &	1.42 \\
\method  	&0.61 	&5.49 	&3.85 &	0.11 &&	2.32 &	3.77 &	5.41 &	9.61 \\ 
\bottomrule
\end{tabular}
\caption{Standard deviation corresponding to the results from Table~\ref{table:attr_seed}. For illustration purposes, we scale all values by $10^2$.}
\label{table:attr_seed_stds}
\end{table*}

\paragraph{Single-Model Attribution with less Training Data}
Table~\ref{table:smaller_training} shows the performance of \method{} when using less training samples. Particularly, we repeat the experiments from Table~\ref{tab:attribution_small} with $1\,000$ and $5\,000$ real and generated samples. 
Unsurprisingly, we observe a performance drop but the performance remains high in most cases. 
The only exception is when training \method{} with only 1k samples from LSGAN. However, further empirical investigations reveal that we can fix it by setting the thresholding value $\tau$ to a lower value. When setting $\tau$ to a more liberal choice, for instance, the $95\%$-quantile, we can improve the performance considerably. In this case, the averaged attribution accuracy on LSGAN increases to $87.75$ and $87.34$ in CelebA and LSUN with only $1\,000$ samples, respectively.
But note that our considered perspective from the model trainer (see Section~\ref{sec:problem_setup}) implicitly implies that we do not have strict training data limitations. Since this perspective assumes to have access to $G$ we can, potentially, generate infinitely many samples from $G$. Furthermore, training a generative model typically requires great amounts of real training data, which we can reuse to train \method{}. Hence, the training data requirements in our considered setting are rather weak.

In summary, we conclude that we typically do not have strict data constraints in our investigated setting. Still, in a hypothetical limited data regime, \method{} also performs very well in most settings. In some cases (here in the case of LSGAN), one may consider adjusting the validation procedure for selecting the threshold $\tau$. This leads to more false negatives but can increase the overall attribution accuracy.

\begin{table*}
\small
\centering
\begin{tabular}{lccccrcccc}
\toprule
 & \multicolumn{4}{c}{CelebA} & & \multicolumn{4}{c}{LSUN} \\
\cmidrule{2-5} \cmidrule{7-10}
 & DCGAN & WGAN-GP & LSGAN & EBGAN && DCGAN & WGANGP & LSGAN & EBGAN \\
\midrule
\method{}-1k  	& 	96.89&	95.97	&55.77	&99.73  &&	91.97&	97.00	&82.52	&98.33  \\ 
\method{}-5k  	&99.05	&98.75	&95.24	&99.65 &&	95.61&	97.54	&94.88	&99.40 \\ 
\method{}-10k  	&99.34	&99.40	&98.94&	99.68
&&	97.75	&97.73&	98.19&99.38 \\ 
\bottomrule
\end{tabular}
\caption{Single-model attribution accuracy of \method{} using $1\,000$, $5\,000$, and $10 \, 000$ real and generated samples. The results are averaged over five runs.}
\label{table:smaller_training}
\end{table*} 

\paragraph{Single-Model Attribution on Perturbed Samples}
To illustrate the effect of the considered perturbations on the images, we present an overview in Figure~\ref{fig:perturbations}. 
We observed that we can improve the attribution performance of our proposed methods by choosing the thresholding method more liberal. Specifically, in the presence of perturbations, we can set $\operatorname{fnr}=0.05$ to find $\tau$ (compare with Section~\ref{sec:thresholding_method}). We provide the improved results in~Table~\ref{table:perturb_higher_fnr}.

\begin{figure*}[t]
\centering
\subfigure[Blur CelebA]
{\includegraphics[width=.23\linewidth]{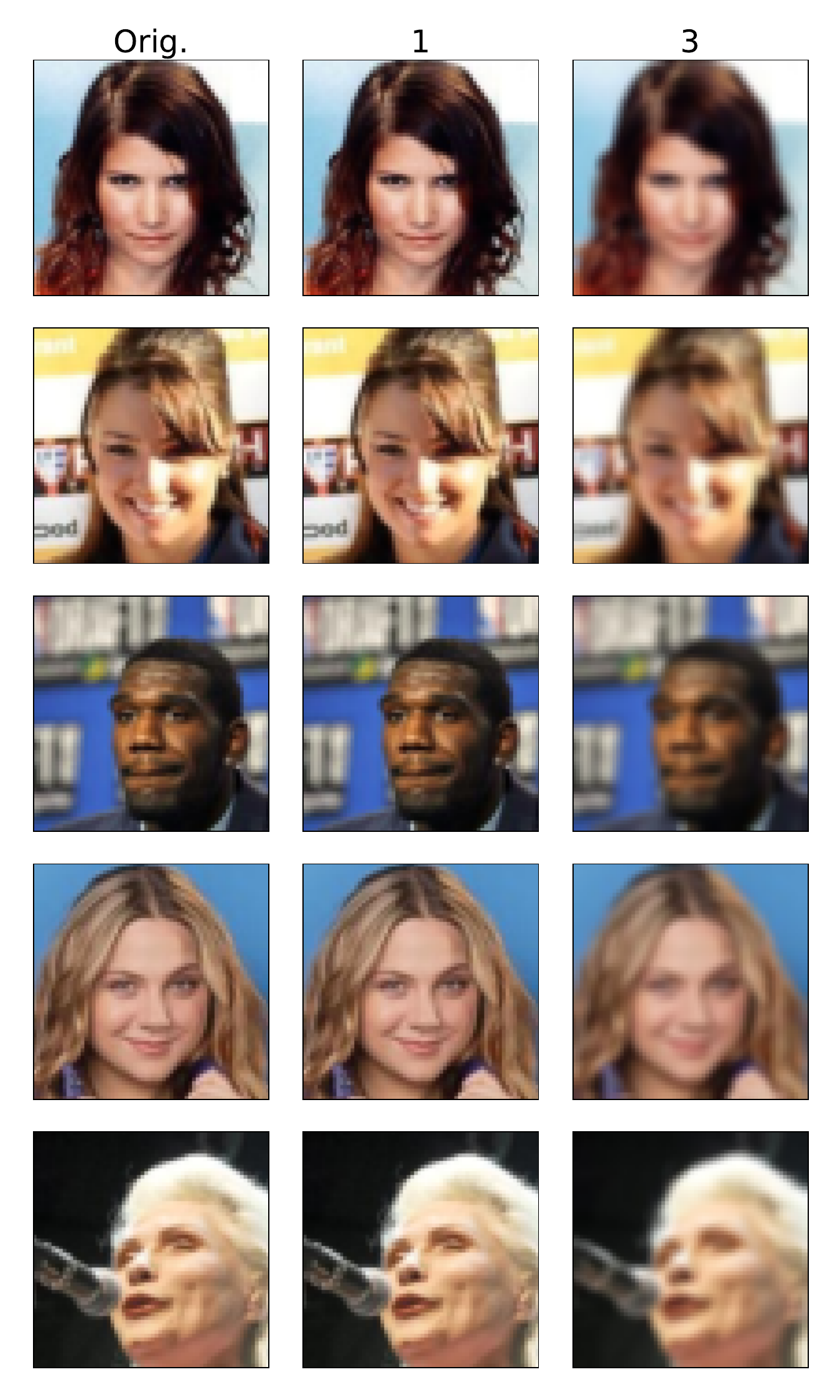}}
\subfigure[Blur LSUN]
{\includegraphics[width=.23\linewidth]{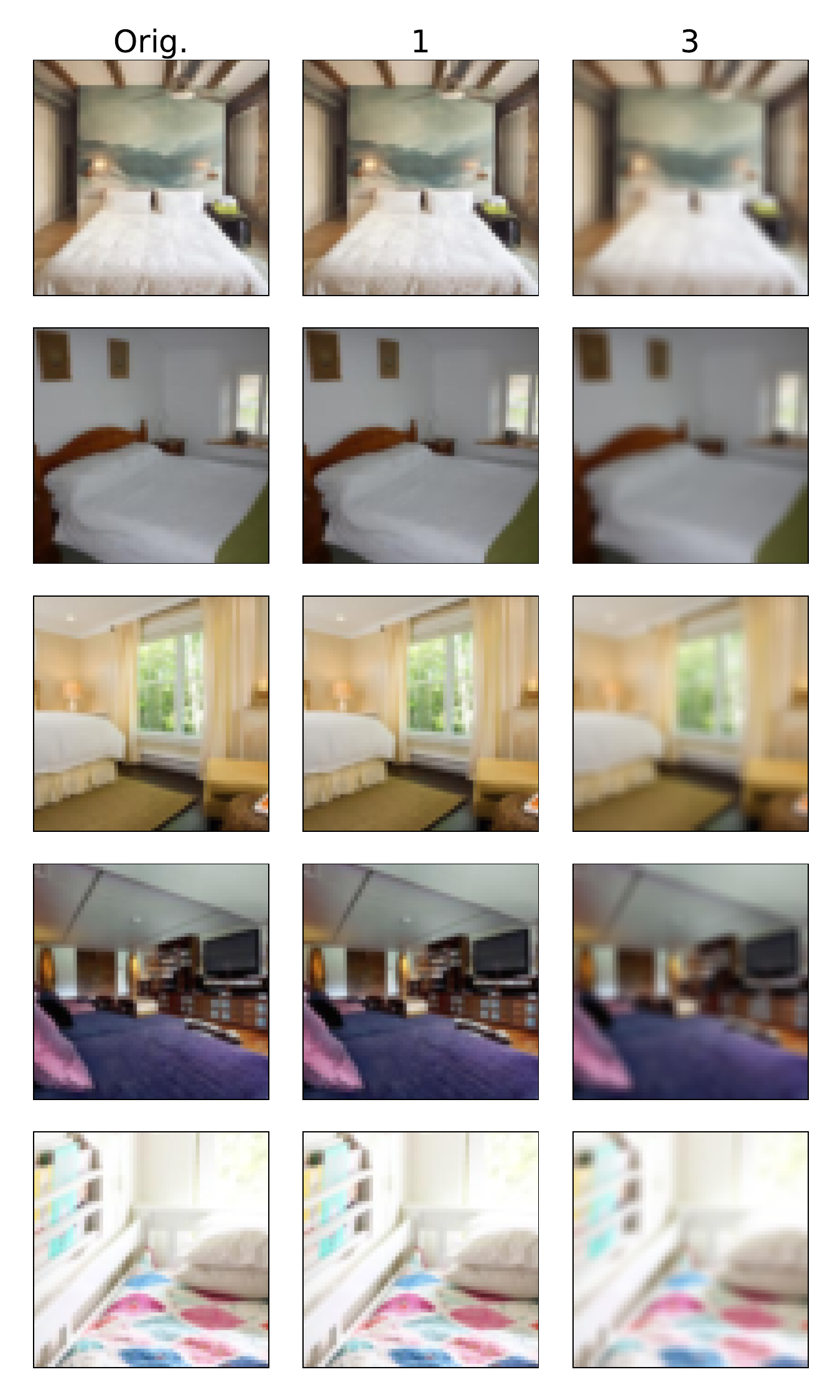}}
\subfigure[Crop CelebA]
{\includegraphics[width=.23\linewidth]{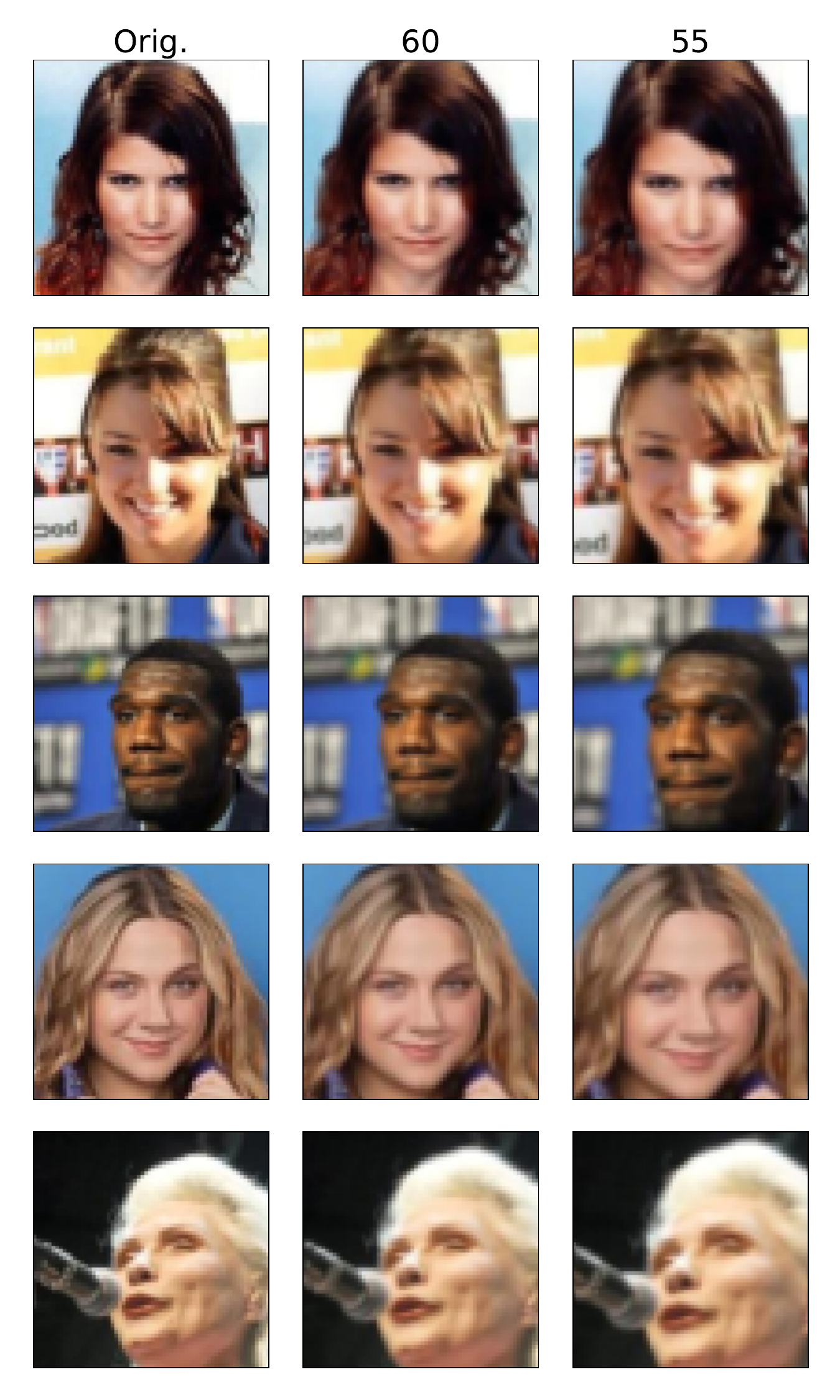}}
\subfigure[Crop LSUN]
{\includegraphics[width=.23\linewidth]{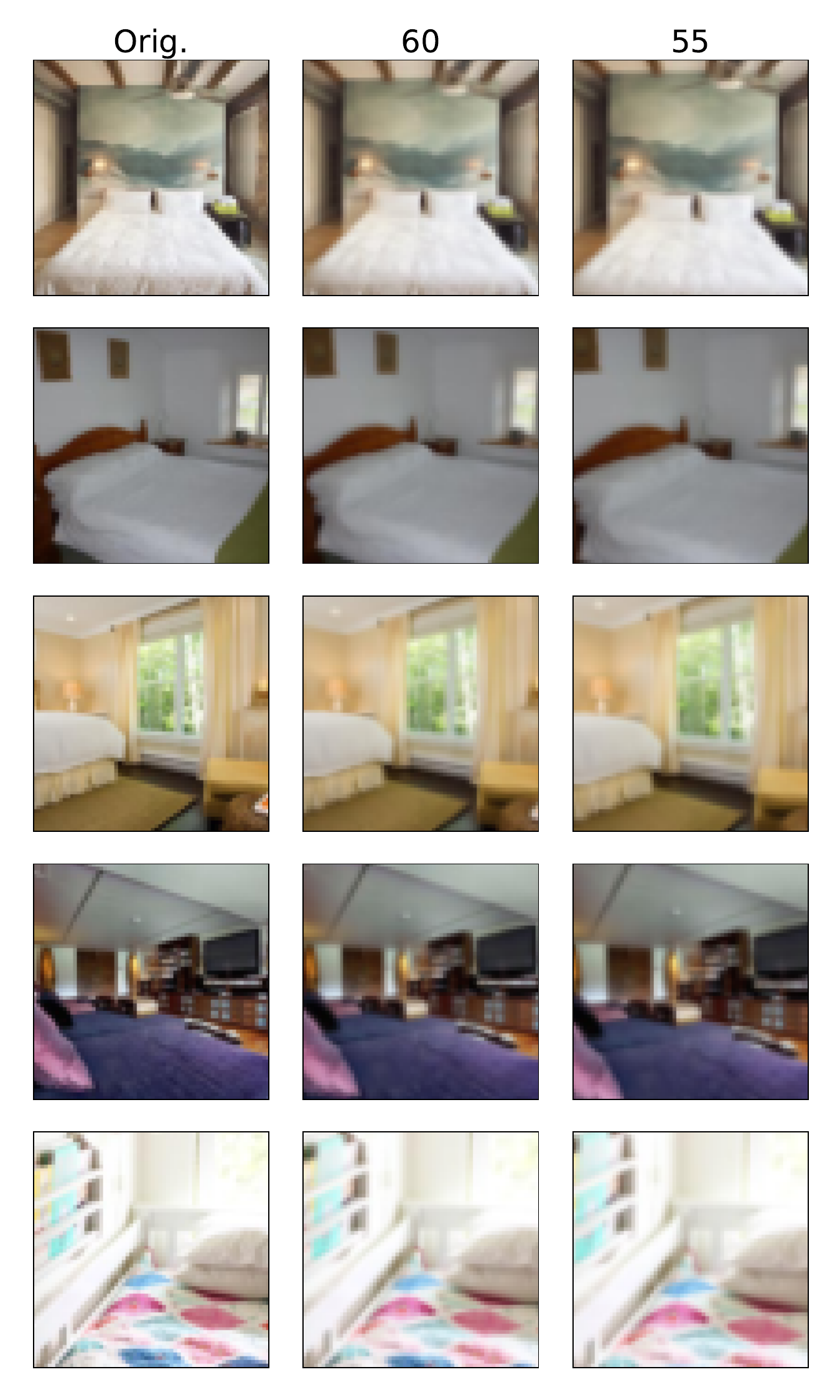}} \\
\subfigure[JPEG CelebA]
{\includegraphics[width=.23\linewidth]{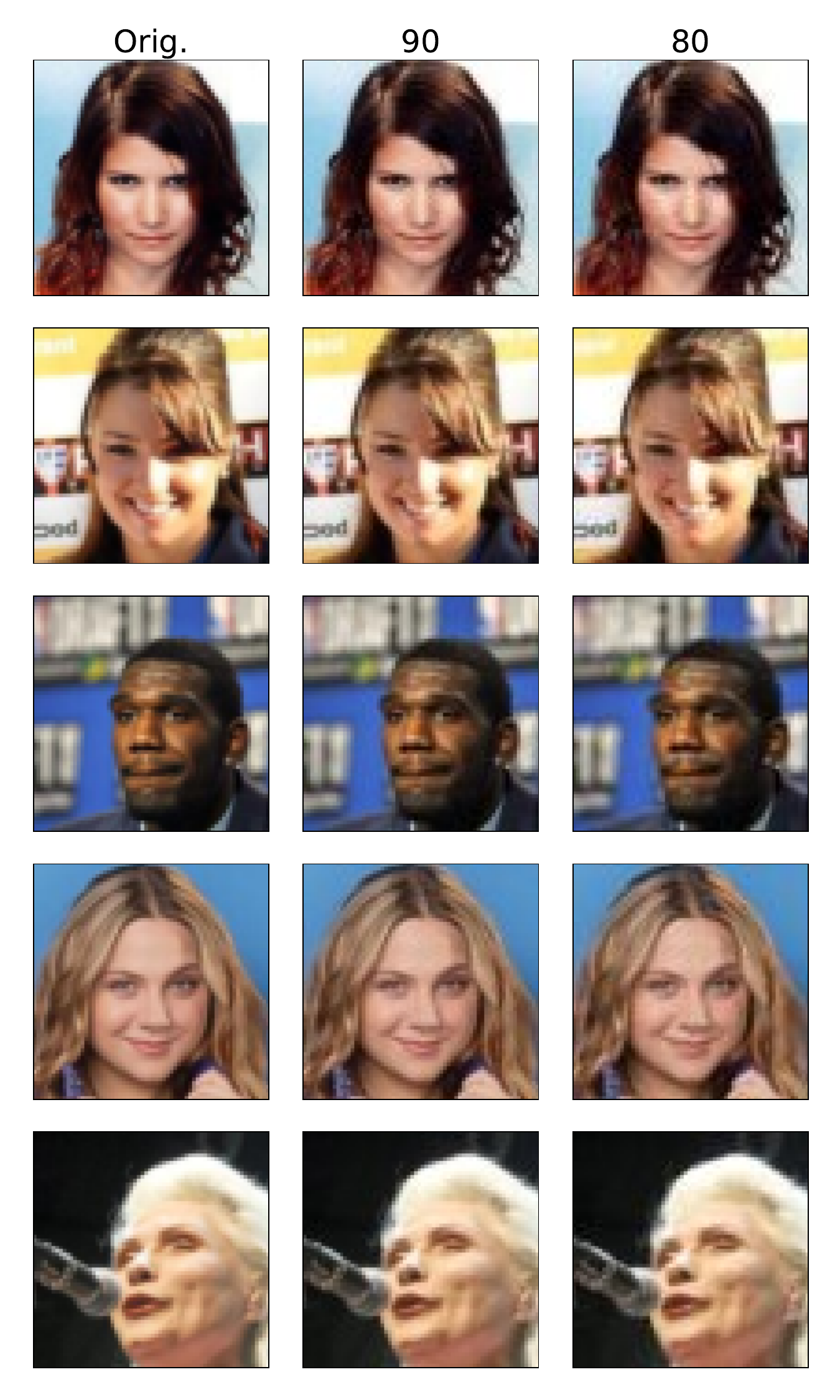}}
\subfigure[JPEG LSUN]
{\includegraphics[width=.23\linewidth]{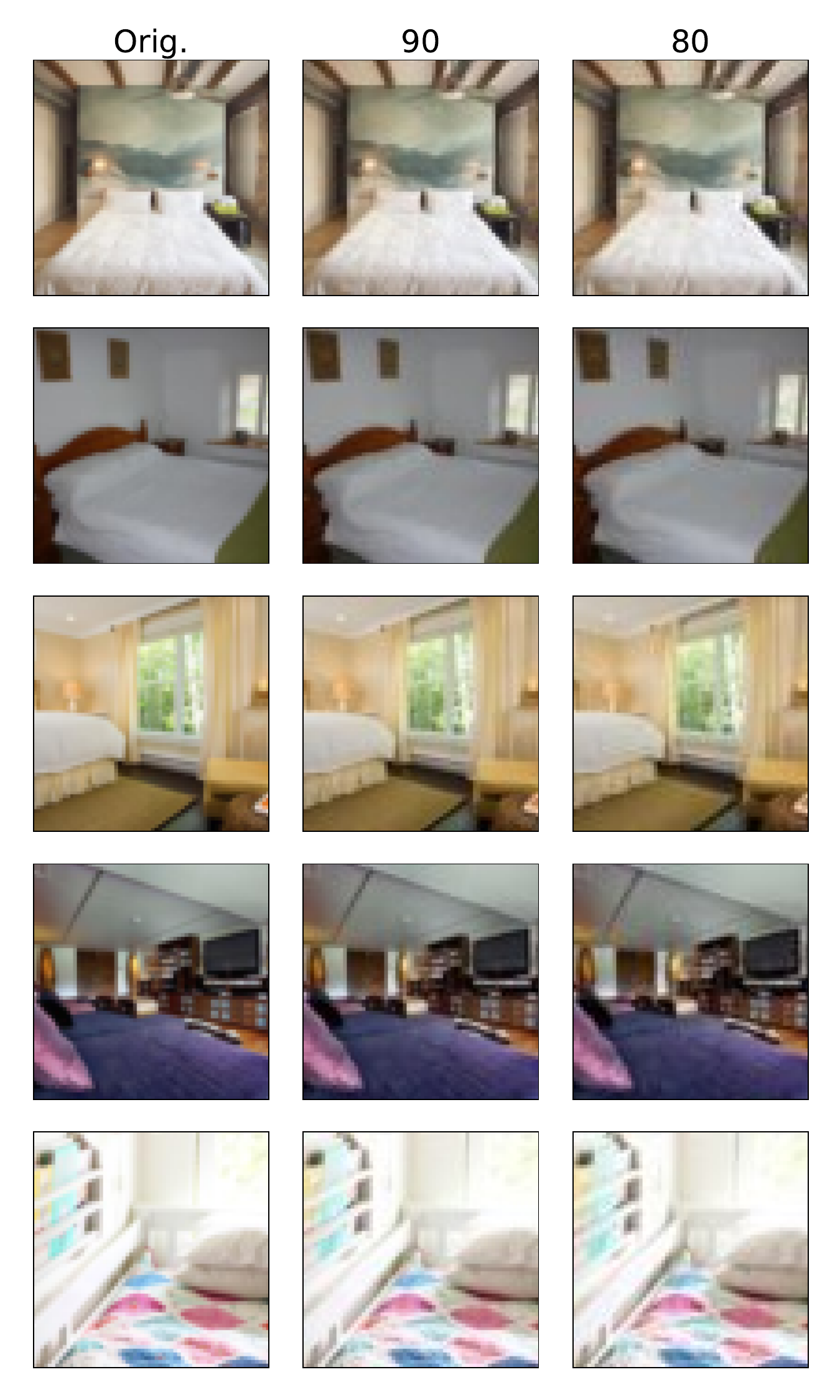}}
\subfigure[Noise CelebA]
{\includegraphics[width=.23\linewidth]{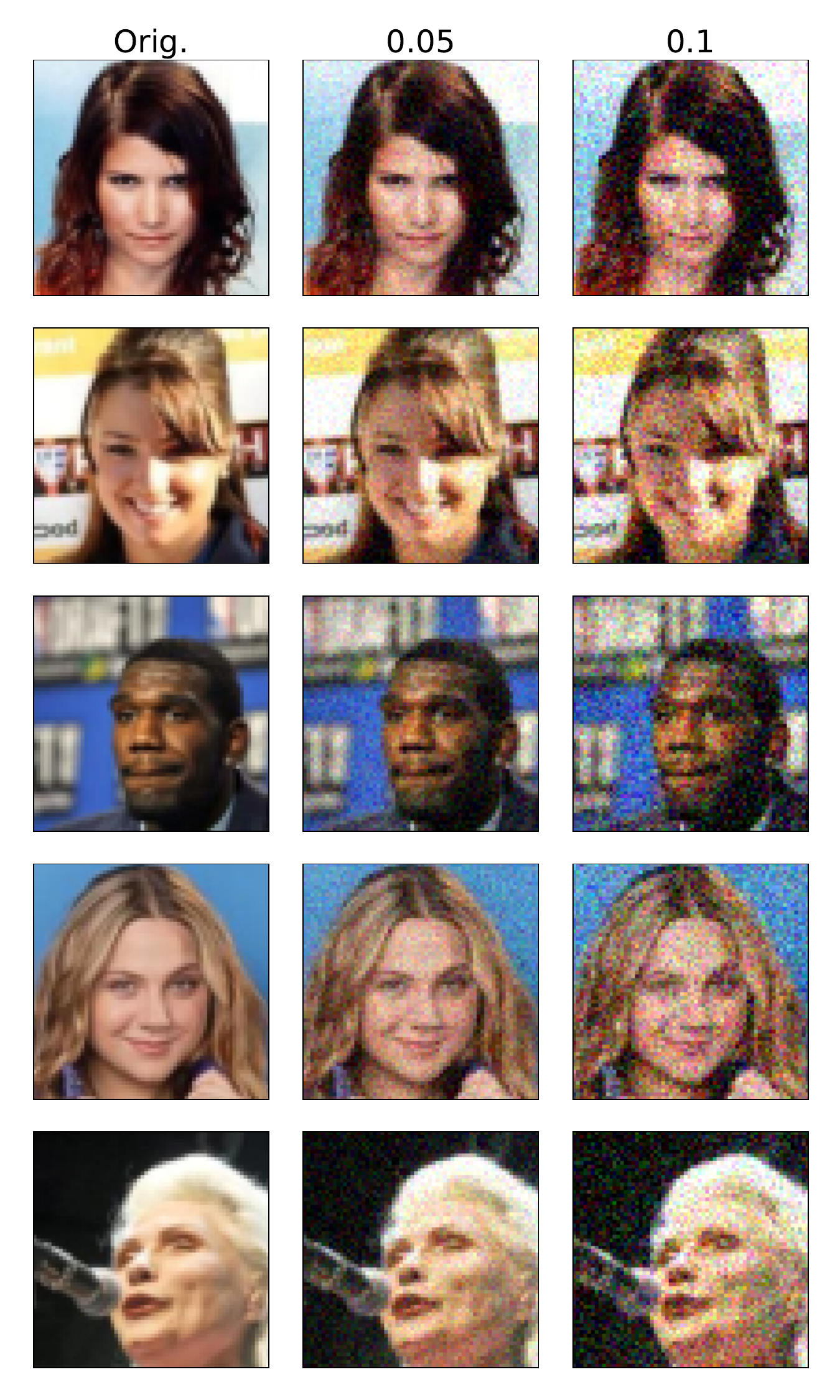}}
\subfigure[Noise LSUN]
{\includegraphics[width=.23\linewidth]{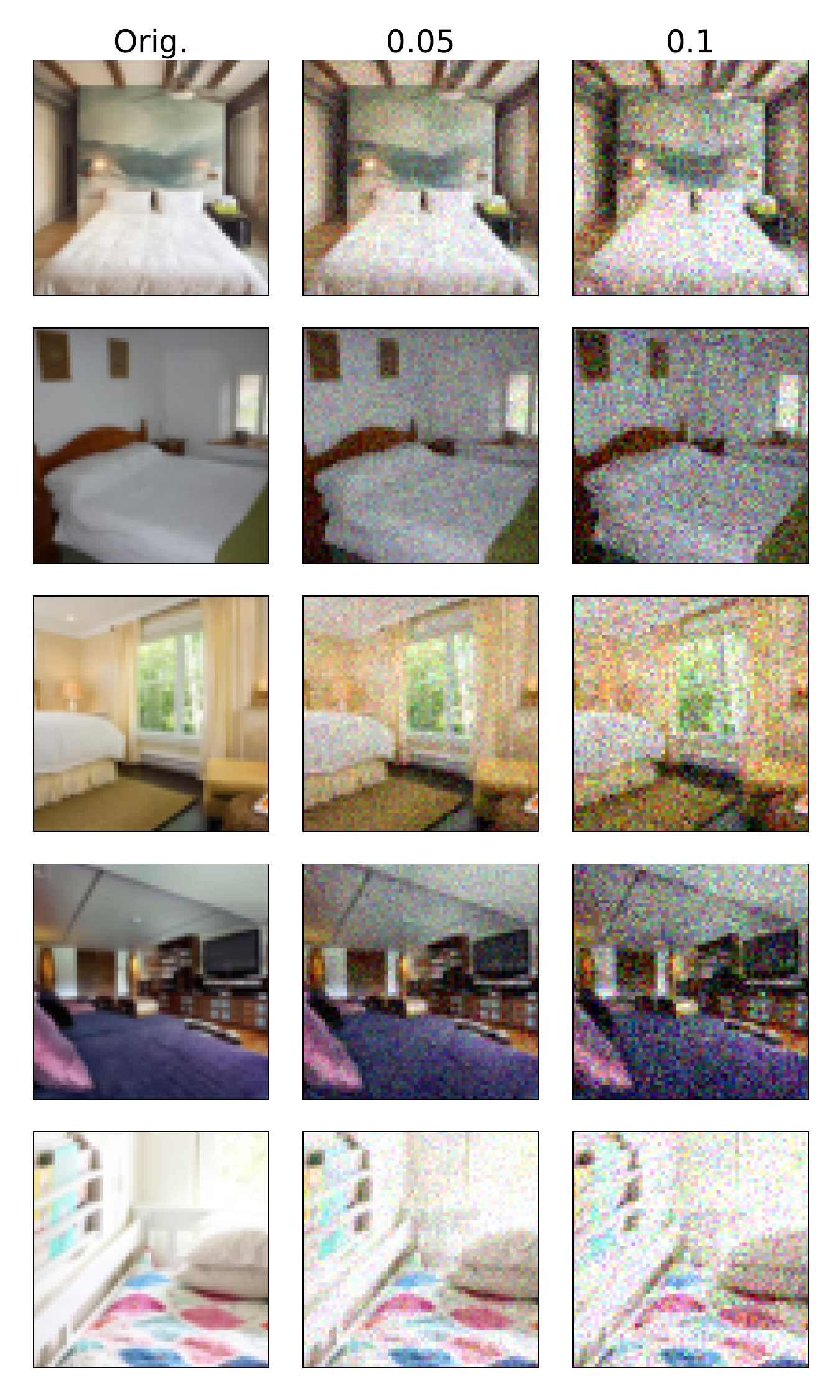}}

\caption{Visualization of different perturbations.}
\label{fig:perturbations}
\end{figure*}

\begin{table*}
    \centering
    \small 
    \begin{tabular}{@{\ }lc@{\hspace{1mm}}cc@{\hspace{1mm}}cc@{\hspace{1mm}}cc@{\hspace{1mm}}c@{\hspace{0mm}}
    cc@{\hspace{1mm}}cc@{\hspace{1mm}}cc@{\hspace{1mm}}cc@{\hspace{1mm}}c@{\ }
    }
    \toprule
    & \multicolumn{8}{c}{CelebA} && \multicolumn{8}{c}{LSUN}\\ 
    \cmidrule{2-9} \cmidrule{11-18}
    & \multicolumn{2}{c}{Blur} & \multicolumn{2}{c}{Crop} & \multicolumn{2}{c}{Noise} & \multicolumn{2}{c}{JPEG} 
    && \multicolumn{2}{c}{Blur} & \multicolumn{2}{c}{Crop} & \multicolumn{2}{c}{Noise} & \multicolumn{2}{c}{JPEG}
    \\
    & 1 & 3 & 60 & 55 & 0.05 & 0.1 & 90 & 80
    && 1 & 3 & 60 & 55 & 0.05 & 0.1 & 90 & 80\\ 
     \midrule 
     \raw 	&90.90 &	90.03 &	91.15 &	91.27 	& \textbf{87.67} 	&\textbf{83.68} 	& \textbf{89.72} &	\textbf{89.14} &&	83.95 &	83.05 &	83.59 	&82.61& 	\textbf{80.73} &	\textbf{77.78} 	&82.44& 	\textbf{81.90} \\
    \dct 	&83.35 	&80.19 &	83.34 	&83.33 	&70.86 &	64.02 &	76.49 	&74.98 &&	79.09 	&73.08 	&79.26 	&78.28 &	55.04 &	52.20 	&61.97 &	57.80 \\ 
\method{} &	\textbf{97.40} 	& \textbf{97.52} &	\textbf{97.23} 	& \textbf{97.29} &	82.15 &	73.76 &	88.18 &	84.12 	&& \textbf{96.58} 	& \textbf{96.78} 	& \textbf{96.12} &	\textbf{95.22} &	78.73 	&60.47 &	\textbf{85.50} &	81.42 \\ 
    \bottomrule
    \end{tabular}
    \caption{Single-model attribution accuracy with immunization averaged over all $G, G' \in \mathcal{G}$ over five runs using a more liberal thresholding selection ($\operatorname{fnr}=0.05$). 
    We use the same perturbations as in Table~\ref{table:perturb1}.} 
    \label{table:perturb_higher_fnr}
    \vspace{0pt}
\end{table*}

\begin{table*}
    \centering
    \small
    \begin{tabular}{@{\ }lcccccccccccc@{\ }}
        \toprule
        & \multicolumn{4}{c}{Stable Diffusion} && \multicolumn{4}{c}{Style-Based Models} && \multicolumn{2}{c}{Medical Image Models}
        \\ 
        \cmidrule{2-5} \cmidrule{7-10} \cmidrule{12-13}
         & v1-4 & v1-1 & v1-1+ & v2 & & \textit{real} & StyleGAN-XL & StyleNAT & StyleSwin && WGAN-GP & C-DCGAN \\
        \midrule
        \finger & 0.00 &  0.00 & 0.00 & 0.00 & &  0.00 &   0.00 &  0.00 &  0.00 && 0.10 &  0.10 \\
        \raw & 4.10 & 3.36 & 1.35 & 0.71 & & 4.10 & 3.36 & 2.89 & 1.75 && 0.12 & 19.76 \\ 
        \dct & 1.32 & 1.24 & 0.29 & 0.29  & & 0.07 & 0.07 & 0.16 & 0.08 && 0.18 & 17.97 \\ 
        \method & 1.35 & 1.04 & 1.19 & 1.56 && - & - & - & -  && 0.11 & 0.19 \\ 
        \bottomrule
    \end{tabular}
    \caption{Standard deviation corresponding to the results from Table~\ref{tab:attribution_other_image}. For illustration purposes, we scale all values by $10^2$.}
\label{tab:attribution_bigmodels_stds}
    \vspace{0pt}
\end{table*}

\paragraph{Generative Models for Tabular Data}
In addition to the models trained on Redwine, we repeat the experiments for models trained on the Whitewine dataset~\cite{wine}. The results are reported in Table~\ref{tab:whitewine}. Here, we can see a different behavior: \inv{} outperforms \raw{} and \method{}. 
Moreover, we observe in our experiments that \method{} tends to perform better when $\lambda$ approaches $0$. This behavior suggests that it might be less meaningful to regularize towards the mean activation in the Whitewine experiments. 
Investigating other types of regularization in~\eqref{eq:opti_problem}, however, is out of the scope of this work.

We report the corresponding standard deviations in Table~\ref{tab:attribution_tabular_stds}.

\begin{table}[t]
    \centering
    \small
    \begin{tabular}{@{\ }lccc@{\ }}
        \toprule
         & TVAE & CTGAN & Cop.GAN\\
        \midrule
        \inv & \textbf{95.69} & \textbf{96.38} & \textbf{96.21} \\ 
        \raw & 84.22 & 80.85 & 81.44 \\
        \method & 84.15 & 80.58 & 80.84 \\
        \bottomrule
    \end{tabular}
    \caption{Single-model attribution accuracy of KL-WGAN trained on Whitewine against $G'$ as indicated by the column name averaged over five runs.} 
    \label{tab:whitewine}
\end{table}

\begin{table*}[t]
    \centering
    \begin{tabular}{@{\ }lccccccc@{\ }}
        \toprule
        & \multicolumn{3}{c}{Redwine} && \multicolumn{3}{c}{Whitewine} \\ 
        \cline{2-4}
        \cline{6-8} 
         & TVAE & CTGAN & Cop.GAN &\phantom{a}& TVAE & CTGAN &  Cop.GAN  \\
        \midrule
        \inv & 16.51 &  12.50 & 14.61 &\phantom{a}& 18.87 & 16.23 & 13.53 \\ 
        \raw & 14.89 &  8.37 & 21.20 &\phantom{a}& 13.29 & 15.44 & 13.17 \\
            \method & 12.50 &  5.57 & 18.37 &\phantom{a}& 12.82 & 16.48 & 11.17 \\
        \bottomrule
    \end{tabular}
    \caption{
    Standard deviation corresponding to the results from Table~\ref{tab:redwine} and Table~\ref{tab:whitewine}. For illustration purposes, we scale all values by $10^3$.
    } \label{tab:attribution_tabular_stds}
\end{table*}

\end{document}